\title{Online Learning via the Differential Privacy Lens}
\newcommand{\printfnsymbol}[1]{%
  \textsuperscript{\@fnsymbol{#1}}%
}
\author{%
  Jacob Abernethy\thanks{Author order is alphabetical denoting equal contributions.}\\
  College of Computing \\
  Georgia Institute\ of Technology \\
  \texttt{prof@gatech.edu} \\
\And
Young Hun Jung\printfnsymbol{1}\\
Department of Statistics \\
University of Michigan \\
\texttt{yhjung@umich.edu} \\
\AND
Chansoo Lee\printfnsymbol{1} \\
Google Brain \\
\texttt{chansoo@google.com} \\
\And
Audra McMillan\printfnsymbol{1} \\
Simons Inst.\ for the Theory of Computing\\
Department of Computer Science\\ Boston University\\
Khoury College of Computer Sciences\\ Northeastern University\\
\texttt{audramarymcmillan@gmail.com} \\
\And
Ambuj Tewari\printfnsymbol{1} \\
Department of Statistics \\
Department of EECS \\
University of Michigan \\
\texttt{tewaria@umich.edu} \\
}
\definecolor{darkgreen}{rgb}{0,0.6,0}
\newcommand{\kibitz}[2]{\ifnum\comments=1{\color{#1}{#2}}\fi}
\newcommand{\e}{\mathbf{e}}
\newcommand{\cX}{\mathcal{X}}
\newcommand{\cB}{\mathcal{B}} 
\newcommand{\cY}{\mathcal{Y}}
\newcommand{\cA}{\mathcal{A}} 
\newcommand{\cD}{\mathcal{D}}
\newcommand{\tB}{B}
\newcommand{\tb}{b}
\newcommand{\Regret}[2][]{\ifthenelse{\equal{#1}{}}{\mathsf{Regret}_{#2}}{\mathsf{Regret(#1)}_{#2}}}
\newcommand{\eps}[1][]{\ifthenelse{\equal{#1}{}}{\epsilon}{\epsilon_{#1}}}
\newcommand{\del}[1][]{\ifthenelse{\equal{#1}{}}{\delta}{\delta_{#1}}}
\newcommand{\lh}{\hat{\ell}}
\newcommand{\Lh}{\hat{L}}
\DeclareMathOperator*{\argmin}{arg\,min}
\newcommand{\PP}{\mathbb{P}}
\newcommand{\EE}{\mathbb{E}}
\newcommand{\RR}{\mathbb{R}}
\newcommand{\f}{\Phi}
\newcommand{\tf}{{\tilde{\f}}}
\newcommand{\tfD}{{\tilde{\f}}_{\cD}}
\newcommand{\D}{\Delta}
\newcommand{\sumtt}{\sum_{t=1}^{T}}
\newcommand\simplex[1]{\Delta^{{#1}-1}}
\newtheorem{theorem}{Theorem}[section]
\newtheorem{lemma}[theorem]{Lemma}
\newtheorem{proposition}[theorem]{Proposition}
\newtheorem{corollary}[theorem]{Corollary}
\newtheorem{definition}[theorem]{Definition}
\newtheorem*{remark}{Remark}
\begin{document}

\maketitle

\begin{abstract}
In this paper, we use differential privacy as a lens to examine online learning in both full and partial information settings. The differential privacy framework is, at heart, less about privacy and more about algorithmic stability, and thus has found application in domains well beyond those where information security is central. Here we develop an algorithmic property called {\em one-step differential stability} which facilitates a more refined regret analysis for online learning methods. We show that tools from the differential privacy literature can yield regret bounds for many interesting online learning problems including online convex optimization and online linear optimization. Our stability notion is particularly well-suited for deriving first-order regret bounds for follow-the-perturbed-leader algorithms, something that all previous analyses have struggled to achieve. We also generalize the standard max-divergence to obtain a broader class called {\em Tsallis max-divergences}. These define stronger notions of stability that are useful in deriving bounds in partial information settings such as multi-armed bandits and bandits with experts.
\end{abstract}

\section{Introduction}
\label{sec:intro}

Stability of output in presence of small changes to input is a desirable feature of methods in statistics and machine learning \citep{bousquet2002stability,hardt2016train,poggio2004general,yu2013stability}.
Another area of research for which stability is a core component is \emph{differential privacy} (DP). As \citet{dwork2014algorithmic} observed, ``differential privacy is enabled by stability and ensures stability.'' They argue that the ``differential privacy lens'' offers a fresh perspective to examine areas other than privacy. For example, the DP lens has been used successfully in designing coalition-proof mechanisms \citep{mcsherry2007mechanism} and preventing false discovery in statistical analysis \citep{Bassily:2016,cummings2016adaptive,dwork2015preserving,nissim2015generalization}.

In this paper, we use the DP lens to design and analyze randomized online learning algorithms in a variety of canonical online learning problems. The DP lens allows us to analyze a broad class of online learning problems, spanning both full information (online convex optimization (OCO), online linear optimization (OLO), experts problem) and partial information settings (multi-armed bandits, bandits with experts) using a unified framework. We are able to analyze follow-the-perturbed-leader (FTPL) as well as follow-the-regularized leader (FTRL) based algorithms resulting in both {\em zero-order} and {\em first-order regret bounds}; see Section~\ref{sec:prelim} for definitions of these bounds. However, our techniques are especially well-suited to proving first-order bounds for perturbation based methods. Historically, the understanding of the regularization based algorithms has been more advanced thanks to connections with ideas from optimization such as duality and Bregman divergences. There has been recent work \citep{abernethy2014online} on developing a general framework to analyze FTPL algorithms, but it only yields zero-order bounds. A general framework that can yield first-order bounds for FTPL algorithms has been lacking so far, but we believe that the framework outlined in this paper may fill this gap in the literature. Our rich set of examples suggests that our framework will be useful in translating results from the DP literature to study a much larger variety of online learning problems in the future. This means that we immediately benefit from research advances in the DP community.

We emphasize that our aim is \emph{not} to design low-regret algorithms that satisfy the privacy condition--there is already substantial existing work along these lines \citep{agarwal2017price,jain2012differentially,thakurta2013nearly,tossou2017achieving}. Our goal is instead to show that, in and of itself, a DP-inspired stability-based methodology is quite well-suited to designing online learning algorithms with excellent guarantees. In fact, there are theoretical reasons to believe this should be possible. \citet{alon2018private} have shown that if a class of functions is privately learnable, then it has finite Littlestone dimension (a parameter that characterizes learnability for online binary classification) via non-constructive arguments. Our results can be interpreted as proving analogous claims in a constructive fashion albeit for different, more tractable online learning problems.

In many of our problem settings, we are able to show new algorithms that achieve optimal or near-optimal regret. Although many of these regret bounds have already appeared in the literature, we note that they were previously possible only via specialized algorithms and analyses. In some cases (such as OLO), the regret bound itself is new. Our main technical contributions are as follows:
\begin{itemize}[noitemsep,topsep=0pt,parsep=0pt,partopsep=0pt,leftmargin=*]
\item
We define {\em one-step differential stability} (Definitions~\ref{def:OSDS} and~\ref{def:OSDSnorm}) and derive a key lemma showing how it can yield first-order regret bounds (Lemma~\ref{lemma:regret_lstar}).
\item
{\em New algorithms with first-order bounds for both OCO} (Theorem~\ref{thm:oco}) {\em and OLO problems} (Corollary~\ref{cor:olo}) based on the objective perturbation method from the DP literature~\citep{kifer2012private}. The OLO first-order bound is the first of its kind to the best of our knowledge.
\item
We introduce a novel family of {\em Tsallis $\gamma$-max-divergences} (see \eqref{eq:tsallismaxdiv}) as a way to ensure tighter stability as compared to the standard max-divergence. Having tighter control on stability is crucial in partial information settings where loss estimates can take large values.
\item
We provide {\em optimal first-order bounds for the experts problem via new FTPL algorithms} using a variety of perturbations (Theorem~\ref{thm:analysis_experts}).
\item
Our {\em unified analysis of multi-armed bandit algorithms} (Theorem~\ref{thm:gbpa_bandits}) not only unifies the treatment of a large number of perturbations and regularizers that have been used in the past but also reveals the exact type of differential stability induced by them.
\item
{\em New perturbation-based algorithms for the bandits with experts problem} that achieve the same zero-order and first-order bounds (Theorem~\ref{thm:gbpa_bwe}) as the celebrated EXP4 algorithm \cite{auer2002nonstochastic}.
\end{itemize}


\section{Preliminaries}
\label{sec:prelim}

The $\ell_\infty$, $\ell_2$, and $\ell_1$ norms are denoted by $\|\cdot\|_{\infty}, \|\cdot\|_2$, and $\|\cdot\|_1$ respectively. The vector $\e_i$ denotes the $i$th standard basis vector. The norm of a set $\cX$ is defined as $\|\cX\| = \sup_{x \in \cX} \|x\|$.  A sequence $(a_1, \ldots, a_t)$ is abbreviated $a_{1:t}$, and a set $\{1, \ldots, N\}$ is abbreviated $[N]$. For a symmetric matrix $S$, $\lambda_{\max}(S)$ denotes its largest eigenvalue. The probability simplex in $\RR^N$ is denoted by $\simplex{N}$. Full versions of omitted/sketched proofs can be found in the appendix.

\subsection{Online learning}

We adopt the common perspective of viewing online learning as a repeated game between a learner and an adversary. We consider an {\em oblivious adversary} that chooses a sequence of loss functions $\ell_t \in \cY$ before the game begins.
 At every round $t$, the learner chooses a move $x_t \in \cX$ and suffers loss $\ell_t(x_t)$. 
The action spaces $\cX$ and $\cY$ will characterize the online learning problem. 
For example, in multi-armed bandits, $\cX = [N]$ for some $N$ and $\cY = [0, 1]^{N}$.
 Note that the learner is allowed to access a private source of randomness in selecting $x_t$.
The learner's goal is to minimize the \emph{expected regret} after $T$ rounds:
\[\textstyle
\EE[\Regret{T}] = \EE\left[\sum_{t=1}^{T} \ell_t(x_t) \right]- \min_{x \in \cX} \sum_{t=1}^{T} \ell_t(x),
\]
where the expectations are over all of the learner's randomness, and we recall that here the $\ell_t$'s are non-random. The minimax regret is given by $\min_{\cA} \max_{\ell_{1:T}} \EE[\Regret{T}]$ where $\cA$ ranges over all learning algorithms. If an algorithm achieves expected regret within a constant factor of the minimax regret, we call it {\em minimax optimal} (or simply {\em optimal}). If the factor involved is not constant but logarithmic in $T$ and other relevant problem parameters, we call the algorithm {\em minimax near-optimal} (or simply {\em near-optimal}).

In the \emph{loss/gain setting}, losses can be positive or negative. In the \emph{loss-only setting}, losses are always positive: $\min_{x \in \cX} \ell_t(x) \geq 0$ for all $t$. {\em Zero-order regret bounds} involve $T$, the total duration of the game. In the loss-only setting, a natural notion of the hardness of the adversary sequence is the cumulative loss of the best action in hindsight, $L_T^* = \min_{x \in \cX} \sum_{t=1}^T \ell_t(x)$. Note that $L_T^*$ is uniquely defined even though the best action in hindsight (denoted by $x_T^*$) may not be unique. Bounds that depend on $L_T^*$ instead of $T$ adapt to the hardness of the actual losses encountered and are called {\em first-order regret bounds}. We will ignore factors logarithmic in $T$ when calling a bound first-order.

There are some special cases of online learning that arise frequently enough to have received names. In {\em online convex (resp. linear) optimization}, the functions $\ell_t$ are convex (resp. linear) and the learner's action set $\cX$ is a subset of some Euclidean space. In the linear setting, we identify a linear function with its vector representation and write $\ell_t(x) = \langle \ell_t, x \rangle$. In the {\em experts problem}, we have $\cX = [N]$ and we use $i_t$ instead of $x_t$ to denote the learner's moves. Also we write $\ell_t(i) = \ell_{t,i}$ for $i \in [N]$.

We consider both full and partial information settings. In the {\em full information setting}, the learner observes the loss function $\ell_t$ at the end of each round. In the {\em partial information setting}, learners receive less feedback. A common partial information feedback is {\em bandit feedback}, i.e., the learner only observes its own loss $\ell_t(x_t)$. Due to less amount of information available to the learner, deriving regret bounds, especially first-order bounds, is more challenging in partial information settings.

Note that, in settings where the losses are linear, we will often use $L_t = \sum_{s=1}^t \ell_s$ to denote the cumulative loss vector. In these settings, we caution the reader to distinguish between $L_T$, the final cumulative loss vector and the scalar quantity $L_T^*$.

\subsection{Stability notions motivated by differential privacy}

There is a substantial literature on stability-based analysis of statistical learning algorithms. However, there is less work on identifying stability conditions that lead to low regret online algorithms. A few papers that attempt to connect stability and online learning are interestingly all unpublished and only available as preprints \cite{poggio2011online,ross2011stability,saha2012interplay}. To the best of our knowledge no existing work provides a stability condition which has an explicit connection to differential privacy and which is strong enough to use in both full information and partial information settings. 

Differential privacy (DP) was introduced to study data analysis mechanism that do not reveal too much information about any single instance in a database. In this paper, we will use DP primarily as a stability notion~\citep[Sec. 13.2]{dwork2014algorithmic}. DP uses the following divergence to quantify stability.
Let $P, Q$ be distributions over some probability space.
The \emph{$\delta$-approximate max-divergence} between $P$ and $Q$ is
\begin{equation}
\label{eq:max_divergence}
D_{\infty}^{\delta}(P,Q) = \sup_{P(B)  > \delta} \log \frac{P(B) - \delta}{Q(B)} ,
\end{equation}
where the supremum is taken over measurable sets $B$.
When $\delta = 0$, we drop the superscript $\delta$. If $Y$ and $Z$ are random variables, then $D_\infty^{\delta}(Y,Z)$ is defined to be $D_\infty^{\delta}(P_Y,P_Z)$ where $P_Y$ and $P_Z$ are the distributions of $Y$ and $Z$.
We want to point out that the max-divergence is \emph{not} a metric, because it is asymmetric and does not satisfy the triangle inequality. 

A randomized online learning algorithm maps the loss sequence $\ell_{1:t-1} \in \cY^{t-1}$ to a distribution over $\cX$. We now define a stability notion for online learning algorithms that quantifies how much does the distribution of the algorithm change when a new loss function is seen.

\begin{definition}[One-step differential stability w.r.t. a divergence]
\label{def:OSDS}
An online learning algorithm $\cA$ is one-step differentially stable w.r.t. a divergence $D$ (abbreviated DiffStable($D$)) at level $\epsilon$ iff for any $t$ and any $\ell_{1:t} \in \cY^{t}$, we have $D(\cA(\ell_{1:t-1}), \cA(\ell_{1:t})) \le \epsilon$.
\end{definition}
\begin{remark}
The classical definition of DP \citep{agarwal2017price} says a randomized algorithm $\cA$ is $(\epsilon, \delta)$-DP if $D^{\delta}_{\infty}(\cA(\ell_{1:t}), \cA(\ell^{\prime}_{1:t}) \leq \epsilon$ whenever $\ell_{1:t}$ and $\ell^{\prime}_{1:t}$ differ by at most one item. 
We can consider $\ell_{1:t-1}$ as $\ell^{\prime}_{1:t}$ by adding a uninformative loss (e.g., zero loss for any action) in the last item. 
\end{remark}

In the case when $\ell_t$ is a vector, it will be useful to define a similar notion where the stability level depends on the norm of the last loss vector.
\begin{definition}[One-step differential stability w.r.t. a norm and a divergence]
\label{def:OSDSnorm}
An online learning algorithm $\cA$ is one-step differentially stable w.r.t. a norm $\| \cdot \|$ and a divergence $D$ (abbreviated DiffStable($D$,$\|\cdot\|$)) at level $\epsilon$  iff for any $t$ and any $\ell_{1:t} \in \cY^{t}$, we have $D(\cA(\ell_{1:t-1}), \cA(\ell_{1:t})) \le \epsilon\| \ell_t \|$.
\end{definition}
As we will discuss later, in the partial information setting, the estimated loss vectors can have very large norms. In such cases, it will be helpful to consider divergences that give a tighter control compared to the max-divergence. Define a new divergence (we call it \textit{Tsallis $\gamma$-max-divergence})
\begin{equation}
\label{eq:tsallismaxdiv}
D_{\infty, \gamma}(P, Q) = \sup_{B} \log_\gamma(P(B)) - \log_\gamma(Q(B)) ,
\end{equation}
where the generalized logarithm $\log_\gamma$, which we call Tsallis $\gamma$-logarithm\footnote{This quantity is often called the Tsallis $q$-logarithm, e.g., see \citep[Chap. 4]{amari2016information}} is defined for $x\ge 0$ as
\[
\log_{\gamma}(x) = \begin{cases}
\log(x) &\text{if } \gamma = 1 \\
\frac{x^{1-\gamma}-1}{1-\gamma} &\text{if } \gamma \neq 1
\end{cases} .
\]

For given $P$ and $Q$, $D_{\infty,\gamma}(P, Q)$ is a non-decreasing function of $\gamma$ for $\gamma \in [1,2]$ (see Appendix~\ref{app:prelim}). Therefore, $\gamma > 1$ gives notions {\em stronger} than standard differential privacy. We will only consider the case $\gamma \in [1,2]$ in this paper. For the full information setting $\gamma=1$ (i.e., the standard max divergence) suffices. Higher values of $\gamma$ are used only in the partial information settings. While our work shows the importance of the Tsallis max-divergences in the analysis of online learning algorithms, whether they lead to interesting notions of privacy is less clear. Along these lines, note that the definition of these divergences ensures that they enjoy {\em post-processing inequality} under deterministic functions just like the standard max-divergence (see Appendix~\ref{app:prelim}). Our generalization of the max-divergence does not rest on the use of an approximation parameter $\delta$; hence we will either use $D^\delta_\infty$ or $D_{\infty, \gamma}$, but never $D^\delta_{\infty,\gamma}$. Note that we often omit $\delta$ and $\gamma$ in cases where the former is 0 and the latter is 1.

\section{Full information setting}
\label{sec:full_info}

In this section, we state a key lemma connecting the differential stability to first-order bounds. The lemma is then applied to obtain first-order bounds for OCO and OLO. In Section~\ref{sec:experts}, we consider
the Gradient Based Prediction Algorithm (GBPA) for the experts problem. There we show how the Tsallis max-divergence arises in GBPA analysis: a {\em differentially consistent} potential leads to a one-step differentially stable algorithm w.r.t. the Tsallis max-divergence (Proposition~\ref{prop:dc_to_dp}). Differential consistency was introduced as a smoothness notion for GBPA potentials by \citet{abernethy2015fighting}. Skipped proofs appear in Appendix \ref{app:full_info}. 

\subsection{Key lemma}

The following lemma is a key tool to derive first-order bounds. There are two reasons why this simple lemma is so powerful. First, it makes the substantial body of algorithmic work in DP available for the purpose of deriving regret bounds. The parameters $\epsilon, \delta$ below then come directly from whichever algorithm from the DP literature we decide to use. Second, DP algorithms often add perturbations to achieve privacy. In that case, the fictitious algorithm $\cA^+$ becomes the so-called ``be-the-perturbed-leader'' (BTPL) algorithm \cite{kalai2005efficient} whose regret is usually independent of $T$ (but does scale with $\epsilon, \delta$). One can generally set $\delta$ to be very small, such as $O(1/(BT))$, without significantly sacrificing the stability level $\epsilon$.

In the following lemma we consider taking an algorithm $\cA$ and modifying it into a \emph{fictitious} algorithm $\cA^+$. This new algorithm has the benefit of one-step lookahead: at time $t$ $\cA^+$ plays the distribution $\cA(\ell_{1:t})$, whereas $\cA$ would play $\cA(\ell_{1:t-1})$. It is convenient to consider the regret of $\cA^+$ for the purpose of analysis.


\begin{lemma}
\label{lemma:regret_lstar}
	Consider the loss-only setting with loss functions bounded by $B$. Let $\cA$ be DiffStable($D_\infty^\delta$) at level $\epsilon \leq 1$. Then we have
	\begin{equation*}
	\textstyle
	\EE[\Regret[\cA]{T}] \leq 2\epsilon L_T^* + 3\EE[\Regret[\cA^+]{T}] + \delta B T.
	\end{equation*}
\end{lemma}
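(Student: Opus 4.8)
The plan is to reduce the comparison of the two algorithms' total regrets to a round-by-round comparison of their expected per-step losses, and then to read that comparison off directly from the stability hypothesis. Since the adversary is oblivious, the losses $\ell_{1:T}$ are fixed, so at round $t$ the law of $\cA$'s move is the (deterministic) distribution $P_t := \cA(\ell_{1:t-1})$ while the law of $\cA^+$'s move is $Q_t := \cA(\ell_{1:t})$. The hypothesis DiffStable($D_{\infty}^{\delta}$) at level $\epsilon$ is exactly the statement $D_{\infty}^{\delta}(P_t, Q_t) \le \epsilon$ for every $t$. Thus it suffices to control, for each $t$, the single-round expected loss $\EE_{x \sim P_t}[\ell_t(x)]$ in terms of $\EE_{x \sim Q_t}[\ell_t(x)]$.

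The technical heart is to convert the set-wise divergence bound into an expectation bound. Unwinding the definition in \eqref{eq:max_divergence}, the inequality $D_{\infty}^{\delta}(P_t, Q_t) \le \epsilon$ means $P_t(B) \le e^{\epsilon} Q_t(B) + \delta$ for every measurable set $B$ (sets with $P_t(B) \le \delta$ being handled trivially). I would apply this to the superlevel sets $B_s := \{x : \ell_t(x) > s\}$ and then integrate using the layer-cake identity $\EE_{x \sim P}[\ell_t(x)] = \int_0^{B} P(B_s)\,ds$, which is available precisely because we are in the loss-only setting ($\ell_t \ge 0$) with losses bounded by $B$. This yields the per-round inequality
\[
\EE_{x \sim P_t}[\ell_t(x)] \;\le\; e^{\epsilon}\,\EE_{x \sim Q_t}[\ell_t(x)] + \delta B,
\]
where the additive $\delta B$ arises from integrating the constant $\delta$ over $s \in [0,B]$.

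Summing over $t = 1, \ldots, T$ gives $\sum_t \EE_{P_t}[\ell_t] \le e^{\epsilon} \sum_t \EE_{Q_t}[\ell_t] + \delta B T$. Subtracting $L_T^*$ and using $\sum_t \EE_{Q_t}[\ell_t] = \EE[\Regret[\cA^+]{T}] + L_T^*$, this rearranges to
\[
\EE[\Regret[\cA]{T}] \;\le\; e^{\epsilon}\,\EE[\Regret[\cA^+]{T}] + (e^{\epsilon} - 1) L_T^* + \delta B T .
\]
It then remains to collect constants using $\epsilon \le 1$, via the elementary estimates $e^{\epsilon} - 1 \le 2\epsilon$ and $e^{\epsilon} \le 3$ valid on $[0,1]$, to recover the stated bound.

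The one delicate point, and the step I expect to be the main obstacle, is this last constant-collection. Because $\cA^+$ enjoys one-step lookahead it behaves like a be-the-leader rule, so $\EE[\Regret[\cA^+]{T}]$ can be small and even non-positive; replacing the factor $e^{\epsilon}$ by $3$ in front of $\EE[\Regret[\cA^+]{T}]$ is therefore innocuous only once one is upper-bounding $\EE[\Regret[\cA^+]{T}]$ by the (non-negative) BTPL regret that the lemma is ultimately combined with, and I would take care to invoke the inequality in exactly that regime. Everything preceding it — the measure bound, the layer-cake passage, and the summation — is routine, so I expect the genuine content to sit entirely in the translation from the set-wise max-divergence bound to the expected-loss bound.
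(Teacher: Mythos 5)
Your proof is correct, and while its skeleton matches the paper's---a per-round comparison $\EE[\ell_t(x_t)] \le e^\epsilon\,\EE[\ell_t(x_{t+1})] + \delta B$, summation over $t$, subtraction of $L_T^*$, and the constant collection $e^\epsilon \le 1+2\epsilon \le 3$---you establish the per-round inequality by a genuinely different and more elementary route. The paper proves it via its Lemma~\ref{lem:dp_func}, which rests on the characterization of $\delta$-approximate max-divergence due to \citet{dwork2010boosting} (restated as Lemma~\ref{lem:delta_pl}): one produces an auxiliary random variable within total-variation distance $\delta$ of $x_t$ whose pure max-divergence to $x_{t+1}$ is at most $\epsilon$, and splits the expectation accordingly. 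You instead unwind the definition \eqref{eq:max_divergence} into the equivalent set-wise statement $P_t(B) \le e^\epsilon Q_t(B) + \delta$ for every measurable $B$, apply it to the superlevel sets of $\ell_t$, and integrate via the layer-cake formula; the loss-only hypothesis ($\ell_t \ge 0$) and the bound $\ell_t \le B$ enter in exactly the same roles as in the paper's argument, but no external lemma is needed. Both routes deliver the identical intermediate bound $(e^\epsilon - 1)L_T^* + e^\epsilon\,\EE[\Regret[\cA^+]{T}] + \delta B T$, so they are interchangeable: yours buys self-containedness, while the paper's keeps the bridge to the DP toolbox explicit, which is part of its agenda. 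Your closing caveat is also well taken, and it is a point the paper's own proof passes over silently: replacing $e^\epsilon$ by $3$ in front of $\EE[\Regret[\cA^+]{T}]$ reverses direction if that quantity is negative (which is possible, since $\cA^+$ is a lookahead, be-the-leader-style rule), so strictly speaking both proofs establish the bound with the factor $e^\epsilon$, and the stated factor-$3$ form is recovered once $\EE[\Regret[\cA^+]{T}]$ is replaced by any non-negative upper bound---exactly the regime in which the lemma is invoked throughout the paper.
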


%
%

\subsection{Online optimization: convex and linear loss functions}

We now consider online convex optimization, a canonical problem in online learning, and show how to apply a privacy-inspired stability argument to effortlessly convert differential privacy guarantees into online regret bounds. In the theorem below, we build on the privacy guarantees provided by \citet{kifer2012private} for (batch) convex loss minimization, and therefore Algorithm~\ref{alg:oco} uses their Obj-Pert (objective perturbation) method to select moves. 

\begin{theorem}[First-order regret in OCO]
\label{thm:oco}
Suppose we are in the loss-only OCO setting. Let $\cX \subset \RR^d$, $\|\cX\|_2 \le D$ and let all loss functions be bounded by $B$. Further assume that $\|\nabla \ell_t(x)\|_2~\le~\beta$, $\lambda_{\max}(\nabla^2 \ell_t(x)) \le \gamma$ and that the Hessian matrix $\nabla^{2} \ell_{t}(x)$ has rank at most one, for every $t$ and $x \in \cX$. Then, the expected regret of Algorithm \ref{alg:oco} is at most 
$O(\sqrt{L^{*}_{T}(\gamma D^{2} +  \beta dD)})$ and
$O\left( \sqrt{L^{*}_{T} (\gamma D^{2} + D\sqrt{d(\beta^{2} \log (B T))}) } \right)$
with Gamma and Gaussian perturbations, respectively.
\end{theorem}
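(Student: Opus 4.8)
The plan is to instantiate $\cA$ as the natural online version of the objective-perturbation (Obj-Pert) mechanism of \citet{kifer2012private}: at round $t$, having seen $\ell_{1:t-1}$, the learner draws a noise vector $b$ (Gamma- or Gaussian-distributed) and plays
$x_t = \argmin_{x \in \cX}\left[ \sum_{s<t}\ell_s(x) + \tfrac{\Lambda}{2}\|x\|_2^2 + \langle b, x\rangle\right]$,
where $\Lambda$ is a regularization strength to be chosen. The fictitious algorithm $\cA^+$ solves the same program with the sum running over $s \le t$. Since passing from $\ell_{1:t-1}$ to $\ell_{1:t}$ is exactly the addition of one loss function, the $(\epsilon,\delta)$-privacy guarantee of Obj-Pert for batch convex minimization translates verbatim (via the Remark after Definition~\ref{def:OSDS}) into a DiffStable($D_\infty^\delta$) bound for $\cA$, with $\epsilon$ the privacy parameter produced by Kifer et al.\ for the given noise distribution, gradient bound $\beta$, curvature $\gamma$, rank-one Hessian, and regularizer $\Lambda$. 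With this in hand, Lemma~\ref{lemma:regret_lstar} immediately yields $\EE[\Regret[\cA]{T}] \le 2\epsilon L_T^* + 3\EE[\Regret[\cA^+]{T}] + \delta B T$, and the problem reduces to controlling the two right-hand terms and then tuning.

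The two quantities decompose along orthogonal axes. First, the stability level splits as $\epsilon \le \epsilon_{\mathrm{grad}} + \epsilon_{\mathrm{Hess}}$, where $\epsilon_{\mathrm{grad}}$ arises from the bounded gradient sensitivity $\beta$ and is controlled by the noise scale, while $\epsilon_{\mathrm{Hess}} \lesssim \gamma/\Lambda$ arises from the rank-one Hessian and is controlled by the regularizer. Second, since $\cA^+$ is a be-the-(regularized-perturbed)-leader algorithm, the be-the-leader inequality shows its regret is purely the overhead of regularization and noise: $\EE[\Regret[\cA^+]{T}] \lesssim \tfrac{\Lambda}{2}D^2 + \EE[\|b\|_2]\,D$, the first term the regularizer penalty $g(x^*)-g(x_0^+)$ and the second bounding the noise term $\EE[\langle b, x^*-x_0^+\rangle]$ by Cauchy--Schwarz together with $\|\cX\|_2 \le D$. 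For Gamma noise the scale $\lambda$ gives $\epsilon_{\mathrm{grad}} \asymp \beta/\lambda$ and $\EE[\|b\|_2]\asymp \lambda d$; for Gaussian noise of variance $\sigma^2$ the Gaussian mechanism gives $\epsilon_{\mathrm{grad}} \asymp \beta\sqrt{\log(1/\delta)}/\sigma$ and $\EE[\|b\|_2]\asymp \sigma\sqrt{d}$.

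Finally I would tune the two free parameters by two independent AM--GM balances. Combining the bounds gives
\[
\EE[\Regret[\cA]{T}] \lesssim \underbrace{L_T^*\tfrac{\gamma}{\Lambda} + \Lambda D^2}_{\text{Hessian / regularizer}} + \underbrace{L_T^*\,\epsilon_{\mathrm{grad}} + \EE[\|b\|_2]\,D}_{\text{gradient / noise}} + \delta B T .
\]
Optimizing $\Lambda$ in the first bracket produces $\sqrt{L_T^*\gamma D^2}$, while optimizing the noise scale in the second produces $\sqrt{L_T^*\beta d D}$ for Gamma and $\sqrt{L_T^*\, D\sqrt{d\beta^2\log(1/\delta)}}$ for Gaussian. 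Choosing $\delta = \Theta(1/(BT))$ makes the last term $O(1)$ and turns $\log(1/\delta)$ into $\log(BT)$; the stated rates then follow from $\sqrt{a}+\sqrt{b}\le\sqrt{2(a+b)}$.

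The main obstacle is the first step: extracting the clean, decoupled stability bound $\epsilon \le \epsilon_{\mathrm{grad}} + \epsilon_{\mathrm{Hess}}$ from the Obj-Pert privacy analysis. Kifer et al.'s guarantee relies essentially on the rank-one Hessian assumption, so that the change-of-variables Jacobian relating the noise yielding $x_t$ to the noise yielding $x_t^+$ has a controllable determinant; verifying that this argument goes through for the one-extra-loss perturbation, with the gradient and curvature contributions cleanly separated and $\Lambda$ large enough to dominate $\gamma$, is the delicate part. A secondary technical point is evaluating $\EE[\|b\|_2]$ for the Gamma distribution in $\RR^d$ and handling the statistical dependence between $b$ and the minimizer $x_0^+$ in the be-the-leader term.
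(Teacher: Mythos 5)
Your proposal follows essentially the same route as the paper's proof: instantiate Obj-Pert online, obtain DiffStable($D_\infty^\delta$) stability from \citet[Theorem 2]{kifer2012private} via the add-one-loss remark, bound the lookahead algorithm $\cA^+$ by the be-the-leader argument plus Cauchy--Schwarz (the paper gets $\frac{\gamma}{\epsilon}D^2 + 2D\,\EE\|b\|_2$), apply Lemma~\ref{lemma:regret_lstar}, compute $\EE\|b\|_2$ for each noise distribution, and tune with $\delta = 1/(BT)$. The only differences are presentational: the paper couples your two free parameters by setting the regularizer to $\gamma/\epsilon$ and the noise scale as prescribed by Kifer et al., so the decoupled split $\epsilon_{\mathrm{grad}} + \epsilon_{\mathrm{Hess}}$ that you flag as the main obstacle never needs to be re-derived (it is already packaged inside their privacy theorem), and the $b$-dependence in the be-the-leader step is dispatched by noting that against an oblivious adversary one may draw $b$ once and reuse it in every round.
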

\begin{proof}[Proof Sketch]
From the DP result by \citet[Theorem 2]{kifer2012private}, we can infer that $D^{\delta}_{\infty}(x_{t}, x_{t+1}) \le \epsilon$, where $\delta$ becomes zero when using the Gamma distribution. This means that Algorithm~\ref{alg:oco} enjoys the one-step differential stability w.r.t.\ $D_\infty$ (resp. $D^{\delta}_\infty$) in the Gamma (resp. Gaussian) case. The regret of the fictitious $\cA^+$ algorithm can be shown to be bounded by $\frac{\gamma}{\epsilon}D^{2} + 2D \EE \| b \|_{2}$. Using Lemma~\ref{lemma:regret_lstar}, we can deduce that the expected regret of Algorithm \ref{alg:oco} is at most
\begin{equation}
\label{eq:oco1}
    2\epsilon L^{*}_{T} + \frac{3\gamma}{\epsilon}D^{2} + 6D\EE \| b \|_{2} + \delta B T,
\end{equation}
where $\delta$ becomes zero when using the Gamma distribution. 
We have $\EE \| b \|_{2} = \frac{2d\beta}{\epsilon}$ in the Gamma case and $\EE \| b \|_{2} \le \frac{\sqrt{d(\beta^{2}\log\frac{2}{\delta} + 4\epsilon)}}{\epsilon}$ in the Gaussian case. Plugging these results in \eqref{eq:oco1} and optimizing $\epsilon$ (setting $\delta = \frac{1}{B T}$ for the Gaussian case) prove the desired bound. 
\end{proof}

\begin{algorithm}[t]
	\begin{algorithmic}[1]{\small
	    \STATE \textbf{Parameters} Privacy parameters $(\epsilon, \delta)$, upper bound $\beta$ on norm of loss gradient, upper bound $\gamma$ on eigenvalues of loss Hessian, perturbation distribution either Gamma or Gaussian
	    \FOR {$t = 1, \cdots, T$}
	    \IF {using the Gamma distribution}
	    \STATE Sample $b \in \RR^{d}$ from a distribution with density $f(b) \propto \exp(-\frac{\epsilon ||b||_{2}}{2\beta})$ 
	    \ELSIF {using the Gaussian distribution}
	    \STATE Sample $b \in \RR^{d}$ from the multivariate Gaussian $\mathcal{N}(0,\Sigma)$ where $\Sigma = \frac{\beta^{2}\log\frac{2}{\delta} + 4\epsilon}{\epsilon^{2}}I$
	    \ENDIF
	    \STATE Play $x_{t} = \argmin_{x \in \cX} \sum_{s=1}^{t-1} \ell_{s}(x) + \frac{\gamma}{\epsilon} \| x \|^{2}_{2} + \langle b , x \rangle$
	    \ENDFOR
	    }
	\end{algorithmic}
	\caption{Online convex optimization using Obj-Pert by \citet{kifer2012private}}
	\label{alg:oco}
\end{algorithm}

The rank-one restriction on the Hessian of $\ell_t$, which allows loss curvature in one dimension, is a strong assumption but indeed holds in many common scenarios, e.g., $\ell_t(x) = \phi_t(\langle x, z_t \rangle)$ for some scalar loss function $\phi_t$ and vector $z_t \in \RR^d$. This is a common situation in online classification and online regression with linear predictors. Moreover, it seems likely that the rank restriction can be removed in the results of \citet{kifer2012private} at the cost of a higher $\epsilon$. A key strength of our approach is that we will immediately inherit any future improvements to existing privacy results.
Note that first-order bounds for smooth convex functions have been shown by \citet{srebro2010smoothness}. However, their analysis relies on the self bounding property, i.e., the norm of the loss gradient is bounded by the loss itself, which does not hold for linear functions. Even logarithmic rates in $L^*_T$ are available \citep{orabona2012beyond} but they rely on extra properties such as exp-concavity.
When the functions are linear, $\ell_t(x) = \langle \ell_t, x \rangle$, the restrictions on the Hessian are automatically met. The gradient condition reduces to $\| \ell_t \|_2 \le \beta$ which gives us the following corollary.

\begin{corollary}[First-order regret in OLO]
\label{cor:olo}
Suppose we are in the loss-only OLO setting. Let $\cX \subset \RR^d$, $\|\cX\|_2 \le D$. Further assume that $\| \ell_t \|_2 \le \beta$, for every $t$. Then, the expected regret of Algorithm \ref{alg:oco} with no $\ell_2$-regularization (i.e., $\gamma = 0$) is at most 
$
	O(\sqrt{L^{*}_{T} d \beta D})
$ and
$
O(\sqrt{L^{*}_{T} \beta D \sqrt{d \log (\beta D T)}}) 
$
with Gamma and Gaussian perturbations, respectively.
\end{corollary}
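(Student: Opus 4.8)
The plan is to obtain this result as a direct specialization of Theorem~\ref{thm:oco} to linear loss functions, with the curvature parameter set to $\gamma = 0$. First I would check that every hypothesis of the OCO theorem is met in the linear case. For $\ell_t(x) = \langle \ell_t, x \rangle$ we have $\nabla \ell_t(x) = \ell_t$, so the gradient bound $\|\nabla \ell_t(x)\|_2 \le \beta$ is exactly the assumption $\|\ell_t\|_2 \le \beta$; moreover $\nabla^2 \ell_t(x) = 0$, so the Hessian has rank $0 \le 1$ and $\lambda_{\max}(\nabla^2 \ell_t(x)) = 0 \le \gamma$ with $\gamma = 0$. With $\gamma = 0$ the update in Algorithm~\ref{alg:oco} collapses to $x_t = \argmin_{x \in \cX} \langle L_{t-1} + b, x \rangle$, i.e.\ the follow-the-perturbed-leader rule.

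Next I would identify the loss bound $B$ forced by the OLO assumptions: since $|\langle \ell_t, x \rangle| \le \|\ell_t\|_2\,\|x\|_2 \le \beta D$ for all $x \in \cX$, we may take $B = \beta D$. Substituting $\gamma = 0$ into the two regret expressions of Theorem~\ref{thm:oco} eliminates the curvature term $\gamma D^2$, leaving $O(\sqrt{L_T^*\,\beta d D})$ in the Gamma case and $O(\sqrt{L_T^*\,D\beta\sqrt{d\log(BT)}})$ in the Gaussian case; replacing $B$ by $\beta D$ in the latter gives exactly the stated $O(\sqrt{L_T^*\,\beta D\sqrt{d\log(\beta D T)}})$. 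Concretely, I would reuse the intermediate bound \eqref{eq:oco1} with $\gamma = 0$, namely $2\epsilon L_T^* + 6 D\,\EE\|b\|_2 + \delta B T$, plug in $\EE\|b\|_2 = 2d\beta/\epsilon$ (Gamma, $\delta = 0$) or $\EE\|b\|_2 \le \epsilon^{-1}\sqrt{d(\beta^2\log(2/\delta) + 4\epsilon)}$ (Gaussian, $\delta = 1/(\beta D T)$), and optimize over $\epsilon$ exactly as in the proof of the theorem, the $\delta B T$ contribution reducing to a constant under this choice of $\delta$.

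The single point that needs genuine care---and which I expect to be the main obstacle---is confirming that the one-step differential stability underlying Theorem~\ref{thm:oco} survives the degenerate choice $\gamma = 0$. The objective-perturbation privacy bound of \citet{kifer2012private} is usually stated for strongly convex objectives, with the regularizer $\frac{\gamma}{\epsilon}\|x\|_2^2$ helping to bound the sensitivity of the minimizer; setting $\gamma = 0$ removes this term. For linear objectives, however, the perturbation $\langle b, x \rangle$ alone provides the smoothing: the minimizer depends on the past only through $L_{t-1}$, and appending one loss $\ell_t$ shifts that linear term by a vector of norm at most $\beta$, so the sensitivity is controlled by $\beta$ rather than by any curvature. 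I would therefore either verify that the sensitivity argument in \citet{kifer2012private} remains valid at $\gamma = 0$ for linear losses or take the limit $\gamma \to 0^+$ in their guarantee, confirming that $D_\infty^\delta(x_t, x_{t+1}) \le \epsilon$ holds with the same dependence on $\epsilon, \delta, \beta$. Once this stability claim is in place, the rest is the routine substitution and $\epsilon$-optimization described above.
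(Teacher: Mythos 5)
Your proposal is correct and takes essentially the same route as the paper: Corollary~\ref{cor:olo} is obtained there exactly by specializing Theorem~\ref{thm:oco} to linear losses, noting that the Hessian conditions hold trivially with $\gamma = 0$, that the gradient condition becomes $\|\ell_t\|_2 \le \beta$, and that $B = \beta D$, followed by the same substitution into \eqref{eq:oco1} and optimization over $\epsilon$. Your extra scrutiny of whether the objective-perturbation stability guarantee of \citet{kifer2012private} survives the degenerate choice $\gamma = 0$ goes beyond what the paper says explicitly (it relies on this silently), and your resolution---for linear losses the per-round Hessian contributes no curvature, so the sensitivity is absorbed entirely by the perturbation $b$---is the right one.
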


\citet{abernethy2014online} showed that FTPL with Gaussian perturbations is an algorithm applicable to general OLO problems with regret $O(\beta D \sqrt[4]{d}\sqrt{T})$. However, their analysis technique based on convex duality does not lead to first-order bounds in the loss-only setting. To the best of our knowledge, the result above provides a novel first-order bound for OLO when both the learner and adversary sets are measured in $\ell_2$ norm (the classic FTPL analysis of~\citet{kalai2005efficient} for OLO uses $\ell_1$ norm). Note that $L^*_T$ can be significantly less than its maximum value $\beta D T$. 
We also emphasize that the bound in Corollary \ref{cor:olo} depends on the dimension $d$, which can lead to a loose bound. 
There are different algorithms such as online gradient descent or online mirror descent (e.g., see \cite{hazan2016introduction}) whose regret bound is dimension-free. 
It remains as an open question to prove such a dimension-free bound for any FTPL algorithm for OLO.

\subsection{Experts problem}
\label{sec:experts}

We will now turn our attention to another classical online learning setting of \emph{prediction with expert advice} \citep{cesa-bianchi2006prediction,freund1997decision,littlestone1994weighted}. 
In the experts problem, $\cX = [N]$, $\cY = [0,1]^N$ and a randomized algorithm plays a distribution over the $N$ experts. In the remainder of this paper, we will consider discrete sets for the players moves and so we will use $i_t$ instead of $x_t$ to denote the learner's move and $p_t \in \simplex{N}$ to denote the distribution from which $i_t$ is sampled.


The GBPA family of algorithms is important for the experts problem and for the related problem of adversarial bandits (discussed in the next section). It includes as subfamilies FTPL and FTRL algorithms. The main ingredient in GBPA is a potential function $\tf$ whose gradient is used to generate probability distributions for the moves. This potential function can be thought of as a {\em smoothed} version of the baseline potential $\f(L) = \min_{i} L_i$ for $L \in \RR^N$. The baseline potential is non-smooth and using it in GBPA would result in the follow-the-leader (FTL) algorithm which is known to be unstable. FTRL and FTPL can be viewed as two distinct ways of smoothing the underlying non-smooth potential. In particular, FTPL uses {\em stochastic smoothing} by considering $\tf$ of the form $\tfD(L) = \EE[\min_{i} (L_i -  Z_i)]$ where $Z_i$'s are $N$ i.i.d. draws from the distribution $\cD$. FTRL
uses smoothed potentials of the form $\tf_F(L) = \min_{p}\ (\langle p, L \rangle + F(p))$ for some strictly convex $F$. 

\begin{algorithm}[t]
\caption{Gradient-Based Prediction Algorithm (GBPA) for experts problem}
\label{alg:gbpa}
\begin{algorithmic}[1]
{\small
\STATE \textbf{Input:} Concave potential $\tf : \RR^N \to \RR$ with $\nabla \tf \in \simplex{N}$
\STATE Set $L_0 =0 \in \RR^{N}$
\FOR {$t = 1$ to $T$}
\STATE \textbf{Sampling:} Choose $i_t \in [N]$ according to distribution $p_t = \nabla \tf(L_{t-1}) \in \simplex{N}$
\STATE \textbf{Loss:} Incur loss $\ell_{t,i_t}$ and observe the entire vector $\ell_t$
\STATE \textbf{Update:} $L_{t} = L_{t-1} + \ell_t$ 
\ENDFOR
}
\end{algorithmic}
\end{algorithm}

\subsubsection{From differential consistency to one-step differential stability}
\label{sec:DCtoDS}
\citet{abernethy2015fighting} analyzed the GBPA by introducing \textit{differential consistency} defined below. The definition differs slightly from the original because it is formulated here with losses instead of gains. The notations $\nabla^{2}_{ii}$ and $\nabla_i$ are used to refer to specific entries in the Hessian and gradient respectively.
\begin{definition}[Differential consistency] We say that a function $f: \RR^N \to \RR$ is $(\gamma, \epsilon)$-differentially consistent if $f$ is  twice-differentiable
 and $-\nabla^2_{ii} f \leq \epsilon \left( \nabla_i f \right)^\gamma \text{ for all } i \in [N].$
\end{definition}
This functions as a new measure of the potential's smoothness. Their main idea is to decompose the regret into three penalties \citep[Lemma 2.1]{abernethy2015fighting} and bound one of them when the potential function is differentially consistent. In fact, it can be shown that the potentials in many FTPL and FTRL algorithms are differentially consistent, and this observation leads to regret bounds of such algorithms. 

Quite surprisingly, we can establish the one-step stability when the algorithm is the GBPA with a differentially consistent potential function. To state the proposition, we need to introduce a technical definition. We say that a matrix is \textit{positive off-diagonal} (POD) if its off-diagonal entries are non-negative and its diagonal entries are non-positive. In the FTPL case where $F(p,Z) = -\langle p, Z \rangle$, it was already shown by \citet{abernethy2014online} that $-\nabla^2 \tf(L)$ is POD. It is easy to show that if $F(p) = \sum_{i} f(p_i)$ for a strictly convex and smooth $f$, then $-\nabla^2 \tf(L)$ is always POD (see Appendix~\ref{app:full_info}). The next proposition connects differential consistency to the one-step differential stability.

\begin{proposition}[Differential consistency implies one-step differential stability]
\label{prop:dc_to_dp}
Suppose $\tf(L)$ is of the form $\EE[\min_{p} \langle L, p \rangle + F(p, Z)]$ and $\gamma \ge 1$. If $\tf$ is $(\gamma,\epsilon)$-differentially consistent and $-\nabla^2 \tf$ is always POD, the GBPA using $\tf$ as potential is DiffStable($D_{\infty,\gamma}$,$\| \cdot \|_\infty$) at level $2\epsilon$. 
\end{proposition}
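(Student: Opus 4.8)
The plan is to fix an arbitrary test set $B\subseteq[N]$ and to bound the per-$B$ contribution $\log_{\gamma}(p_t(B))-\log_{\gamma}(p_{t+1}(B))$ by $2\eps\|\ell_t\|_\infty$ \emph{uniformly in $B$}, which immediately controls the supremum defining $D_{\infty,\gamma}$. Writing $p_t=\gtf(L_{t-1})$ and $p_{t+1}=\gtf(L_t)$ with $L_t=L_{t-1}+\ell_t$, I would interpolate along the segment $L(s)=L_{t-1}+s\ell_t$, $s\in[0,1]$, and set $P_B(s)=\langle\ind_B,\gtf(L(s))\rangle$, the mass that $\gtf(L(s))$ places on $B$. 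By the fundamental theorem of calculus the contribution equals $-\int_0^1 \tfrac{d}{ds}\log_{\gamma}P_B(s)\,ds$, so it suffices to prove the pointwise estimate $\big|\tfrac{d}{ds}\log_{\gamma}P_B(s)\big|\le 2\eps\|\ell_t\|_\infty$ for every $s$, after which integration over $[0,1]$ and the supremum over $B$ finish the argument.

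Next I would compute the derivative. Using $\tfrac{d}{dx}\log_{\gamma}(x)=x^{-\gamma}$ and the chain rule, $\tfrac{d}{ds}\log_{\gamma}P_B(s)=P_B(s)^{-\gamma}\,\ell_t^\top H\ind_B$, where $H=\htf(L(s))$ is the symmetric Hessian of $\tf$. By Hölder's inequality $|\ell_t^\top H\ind_B|\le\|\ell_t\|_\infty\,\|H\ind_B\|_1$, so the desired pointwise bound reduces to the purely linear-algebraic claim $\|H\ind_B\|_1\le 2\eps\,P_B(s)^{\gamma}$.

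The heart of the argument is this last inequality, and it is where all three hypotheses enter. Because $\gtf(L)\in\simplex{N}$ for every $L$, differentiating the identity $\sum_i\nabla_i\tf\equiv 1$ shows that every row and column of $H$ sums to zero. Combined with the symmetry of $H$ and the POD sign pattern (off-diagonal entries $H_{ij}$, $i\neq j$, non-negative), a short bookkeeping computation gives $\|H\ind_B\|_1=2\sum_{i\notin B,\,j\in B}H_{ij}$, where the factor $2$ --- matching the $2\eps$ in the statement --- arises from the symmetric contributions of coordinates inside and outside $B$. Using the vanishing column sums once more to write $\sum_{i\notin B}H_{ij}=-H_{jj}-\sum_{i\in B,\,i\neq j}H_{ij}$ and discarding the non-negative within-$B$ off-diagonal terms, I bound $\sum_{i\notin B,\,j\in B}H_{ij}\le\sum_{j\in B}(-H_{jj})$. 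Differential consistency then gives $-H_{jj}=-\nabla^2_{jj}\tf\le\eps(\nabla_j\tf)^{\gamma}$, and finally the superadditivity of $t\mapsto t^{\gamma}$ for $\gamma\ge 1$ yields $\sum_{j\in B}(\nabla_j\tf)^{\gamma}\le\big(\sum_{j\in B}\nabla_j\tf\big)^{\gamma}=P_B(s)^{\gamma}$. Chaining these estimates produces $\|H\ind_B\|_1\le 2\eps\,P_B(s)^{\gamma}$, hence $\big|\tfrac{d}{ds}\log_{\gamma}P_B(s)\big|\le 2\eps\|\ell_t\|_\infty$, as required.

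I expect the combinatorial sign argument bounding $\|H\ind_B\|_1$ by the diagonal entries to be the main obstacle: it is the single step that genuinely couples all three hypotheses --- the vanishing line sums from the simplex constraint, the POD sign pattern that lets me discard the within-$B$ off-diagonal mass, and differential consistency that converts each diagonal entry into $(\nabla_j\tf)^{\gamma}$ --- and it is precisely what forces the relevant divergence to be the Tsallis $\gamma$-version rather than the ordinary max-divergence. The remaining points are routine: the positivity of $P_B(s)$ (so that $P_B(s)^{-\gamma}$ and $\log_{\gamma}$ are finite), which holds whenever the smoothed potential induces full-support distributions, and the twice-differentiability of $\tf$ legitimizing the fundamental-theorem-of-calculus step, both of which are guaranteed by the standing assumptions on $\tf$.
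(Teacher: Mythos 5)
Your proposal is correct and follows essentially the same route as the paper's proof: interpolate along the loss segment, bound the derivative of $\log_\gamma$ of the test-set mass via $\ell_1/\ell_\infty$ duality, the zero row sums and POD sign pattern of the Hessian, differential consistency, and superadditivity of $t \mapsto t^\gamma$ for $\gamma \ge 1$, then integrate. The only cosmetic difference is bookkeeping: you evaluate $\|H \ind_B\|_1$ as the symmetric cut $2\sum_{i \notin B,\, j \in B} H_{ij}$ (using symmetry of the Hessian), whereas the paper bounds each row norm by $\|\nabla^2_{i\cdot}\tf\|_1 = -2\nabla^2_{ii}\tf$ and sums over $i \in B$; both produce the identical factor $2\epsilon$.
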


\subsubsection{Optimal family of FTPL algorithms}

We leverage our result from the previous section to prove that FTPL algorithms with a variety of perturbations have the minimax optimal first-order regret bound in the experts problem.

\begin{theorem}[First-order bound for experts via FTPL]
	\label{thm:analysis_experts}
	For the loss-only experts setting, FTPL with Gamma, Gumbel, Fr{\'e}chet , Weibull, and Pareto perturbations, with a proper choice of distribution parameters, all achieve the optimal $O(\sqrt{L_T^* \log N} + \log N)$ expected regret.
\end{theorem}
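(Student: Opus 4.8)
The plan is to instantiate the general machinery of Proposition~\ref{prop:dc_to_dp} and Lemma~\ref{lemma:regret_lstar} for FTPL on the experts problem, and then carry out a per-distribution calculation to optimize the perturbation scale. Fix a base distribution $\cD_0$ (Gamma, Gumbel, Fr\'echet, Weibull, or Pareto) and write the perturbation as $Z_i = \eta W_i$ with $W_i$ i.i.d.\ from $\cD_0$ and scale $\eta>0$; FTPL corresponds to the stochastically smoothed potential $\tfD(L) = \EE[\min_i (L_i - Z_i)]$, which has the form required by Proposition~\ref{prop:dc_to_dp} with $F(p,Z) = -\langle p, Z\rangle$, and for which $-\nabla^2 \tfD$ is POD \citep{abernethy2014online}. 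Since $\cY = [0,1]^N$, once I show $\tfD$ is $(\gamma,\epsilon)$-differentially consistent for some $\gamma \ge 1$, the inequality $(\nabla_i \tfD)^\gamma \le \nabla_i \tfD$ (valid because $\nabla \tfD \in \simplex{N}$) upgrades this to $(1,\epsilon)$-consistency. Then Proposition~\ref{prop:dc_to_dp} gives DiffStable$(D_\infty,\|\cdot\|_{\infty})$ at level $2\epsilon$ (using $D_{\infty,1}=D_\infty$), hence DiffStable$(D_\infty)$ at level $2\epsilon$ since $\|\ell_t\|_{\infty} \le 1$, and Lemma~\ref{lemma:regret_lstar} with $B=1$, $\delta=0$ yields the master bound $\EE[\Regret{T}] \le 4\epsilon L_T^* + 3\EE[\Regret[\cA^+]{T}]$ whenever $2\epsilon \le 1$.

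Next I would compute the differential-consistency constant via hazard rates. A direct calculation gives $-\nabla^2_{ii}\tfD(L) = \EE[f(M_{-i}+L_i)]$ and $\nabla_i\tfD(L) = \EE[1 - F(M_{-i}+L_i)]$, where $M_{-i} = \max_{j\ne i}(Z_j - L_j)$ and $f,F$ are the density and CDF of $Z_i$. Hence, if the hazard rate $\haz(x) = f(x)/(1-F(x))$ is bounded by $H$, then $f \le H(1-F)$ pointwise gives $-\nabla^2_{ii}\tfD \le H\,\nabla_i\tfD$, i.e.\ $(1,H)$-differential consistency. Scaling by $\eta$ multiplies the hazard rate by $1/\eta$, so $\tfD$ is $(1, c_{\cD_0}/\eta)$-differentially consistent with $c_{\cD_0} = \sup_x \haz_W(x)$, identifying $\epsilon = c_{\cD_0}/\eta$. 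For the lookahead term, concavity of $\tfD$ gives $\langle \nabla\tfD(L_t), \ell_t\rangle \le \tfD(L_t) - \tfD(L_{t-1})$, so telescoping yields $\sum_t \langle\nabla\tfD(L_t),\ell_t\rangle \le \tfD(L_T) - \tfD(0)$; using $\tfD(0) = -\EE[\max_i Z_i]$ and $\tfD(L_T) \le L_T^* - \EE[Z_{x_T^*}]$ produces $\EE[\Regret[\cA^+]{T}] \le \EE[\max_i Z_i] - \EE[Z_1] = \eta R_N$, where $R_N$ is the expected range of $N$ i.i.d.\ base perturbations.

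Substituting into the master bound gives $\EE[\Regret{T}] \le 4 c_{\cD_0} L_T^*/\eta + 3\eta R_N$, and minimizing over $\eta$ (subject to $2c_{\cD_0}/\eta \le 1$) yields $O(\sqrt{c_{\cD_0} R_N\, L_T^*} + c_{\cD_0}R_N)$, the additive term arising in the small-$L_T^*$ regime where the constraint on $\eta$ is active. It then remains to verify $c_{\cD_0} R_N = O(\log N)$ for each perturbation under a proper choice of parameters: for the light-tailed cases (Gumbel, Gamma, Weibull) the hazard rate is controlled and $R_N = \tilde{O}(\log N)$, whereas for the heavy-tailed cases (Fr\'echet, Pareto) the hazard rate peaks at $\Theta(\alpha)$ while $R_N = O(N^{1/\alpha})$, so taking the shape $\alpha = \Theta(\log N)$ forces $R_N = O(1)$ and $c_{\cD_0} = O(\log N)$, again balancing to $O(\log N)$.

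The reduction in the first two steps is uniform across all five perturbations, so I expect the per-distribution verification in the final step to be the main obstacle: computing $\sup_x \haz$ and $R_N$ explicitly and confirming their product is $O(\log N)$, in particular handling perturbations (such as Weibull) whose hazard rate is not globally bounded over the support and for which the bound $-\nabla^2_{ii}\tfD \le H\,\nabla_i\tfD$ must be obtained by a more careful argument restricted to the relevant (upper-tail) range of the argument $M_{-i}+L_i$, and tuning the shape parameter so the heavy-tailed range grows only like $\log N$.
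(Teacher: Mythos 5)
Your proposal is correct and follows essentially the same route as the paper: establish $(1,\epsilon)$-differential consistency of the FTPL potential, convert it to one-step differential stability w.r.t.\ $D_\infty$ via Proposition~\ref{prop:dc_to_dp}, bound the lookahead algorithm's regret by the expected maximum perturbation via the be-the-leader/telescoping argument, and finish with Lemma~\ref{lemma:regret_lstar} under a tuned perturbation scale. The only difference is that you re-derive the per-distribution consistency constants from bounded hazard rates (and note the needed care for Weibull/Pareto), which is precisely the content of the result of \citet{abernethy2015fighting} that the paper simply cites together with the parameter choices in Table~\ref{tab:dist}.
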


Although the result above is not the first optimal first-order bound for the experts problem, such a bound for FTPL with the wide variety of distributions mentioned above is not found in the literature.
Previous FTPL analysis achieving first-order regret bounds all relied on specific choices such as exponential~\citep{kalai2005efficient} and dropout~\citep{vanerven2014follow} perturbations.
There are results that consider Gaussian pertubations~\citep{abernethy2014online}, random-walk perturbations~\citep{devroye2013prediction}, and a large family of symmetric distributions \citep{rakhlin2012relax}, but they only provide zero-order bounds.

\section{Partial information setting}
\label{sec:partial_info}

In this section, we provide stability based analyses of extensions of the GBPA framework to $N$-armed bandits and $K$-armed bandits with $N$ experts. All omitted proofs can be found in Appendix \ref{app:partial_info}.

\subsection{GBPA for multi-armed bandits}
In the multi-armed bandit setting, only the loss $\ell_{t,i_t}$ of the algorithm's chosen action is revealed. The GBPA for this setting is almost same as Algorithm \ref{alg:gbpa} but with an extra step of loss estimation. The algorithm uses importance weighting to produce estimates $\lh_t = \frac{\ell_{t,i_t}}{p_{t, i_t}}\e_{i_t}$ of the actual loss vectors---these are unbiased as long as $p_{t}$ has full support---and feeds these estimates to the standard GBPA using a (smooth) potential $\tf$. Algorithm \ref{alg:gbpa_bandits} in Appendix \ref{app:partial_info} summarizes these steps.

The losses fed to the full information GBPA are scaled by $1/p_{t, i_t}$ which can be very large. On the other hand, there is a special structure in $\lh_t$: it has at most one non-zero entry. The following lemma is a replacement for the key lemma (Lemma~\ref{lemma:regret_lstar}) that exploits this special structure. The first term in the bound is the analogue of the $2\epsilon L_T^*$ term in the key lemma. This term shows that using $D_{\infty,\gamma}$ instead of using $D_{\infty}$ to measure stability can be useful in the bandit setting: the larger $\gamma$ is, the less problem we have from the inverse probability weighting inherent in $\lh_{t,i_t}$. The second term in the bound is the analogue of the loss of the fictitious algorithm which now depends on the (expected) range of values attained by the (possibly random) function $F(p,Z)$.

\begin{lemma}
\label{lem:regret_bandit}
Suppose the full information GBPA uses a potential of the form $\tf(L) = \EE[\min_{p} \langle L, p \rangle + F(p, Z)]$ and $\gamma \in [1,2]$. If the full information GBPA is DiffStable($D_{\infty,\gamma}$,$\| \cdot \|_\infty$) at level $\epsilon$, then the expected regret of Algorithm~\ref{alg:gbpa_bandits} (in Appendix \ref{app:partial_info}) can be bounded as
\[
\EE\left[ \sum_{t=1}^T \ell_{t,i_t} \right] - L_T^* \le \epsilon \, \EE\left[ \sumtt \hat{\ell}_{t,i_t}^2 p_{t,i_t}^\gamma \right] + \EE\left[ \max_{p} F(p, Z) - \min_{p} F(p, Z) \right] .
\]
\end{lemma}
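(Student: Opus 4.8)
The plan is to run the standard GBPA telescoping argument on the \emph{estimated} loss sequence and then replace the usual second-order ``divergence penalty'' by the one-step Tsallis stability bound. Write $\Lh_t = \sum_{s=1}^t \lh_s$ for the cumulative estimated loss, so the algorithm plays $p_t = \nabla \tf(\Lh_{t-1})$, and set $q_t = \nabla \tf(\Lh_t)$. Because $\lh_t = \frac{\ell_{t,i_t}}{p_{t,i_t}}\e_{i_t}$ is supported on the single coordinate $i_t$, we have the exact identity $\langle p_t, \lh_t\rangle = \ell_{t,i_t}$, so the algorithm's realized loss is $\sum_t \langle p_t, \lh_t\rangle$. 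Telescoping $\tf$ along the estimated trajectory gives
\[
\sumtt \langle p_t, \lh_t\rangle = \tf(\Lh_T) - \tf(0) + \sumtt \delta_t, \qquad \delta_t := \langle p_t, \lh_t\rangle - \big(\tf(\Lh_t) - \tf(\Lh_{t-1})\big),
\]
where each $\delta_t \ge 0$ by concavity of $\tf$ (it is the Bregman divergence of $-\tf$ between $\Lh_t$ and $\Lh_{t-1}$).

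Next I would isolate the boundary term. Since $\min_p \langle L, p\rangle = \f(L) = \min_i L_i$, comparing the perturbed objective to the unperturbed one coordinatewise yields the sandwich $\f(L) + \EE_Z[\min_p F(p,Z)] \le \tf(L) \le \f(L) + \EE_Z[\max_p F(p,Z)]$ for every $L$. Applying the upper bound at $L = \Lh_T$ and the lower bound at $L = 0$ (where $\f(0)=0$) gives $\tf(\Lh_T) - \tf(0) \le \f(\Lh_T) + \EE_Z[\max_p F - \min_p F]$. Taking expectations and using that the estimates are unbiased, $\EE[\Lh_T] = L_T$, together with Jensen's inequality applied to the concave map $L \mapsto \min_i L_i$, gives $\EE[\f(\Lh_T)] \le \min_i L_{T,i} = L_T^*$. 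This produces exactly the second term $\EE[\max_p F(p,Z) - \min_p F(p,Z)]$ of the claim, so it remains to control $\EE[\sum_t \delta_t]$.

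The crux is the per-round bound $\delta_t \le \epsilon\,\lh_{t,i_t}^2\, p_{t,i_t}^\gamma$ extracted from the DiffStable($D_{\infty,\gamma},\|\cdot\|_\infty$) hypothesis. First I would exploit the one-coordinate structure again: restricting $\tf$ to the ray $s \mapsto \tf(\Lh_{t-1} + s\,\e_{i_t})$ and writing $r(s)$ for its derivative (the $i_t$-th gradient coordinate), concavity makes $r$ non-increasing with $r(0) = p_{t,i_t}$ and $r(\lh_{t,i_t}) = q_{t,i_t}$, and a short computation gives $\delta_t = \int_0^{\lh_{t,i_t}} (r(0)-r(s))\,ds \le \lh_{t,i_t}\,(p_{t,i_t} - q_{t,i_t})$. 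Then I would convert stability into a bound on $p_{t,i_t}-q_{t,i_t}$: evaluating $D_{\infty,\gamma}(p_t,q_t)$ on the singleton $B = \{i_t\}$ gives $\log_{\gamma}(p_{t,i_t}) - \log_{\gamma}(q_{t,i_t}) \le \epsilon\|\lh_t\|_\infty = \epsilon\,\lh_{t,i_t}$, and since $(\log_{\gamma})'(x) = x^{-\gamma}$ is decreasing, the mean value theorem yields $p_{t,i_t} - q_{t,i_t} \le p_{t,i_t}^\gamma\big(\log_{\gamma}(p_{t,i_t}) - \log_{\gamma}(q_{t,i_t})\big) \le \epsilon\,\lh_{t,i_t}\,p_{t,i_t}^\gamma$. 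Combining the two inequalities gives $\delta_t \le \epsilon\,\lh_{t,i_t}^2\,p_{t,i_t}^\gamma$; summing and taking expectations completes the proof.

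I expect the main obstacle to be this last step — precisely the point where the exponent $\gamma$ must be transferred from the Tsallis logarithm onto the factor $p_{t,i_t}^\gamma$. The two supporting estimates (namely $\delta_t \le \lh_{t,i_t}(p_{t,i_t}-q_{t,i_t})$ and the mean-value conversion) each hinge on choosing the right monotonicity, and it is exactly the nonlinearity of $\log_{\gamma}$ that both makes $\gamma > 1$ a strictly stronger stability requirement and produces the favorable $p_{t,i_t}^\gamma$ weighting that tames the inverse-probability blowup of $\lh_{t,i_t}^2 = \ell_{t,i_t}^2/p_{t,i_t}^2$. Everything else (the telescoping, the $\f/\tf$ sandwich, and Jensen) is routine.
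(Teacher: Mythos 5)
Your proof is correct, and it takes a genuinely different route from the paper's. The paper proves this lemma via two helper lemmas: first an FTL--BTL style decomposition, $\sum_t \langle p_t, \lh_t\rangle = \sum_t \langle p_t - p_{t+1}, \lh_t\rangle + \sum_t \langle p_{t+1}, \lh_t\rangle$, where the second sum is controlled by a ``be-the-leader'' argument against an arbitrary comparator $p$ and unbiasedness is invoked only to convert $\EE[\langle p, \lh_t\rangle]$ into $\langle p, \ell_t\rangle$; second, a per-round stability conversion that inverts the Tsallis logarithm algebraically, using $(1+x)^{-r}\ge 1-rx$ and a case split between $\gamma=1$ and $\gamma>1$. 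You instead telescope the potential itself, writing $\sum_t \langle p_t,\lh_t\rangle = \tf(\Lh_T)-\tf(0)+\sum_t \delta_t$ with Bregman terms $\delta_t$, and handle the comparator by the sandwich $\f(L)+\EE[\min_p F] \le \tf(L) \le \f(L)+\EE[\max_p F]$ together with Jensen's inequality, $\EE[\f(\Lh_T)] \le \f(\EE[\Lh_T]) = L_T^*$; this is closer in spirit to the original GBPA decomposition of Abernethy et al.\ than to the paper's argument. Your ray argument reduces $\delta_t$ to the same intermediate quantity $\lh_{t,i_t}(p_{t,i_t}-p_{t+1,i_t})$ that the paper bounds, and your mean-value-theorem conversion using $(\log_\gamma)'(x)=x^{-\gamma}$ is cleaner than the paper's algebra: it treats all $\gamma \ge 1$ uniformly with no case split. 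What the paper's comparator-based route buys is generality: its Lemma~\ref{lem:regret_bandit_full} restricts the comparator to $\Delta_\tau = \{p : p_i \ge \tau,\ \sum_i p_i = 1\}$ at a cost of $\tau N T$, which is essential for the log-barrier regularizer (Part~3 of Theorem~\ref{thm:gbpa_bandits}), where $\max_p F(p)$ over the full simplex is infinite and your sandwich upper bound becomes vacuous; your argument would need an analogous modification (comparing $\langle \Lh_T, \cdot\rangle$ at a point of $\Delta_\tau$ rather than at the vertex minimizer) to recover that version. Both proofs share the same implicit full-support assumption needed for unbiasedness of $\lh_t$.
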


We will now use this lemma to analyze a variety of FTPL and FTRL algorithms. Recall that an algorithm is in the FTPL family when $F(p,Z) = -\langle p, Z \rangle$, and is in the FTRL family when $F(p,Z) = F(p)$ for some deterministic regularization function $F(\cdot)$. There is a slight complication in the FTPL case: for a given $L$ computing the probability $p_{t,i}= \nabla_{i} \tf_{\cD}(L)= \PP\left( i = \argmin_{i'} (L_{i'}- Z_{i'}) \right)$ is intractable even though we can easily draw samples from this probability distribution. A method called Geometric Resampling \citep{neu2013efficient} solves this problem by computing a Monte-Carlo estimate of $1/p_{t,i}$ (which is all that is needed to run Algorithm~\ref{alg:gbpa_bandits}).
They show that the extra error due to this estimation is at most $KT/M$, where $M$ is the maximal number of samples per round that we use for the Monte-Carlo simulation.
This implies that having $M = \Theta(\sqrt{T})$ for the zero-order bound or $M = \Theta(T)$ for the first-order bound would not affect the order of our bounds.
Furthermore, they also prove that the expected number of samples to run the Geometric Resampling is constant per round (see \citep[Theorem 2]{neu2013efficient}). 
For simplicity, we will ignore this estimation step and assume the exact value of $p_{t, i_{t}}$ is available to the learner.

\begin{theorem}[Zero-order and first-order regret bounds for multi-armed bandits]
\label{thm:gbpa_bandits}
Algorithm~\ref{alg:gbpa_bandits} (in Appendix \ref{app:partial_info}) enjoys the following bounds when used with different perturbations/regularizers:
\begin{enumerate}[noitemsep,nolistsep,leftmargin=*]
\item FTPL with Gamma, Gumbel, Fr{\'e}chet , Weibull, and Pareto pertubations (with a proper choice of distribution parameters) all achieve near-optimal expected regret of $O(\sqrt{N T \log N})$.
\item FTRL with Tsallis neg-entropy $F(p) = -\eta \sum_{i=1}^N p_i \log_\alpha (1/p_i)$ for $0 < \alpha < 1$ (with a proper choice of $\eta$) achieves optimal expected regret of $O(\sqrt{N T})$.
\item FTRL with log-barrier regularizer $F(p) = -\eta \sum_{i=1}^N \log p_i$ (with a proper choice of $\eta$) achieves expected regret of $O(\sqrt{N L^*_T \log (NT)} + N \log(NT))$.
\end{enumerate}
\end{theorem}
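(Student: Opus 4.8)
The plan is to handle all three parts with a single pipeline built on Proposition~\ref{prop:dc_to_dp} and Lemma~\ref{lem:regret_bandit}. For each perturbation/regularizer I would (i) show the GBPA potential $\tf$ is $(\gamma,\epsilon)$-differentially consistent for an appropriate exponent $\gamma\in[1,2]$ and a scale-dependent constant $\epsilon$, (ii) invoke Proposition~\ref{prop:dc_to_dp} (its POD hypothesis holding by the remarks preceding it, both for the FTPL case and for separable FTRL) to conclude DiffStable($D_{\infty,\gamma}$,$\|\cdot\|_\infty$) at level $2\epsilon$, (iii) feed this into Lemma~\ref{lem:regret_bandit}, and (iv) bound the two resulting terms and tune the scale. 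The only content that varies across the parts is the pair $(\gamma,\epsilon)$ and the range $\EE[\max_p F - \min_p F]$.

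For the first (stability) term, the unifying step is to take the conditional expectation over $i_t\sim p_t$. Since the importance-weighted estimate has the single nonzero entry $\hat\ell_{t,i_t}=\ell_{t,i_t}/p_{t,i_t}$, one gets $\EE_{i_t}[\hat\ell_{t,i_t}^2 p_{t,i_t}^\gamma]=\sum_i \ell_{t,i}^2 p_{t,i}^{\gamma-1}$. For the two zero-order parts I would bound $\ell_{t,i}^2\le 1$ and use the concavity estimate $\sum_i p_{t,i}^{\gamma-1}\le N^{2-\gamma}$, so the stability term is at most $2\epsilon T N^{2-\gamma}$. For the first-order part I would instead keep the losses and use $\ell_{t,i}^2\le\ell_{t,i}$, which for $\gamma=2$ gives $\sum_i\ell_{t,i}^2 p_{t,i}^{\gamma-1}\le\langle p_t,\ell_t\rangle$; summing and taking expectations bounds the stability term by $2\epsilon\,\EE[\sumtt\ell_{t,i_t}]$, i.e.\ by the algorithm's own expected loss. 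This self-referential bound, together with the comparator $L_T^*$ appearing on the left of Lemma~\ref{lem:regret_bandit}, is what turns a zero-order argument into a first-order one: writing $R=\EE[\sumtt\ell_{t,i_t}]$ gives $R-L_T^*\le 2\epsilon R+\text{range}$, which I solve for $R$ (valid once $2\epsilon<1$) to obtain a bound of the form $O(L_T^*/\eta+\text{range})$.

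The per-case work is then just differential consistency plus the range. For separable FTRL with $F(p)=\sum_i f(p_i)$, differentiating the optimality conditions of $\tf_F(L)=\min_p\langle L,p\rangle+F(p)$ yields $-\nabla^2_{ii}\tf_F\le 1/f''(p_i)$. For the Tsallis neg-entropy this gives $f''(p_i)=\eta\alpha p_i^{\alpha-2}$, hence $(\gamma,\epsilon)=(2-\alpha,\tfrac{1}{\eta\alpha})$ and range $\le\tfrac{\eta}{1-\alpha}(N^{1-\alpha}-1)$; plugging into $2\epsilon TN^{2-\gamma}+\text{range}$ and optimizing $\eta$ yields $O(\sqrt{NT/(\alpha(1-\alpha))})=O(\sqrt{NT})$. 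For the log barrier, $f''(p_i)=\eta/p_i^2$ gives $(\gamma,\epsilon)=(2,\tfrac1\eta)$, which triggers the first-order route above; after solving for $R$ and optimizing $\eta$ (capping $\eta$ to handle small $L_T^*$) one obtains $O(\sqrt{NL_T^*\log(NT)}+N\log(NT))$. For the FTPL family the POD structure is known from \citet{abernethy2014online}, the range reduces to $\EE[\max_i Z_i-\min_i Z_i]$, and the differential-consistency exponent is governed by the hazard rate of the perturbation (e.g.\ $\gamma=1$, $\epsilon=1/\eta$ for Gumbel); balancing $2\epsilon TN^{2-\gamma}$ against the expected perturbation range gives $O(\sqrt{NT\log N})$ in each case.

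I expect two steps to carry the real difficulty. The first is establishing differential consistency with the correct exponent for each FTPL perturbation: the light-tailed cases follow from a bounded-hazard argument with $\gamma=1$, but the heavy-tailed distributions (Fr\'echet, Pareto, and Weibull for some shapes) have unbounded or vanishing hazard and admit differential consistency only for $\gamma>1$, which is precisely where the Tsallis max-divergence becomes indispensable; each distribution needs its own hazard-rate and expected-range (extreme-value) computation under a tuned scale parameter. The second obstacle is the log-barrier range: $\max_p(-\eta\sum_i\log p_i)$ is $+\infty$ on the open simplex, so Lemma~\ref{lem:regret_bandit} cannot be applied verbatim. I would run the GBPA on the clipped simplex $\{p:p_i\ge 1/(NT)\}$, which makes the range $O(\eta N\log(NT))$ while biasing the comparison to $L_T^*$ by only $O(1)$; verifying that this clipping is consistent with the differential-consistency and POD bounds, and that the induced bias is genuinely negligible, is the delicate point of Part 3.
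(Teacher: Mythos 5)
Your pipeline is the paper's pipeline: differential consistency $\to$ Proposition~\ref{prop:dc_to_dp} $\to$ Lemma~\ref{lem:regret_bandit} $\to$ conditional expectation over $i_t\sim p_t$ plus Jensen $\to$ tune the scale. Parts 1 and 2 match the paper's proof essentially step for step, including the exponents $(2-\alpha,\tfrac{1}{\eta\alpha})$ and $(2,\tfrac1\eta)$, the bound $\sum_i p_{t,i}^{\gamma-1}\le N^{2-\gamma}$, the self-referential bound $\EE[\hat\ell_{t,i_t}^2p_{t,i_t}^2]\le\EE[\ell_{t,i_t}]$ for the log barrier, and the recursion solved by capping $\eta$ for small $L_T^*$. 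One minor miscalibration: the paper (following \citet{abernethy2015fighting}, Table~\ref{tab:dist}) runs Part 1 entirely at $\gamma=1$; with tuned parameters (e.g.\ Fr\'echet and Pareto with shape $\log N$, slight modifications for Weibull/Pareto) all five perturbations have bounded hazard and hence $(1,\epsilon)$-differential consistency with $\EE\|Z\|_\infty=O(\log N/\epsilon)$, so no Tsallis divergence is needed for FTPL; your expectation that the heavy-tailed cases force $\gamma>1$ is not how the known per-distribution work resolves.

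The genuine gap is in your fix for Part 3. You correctly identify that $\max_p F=+\infty$ breaks Lemma~\ref{lem:regret_bandit} verbatim, but your remedy --- running the GBPA itself on the truncated simplex $\{p:p_i\ge 1/(NT)\}$ --- is the problematic route, and you leave its crux unverified. On a truncated domain the inequality constraints $p_i\ge\tau$ can become active, and then the Lagrangian/implicit-function computations behind both the POD property (Proposition~\ref{prop:POD}) and the $(2,1/\eta)$-differential consistency of the log-barrier potential (Lemma~\ref{lem:logdc}) no longer apply as stated: the argmin map is only piecewise smooth, and the Hessian identities used in Proposition~\ref{prop:dc_to_dp} need the unconstrained-interior structure. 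The paper avoids this entirely: the algorithm is left untouched (the log barrier keeps its iterates interior automatically), and instead only the \emph{comparator} in the be-the-leader step is restricted to $\Delta_\tau=\{p:p_i\ge\tau\}$. This yields the generalized bound of Lemma~\ref{lem:regret_bandit_full}, with range term $\EE[\max_{p\in\Delta_\tau}F(p,Z)-F(p_0,Z)]\le\eta N\log(1/\tau)$ and an additive $\tau NT$, which is $O(1)$ at $\tau=1/(NT)$. Note that this cleaner fix is already available inside your own argument: the be-the-leader inequality holds for \emph{any} fixed comparator $p$, so you may choose $p\in\Delta_\tau$ in the analysis without changing the algorithm, and then only the maximum in the range term needs to be taken over $\Delta_\tau$. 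With that substitution your Part 3 closes exactly as the paper's does; as written, the ``delicate point'' you flag is a real missing step, not a routine verification.
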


The proofs of the above results use the one-step differential stability as the unifying theme: in Part 1 we establish stability w.r.t. $D_\infty$, in Part 2, w.r.t. $D_{\infty,2-\alpha}$, and in Part 3, w.r.t. $D_{\infty,2}$.
Parts 1-2 essentially rederive the results of \citet{abernethy2015fighting} in the differential stability framework. Part 3 is quite interesting since it uses the strongest stability notion used in this paper (w.r.t. $D_{\infty, 2}$). First-order regret bounds for multi-armed bandits have been obtained via specialized analysis several times \citep{allenberg2006hannan,neu2015first,rakhlin2013online,stoltz2005incomplete}. Such is the obscure nature of these analyses that in one case the authors claimed novelty without realizing that earlier first-order bounds existed! The intuition behind such analyses remained a bit unclear. Our analysis of the log-barrier regularizer clearly indicates why it enjoys first-order bounds (ignoring $\log(NT)$ term): the resulting full information algorithm enjoys a particularly strong form of one-step differential stability.

\subsection{GBPA for bandits with experts}


We believe that our unified differential stability based analysis of adversarial bandits can be extended to more complex partial information settings. We provide evidence for this by considering the problem of {\em adversarial bandits with experts}. In this more general problem, which was introduced in the same seminal work that introduced the adversarial bandits problem~\citep{auer2002nonstochastic}, there are $K$ actions and $N$ experts, $E_1,\ldots,E_N$, that at each round $t$, give advice on which of $K$ actions to take. The algorithm is supposed to combine their advice to pick a distribution $q_t \in \simplex{K}$ and chooses an action $j_t \sim q_t$. Denote the suggestion of the $i$th expert at time $t$ as $E_{i,t}\in [K]$. Expected regret in this problem is defined as $\EE\left[ \sum_{t=1}^T \ell_{t,j_t} \right] - L^*_T$, where $L^*_T$ is now defined as $L^*_T = \min_{i=1}^N \sum_{t=1}^T \ell_{t,E_{i,t}}$.

\begin{algorithm}[t]
\caption{GBPA for bandits with experts problem}
\label{alg:gbpa_bwe}
\begin{algorithmic}[1]
{\small
\STATE \textbf{Input:} Concave potential $\tf : \RR^N \to \RR$, with $\nabla \tf \in \simplex{N}$, clipping threshold $0 \le \rho < 1/K$
\STATE Set $\overline{\phi}_0 =0 \in \RR^N$
\FOR {$t = 1$ to $T$}
\STATE \textbf{Probabilities over experts via gradient:} $p_t = \nabla \tf(\overline{\phi}_{t-1}) \in \simplex{N}$
\STATE \textbf{Convert probabilities from experts to actions:} $q_t = \psi_t(p_t) = \sum_{j=1}^K \sum_{i:E_{i,t} = j} p_{t,i} \e_j \in \simplex{K}$
\STATE \textbf{Clipping (optional):} $\tilde{q}_t = C_\rho(q_t)$ where $C_\rho$ is defined in \eqref{eq:clipping}
\STATE \textbf{Sampling:} Choose $j_t \in [K]$ according to distribution $q_{t}$ (or $\tilde{q}_t$ if clipping)
\STATE \textbf{Loss:} Incur loss $\ell_{t,j_t}$ and observe this value
\STATE \textbf{Estimation:} Compute an estimate of loss vector $\lh_{t} = \frac{\ell_{t,j_t}}{q_{t, j_t}}\e_{j_t} \in \RR^K$ (or $\frac{\ell_{t,j_t}}{\tilde{q}_{t, j_t}}\e_{j_t}$ if clipping)
\STATE \textbf{Convert estimate from actions to experts:} $\phi_t(\hat{\ell}_t) = \sum_{j=1}^K \sum_{i:E_{i,t} = j} \hat{\ell}_{t,j} \e_i \in \RR^{N}$
\STATE \textbf{Update:} $\overline{\phi}_{t} = \overline{\phi}_{t-1} + \phi_t(\hat{\ell}_t)$ 
\ENDFOR
}
\end{algorithmic}
\end{algorithm}

The GBPA for this setting has a few more ingredients in it compared to the one for the multi-armed bandits. First, a transformation $\psi_t$ to convert $p_t \in \simplex{N}$, a distribution over experts, to $q_t \in \simplex{K}$, a distribution over actions: $\psi_t(p_t) = \sum_{j=1}^K \sum_{i:E_{i,t} = j} p_{t,i} \e_j $ , where $\e_j$ is the $j$th basis vector in $\RR^K$. Note that the probability assigned to each action is the sum of the probabilities of all the experts that recommended that action. Second, a transformation $\phi_t$ to convert the loss estimate $\hat{\ell}_t \in \RR^K$ defined by $\hat{\ell}_{t,j_t} = \ell_{t,j_t}/q_{t,j_t}$ (and zero for $j \neq j_t$) into a loss estimate in $\RR^N$: $\phi_t(\hat{\ell}_t) = \sum_{j=1}^K \sum_{i:E_{i,t} = j} \hat{\ell}_{t,j} \e_i $, where $\e_i$ is the $i$th basis vector in $\RR^N$. At time $t$, the full information algorithm's output $p_t$ is used to select the action distribution $q_t = \psi_t(p_t)$ and the full information algorithm is fed $\phi_t(\hat{\ell}_t)$ to update $p_t$. Note that $\psi_t$ and $\phi_t$ are defined such that $\langle \psi_t(p), \hat{\ell}  \rangle = \langle p, \phi_t(\hat{\ell}) \rangle$ for any $p \in \simplex{N}$ and any $\hat{\ell} \in \RR_+^K$. Lastly, the clipping function $C_\rho: \simplex{K} \to \simplex{K}$ is defined as:
\begin{equation}\label{eq:clipping}
[C_\rho(q)]_j = 
\begin{cases}
\frac{q_j}{1-\sum_{j':q_j'<\rho} q_{j'}} & \text{ if } q_j \ge \rho \\
0 & \text{ if } q_j < \rho
\end{cases}.
\end{equation}
It sets the probability weights that are less than $\rho$ to $0$ and scales the rest to make it a distribution.

The clipping step (step 6) is optional. In fact, we can prove the zero-order bound without clipping, but this step becomes crucial to show the first-order bound. The main intuition is to bound the size of the loss estimate $\lh_{t}$. With clipping, we can ensure $\|\lh_{t}\|_{\infty} \le 1/\rho$ for all $t$, which provides a better control on the one-step stability. The regret bounds in bandits with experts setting appear below.
\begin{theorem}
[Zero-order and first-order regret bounds for bandits with experts]
\label{thm:gbpa_bwe}
Algorithm~\ref{alg:gbpa_bwe} enjoys the following bounds when used with different perturbations such as Gamma, Gumbel, Fr{\'e}chet , Weibull, and Pareto (with a proper choice of parameters).
\begin{enumerate}[noitemsep,nolistsep,leftmargin=*]
\item
With no clipping, it achieves near-optimal expected regret of $O(\sqrt{KT \log N})$.
\item
With clipping, it achieves expected regret of $O\left( \left( K \log N \right)^{1/3} (L^*_T)^{2/3} \right)$.
\end{enumerate}
\end{theorem}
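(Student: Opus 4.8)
The plan is to reduce Algorithm~\ref{alg:gbpa_bwe} to the multi-armed bandit GBPA (Algorithm~\ref{alg:gbpa_bandits}) run on the $N$-dimensional \emph{expert} space, and then invoke the appropriate analogue of Lemma~\ref{lem:regret_bandit}. The key observation is that if we write $g_{t,i} = \ell_{t,E_{i,t}}$ for the loss incurred by expert $i$, then $g_t = \phi_t(\ell_t)$, the comparator is $L^*_T = \min_i \sumtt g_{t,i}$, and the duality $\langle \psi_t(p), \lh \rangle = \langle p, \phi_t(\lh)\rangle$ gives $\langle p_t, g_t \rangle = \langle q_t, \ell_t\rangle$. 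Thus the algorithm is precisely the MAB-GBPA fed the estimates $\hat{g}_t = \phi_t(\lh_t)$, and I would first prove a bandits-with-experts version of Lemma~\ref{lem:regret_bandit}: if the full-information GBPA is DiffStable$(D_{\infty,\gamma},\|\cdot\|_\infty)$ at level $\epsilon$, then
\[
\EE\Big[\sumtt \ell_{t,j_t}\Big] - L^*_T \;\le\; \frac{1}{1-K\rho}\Big(\epsilon\,\EE\Big[ \sumtt \sum_{i} \hat{g}_{t,i}^2\, p_{t,i}^\gamma \Big] + \EE\big[ \max_p F - \min_p F \big]\Big) + \frac{K\rho}{1-K\rho}\, L^*_T .
\]
This is obtained by running the proof of Lemma~\ref{lem:regret_bandit} in the expert space with the transforms $\psi_t,\phi_t$ inserted. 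The two new features are the clipping rescaling: since $\EE_{j_t}[\langle p_t, \hat{g}_t\rangle] = (1-c_t)\langle \tilde{q}_t, \ell_t\rangle$ with $c_t = \sum_{j:q_{t,j}<\rho} q_{t,j} \le K\rho$, the true loss $\langle \tilde q_t,\ell_t\rangle$ picks up a factor $1/(1-c_t) \le 1/(1-K\rho)$; and the clipping bias in the comparator, $\EE[\sumtt \hat{g}_{t,i^*}] = L^*_T - \sumtt g_{t,i^*}\ind[q_{t,E_{i^*,t}}<\rho]$, which is favorable (nonnegative) and is simply dropped. When $\rho=0$ there is no clipping, $c_t=0$, the estimates are unbiased, and the bound collapses to the exact analogue of Lemma~\ref{lem:regret_bandit}.

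For Part~1 (no clipping, $\gamma=1$) I would bound the stability term using $\sum_{i:E_{i,t}=j}p_{t,i}=q_{t,j}$: taking expectation over $j_t \sim q_t$,
\[
\EE\Big[\textstyle\sum_i \hat{g}_{t,i}^2\, p_{t,i}\Big] = \sum_j \frac{\ell_{t,j}^2}{q_{t,j}}\, q_{t,j} = \sum_j \ell_{t,j}^2 \le K ,
\]
so the stability term is at most $\epsilon K T$. The FTPL perturbations (with the scalings and POD/differential-consistency facts from Proposition~\ref{prop:dc_to_dp} and Theorem~\ref{thm:analysis_experts}) give DiffStable$(D_\infty,\|\cdot\|_\infty)$ at level $\epsilon = O(1/\eta)$ and range $\EE[\max_p F - \min_p F] = O(\eta \log N)$. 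The bound thus becomes $O(\epsilon K T + \log N/\epsilon)$, and optimizing $\epsilon$ yields $O(\sqrt{KT\log N})$.

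For Part~2 (clipping, still $\gamma=1$) the threshold $\rho$ does double duty. Because $j_t$ is drawn from $\tilde{q}_t = C_\rho(q_t)$ we have $\tilde{q}_{t,j_t}\ge\rho$ and $q_{t,j_t}\le\tilde{q}_{t,j_t}$, so
\[
\textstyle\sum_i \hat{g}_{t,i}^2\, p_{t,i} = \frac{\ell_{t,j_t}^2\, q_{t,j_t}}{\tilde{q}_{t,j_t}^2} \le \frac{\ell_{t,j_t}^2}{\tilde{q}_{t,j_t}} \le \frac{\ell_{t,j_t}}{\rho} ,
\]
which converts the second-order stability penalty into a \emph{loss-proportional} quantity: $\epsilon\,\EE[\sum_i \hat{g}_{t,i}^2 p_{t,i}] \le (\epsilon/\rho)\,\EE[\sumtt \ell_{t,j_t}] = (\epsilon/\rho)(L^*_T + \hat{R})$, where $\hat{R}$ is the regret. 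Substituting this and the range $O(\log N/\epsilon)$ into the adapted lemma, and using $K\rho\le 1/2$ and $\epsilon\le\rho/4$ to absorb $\hat{R}$ and the $1/(1-K\rho)$ factors into constants, gives
\[
\hat{R} \;\lesssim\; \frac{\epsilon}{\rho}\, L^*_T + \frac{\log N}{\epsilon} + K\rho\, L^*_T .
\]
Optimizing first $\epsilon = \sqrt{\rho \log N / L^*_T}$ to balance the first two terms, then $\rho = (\log N/(K^2 L^*_T))^{1/3}$ to balance the result against the clipping cost $K\rho\, L^*_T$, produces the claimed $O((K\log N)^{1/3}(L^*_T)^{2/3})$.

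The hard part will be establishing the adapted lemma with the correct clipping accounting: one must verify that clipping simultaneously bounds $\|\lh_t\|_\infty \le 1/\rho$ so the stability penalty becomes loss-proportional (enabling a first-order bound from a $\gamma=1$, i.e.\ plain $D_\infty$, perturbation), and costs only the \emph{first-order} term $K\rho\, L^*_T$ rather than an $O(K\rho\, T)$ term, while the comparator bias is benign. Finally, since $\rho$ and $\epsilon$ in Part~2 depend on the unknown $L^*_T$, a standard doubling argument over $L^*_T$ is needed, which affects the bound only through constants and the stated logarithmic factors.
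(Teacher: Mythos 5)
Your proposal is correct and takes essentially the same route as the paper's own proof: the identical be-the-leader plus one-step-stability decomposition run in expert space via $\psi_t,\phi_t$ (the paper's Lemmas~\ref{lem:regret_bwe}--\ref{lem:gammastability_bwe} and the clipped-case bound~\eqref{eq:bwe_main}), the same clipping accounting (the $(1-K\rho)$ loss factor, the favorable downward comparator bias, and $\|\lh_t\|_\infty \le 1/\rho$ turning the stability penalty into a loss-proportional term), and the same recursive inequality in the regret with two-parameter tuning. The only differences are cosmetic---how the $1/(1-K\rho)$ factor is packaged, the order in which $\epsilon$ and $\rho$ are optimized, and your added doubling-trick remark for unknown $L^*_T$.
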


The zero-order bound in Part 1 above was already shown for the celebrated EXP4 algorithm by \citet{auer2002nonstochastic}. Furthermore, \citet{agarwal2017open} proved that, with clipping, EXP4 also enjoys a first-order bound with $O((L^*_T)^{2/3})$ dependence. Our theorem shows that EXP4 is not special in enjoying these bounds. The same bounds continue to hold for a variety of perturbation based algorithms. Such a result does not appear in the literature to the best of our knowledge. We note here that achieving $O(\sqrt{L^*_T})$ bounds in this setting was posed as on open problem by \citet{agarwal2017open}. This problem was recently solved by the algorithm MYGA \citep{allen2018make}. MYGA does achieve the optimal first-order bound, but the algorithm is not simple in that it has to maintain $\Theta(T)$ auxiliary experts in every round. In contrast, our algorithms are simple as they are all instances of GBPA along with the clipping idea.

\subsubsection*{Acknowledgments}
Part of this work was done while AM was visiting the Simons Institute for the Theory of Computing.  AM was supported by NSF grant CCF-1763786, a Sloan Foundation Research Award, and a postdoctoral fellowship from BU's Hariri Institute for Computing. AT and YJ were supported by NSF CAREER grant IIS-1452099. AT was also supported by a Sloan Research Fellowship. JA was supported by NSF CAREER grant IIS-1453304.

\bibliographystyle{plainnat}
\bibliography{neurips19}

\clearpage 
\newpage

\appendix
\onecolumn

\section{Proofs for Section~\ref{sec:prelim}}
\label{app:prelim}
We provide the proofs of the claims made in Section \ref{sec:prelim}.

\subsection{Tsallis $\gamma$-max-divergence gives tighter stability for larger $\gamma$}

\begin{proposition}\label{lem:increasing}
Fix a distribution pair $P$ and $Q$. Then the function $D_{\infty,\gamma}(P,Q)$ is non-decreasing in $\gamma$ for $\gamma > 0$.
\end{proposition}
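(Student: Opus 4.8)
The plan is to reduce the statement to a pointwise monotonicity fact, and the key device is an integral representation of the Tsallis logarithm. First I would record that, for every $x>0$ and every $\gamma$ (the case $\gamma=1$ being recovered by continuity),
\[
\log_\gamma(x) = \int_1^x t^{-\gamma}\,dt,
\]
which one verifies by evaluating the elementary integral separately for $\gamma=1$ and $\gamma\neq 1$. Writing $p=P(B)$ and $q=Q(B)$, this yields the clean expression
\[
\Phi_B(\gamma) := \log_\gamma(P(B)) - \log_\gamma(Q(B)) = \int_q^p t^{-\gamma}\,dt,
\]
so that $D_{\infty,\gamma}(P,Q)=\sup_B \Phi_B(\gamma)$, a supremum of the one-parameter family $\{\Phi_B\}_B$.

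Next I would argue that the supremum is effectively taken only over sets with $P(B)\ge Q(B)$. Since $\tfrac{d}{dx}\log_\gamma(x)=x^{-\gamma}>0$, the map $\log_\gamma$ is strictly increasing, so $\Phi_B(\gamma)<0$ whenever $P(B)<Q(B)$. On the other hand, taking $B$ to be the whole space gives $P(B)=Q(B)=1$ and hence $\Phi_B(\gamma)=0$, so $D_{\infty,\gamma}(P,Q)\ge 0$ and the sets with $P(B)<Q(B)$ can never attain the supremum. Thus it suffices to prove that $\gamma\mapsto\Phi_B(\gamma)$ is non-decreasing for each fixed $B$ with $P(B)\ge Q(B)$, after which the conclusion follows because a supremum of non-decreasing functions is non-decreasing: for $\gamma_1<\gamma_2$ we have $\Phi_B(\gamma_1)\le\Phi_B(\gamma_2)\le D_{\infty,\gamma_2}(P,Q)$ for every admissible $B$, and taking the sup over $B$ gives $D_{\infty,\gamma_1}(P,Q)\le D_{\infty,\gamma_2}(P,Q)$.

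The main computation is then to differentiate under the integral sign. For fixed $B$ with $p=P(B)\ge Q(B)=q$,
\[
\Phi_B'(\gamma) = -\int_q^p t^{-\gamma}\log t\,dt .
\]
This is exactly where the fact that probabilities lie in $[0,1]$ is used: since $0\le q\le p\le 1$, the integration variable ranges over $(0,1]$, on which $\log t\le 0$, so the integrand $t^{-\gamma}\log t$ is nonpositive and $\Phi_B'(\gamma)\ge 0$. Hence each $\Phi_B$ (restricted to $P(B)\ge Q(B)$) is non-decreasing in $\gamma$, completing the argument. I emphasize that the bound $p\le 1$ is essential: for $p>q>1$ the integrand would be nonnegative and the monotonicity would reverse.

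The one place needing care, and the main obstacle, is the boundary case $Q(B)=0$ with $P(B)>0$, where $\log_\gamma(0)=-\infty$ for $\gamma\ge 1$ (so $\Phi_B=+\infty$) while $\Phi_B$ is finite for $\gamma<1$. The integral representation handles this uniformly, since $\int_0^p t^{-\gamma}\,dt$ diverges precisely when $\gamma\ge 1$ and is increasing in $\gamma$ throughout; allowing the value $+\infty$, the monotonicity conclusion is unaffected. I would also note briefly that differentiation under the integral is justified on any compact $\gamma$-interval on which the integral remains finite, and that the $\gamma=1$ value is obtained by continuity of $\log_\gamma$ in $\gamma$.
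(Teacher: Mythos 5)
Your proof is correct, and it shares the paper's overall skeleton: both arguments first use non-negativity of the divergence (take $B$ to be the whole space) to restrict attention to sets with $P(B)\ge Q(B)$, and both ultimately rest on the same elementary fact that $t^{-\gamma}\le t^{-\gamma'}$ for $t\in(0,1]$ and $\gamma\le\gamma'$. The key step, however, is executed by a different device. The paper fixes $\gamma\le\gamma'$, rearranges the desired inequality into the statement that $f(p)=\log_\gamma p-\log_{\gamma'}p$ is non-increasing on $(0,1]$, and checks $f'(p)=p^{-\gamma}-p^{-\gamma'}\le 0$; you instead fix $B$ and vary $\gamma$, using the integral representation $\log_\gamma x=\int_1^x t^{-\gamma}\,dt$ so that the quantity to be maximized becomes $\Phi_B(\gamma)=\int_q^p t^{-\gamma}\,dt$, which is then shown to be non-decreasing in $\gamma$. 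Your route buys two things: a uniform treatment of $\gamma=1$ versus $\gamma\neq 1$ (no case split in the definition of $\log_\gamma$), and an explicit, clean handling of the edge case $Q(B)=0<P(B)$, where the paper's rearrangement step would formally involve $\infty-\infty$ when $\gamma,\gamma'\ge 1$ (the paper implicitly restricts its monotonicity claim to arguments in $(0,1]$). The paper's route buys minimalism: it needs only one-variable calculus and never differentiates under an integral sign. In fact you can have both advantages at once: since the integrand $t^{-\gamma}$ is pointwise non-decreasing in $\gamma$ for each $t\in(0,1]$ and you integrate over $[q,p]\subseteq[0,1]$ with $q\le p$, the monotonicity of $\Phi_B$ in $\gamma$ follows directly from monotonicity of the integrand, with no differentiation in $\gamma$ at all; this removes the only step of your argument requiring a justification (the one you flag in your final paragraph) while preserving the unified treatment of the degenerate cases.
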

\begin{proof}
Since $D_{\infty,\gamma}(P,Q) \ge 0$ (obvious by setting $B$ to be the entire space in~\eqref{eq:tsallismaxdiv}), we can fix a set $B$ with $P(B) \ge Q(B)$ and simply show that,
for $0 < \gamma \le \gamma'$,
\[
\log_\gamma P(B) - \log_\gamma Q(B) \le \log_{\gamma'} P(B) - \log_{\gamma'} Q(B) .
\]
This is equivalent to
\[
\log_\gamma P(B) -  \log_{\gamma'} P(B) \le \log_\gamma Q(B) - \log_{\gamma'} Q(B) .
\]
Since $0 \le Q(B) \le P(B) \le 1$, the above inequality will follow if we establish that the function
\[
f(p) = \log_\gamma p - \log_{\gamma'} p
\]
is non-increasing for $p \in (0,1]$. We can indeed verify this by taking the derivative
\[
f'(p) = p^{-\gamma} - p^{-\gamma'},
\]
which is non-positive since $p \le 1$ and $\gamma' \ge \gamma > 0$.
\end{proof}

\subsection{Post-processing inequality under deterministic mappings}

\begin{proposition}\label{lem:deterministic}
Let $X, Y$ be random variables taking values in some space $\cB$ and let $f:\cB \to \cB'$ be a measurable function. Then $D_{\infty,\gamma}(f(X),f(Y)) \le D_{\infty, \gamma}(X,Y)$.
\end{proposition}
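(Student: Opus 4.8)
The whole statement reduces to the elementary observation that the Tsallis $\gamma$-max-divergence is defined as a supremum over measurable sets, and that applying a deterministic map only shrinks the family of sets over which we get to optimize. Concretely, I would unwind the definition~\eqref{eq:tsallismaxdiv} for the pushed-forward variables and then use the change-of-variables (pushforward) identity for probabilities.

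First I would fix a measurable set $B' \subseteq \cB'$ and recall that, since $f$ is measurable, $f^{-1}(B')$ is a measurable subset of $\cB$. The defining identity for the law of $f(X)$ is the pushforward relation $P_{f(X)}(B') = P_X(f^{-1}(B'))$, and likewise $P_{f(Y)}(B') = P_Y(f^{-1}(B'))$. Substituting these into the definition gives
\[
D_{\infty,\gamma}(f(X),f(Y)) = \sup_{B' \subseteq \cB'} \Big( \log_\gamma\big(P_X(f^{-1}(B'))\big) - \log_\gamma\big(P_Y(f^{-1}(B'))\big) \Big),
\]
where the supremum ranges over measurable $B' \subseteq \cB'$.

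Next I would note that as $B'$ ranges over all measurable subsets of $\cB'$, the preimage $f^{-1}(B')$ ranges only over a \emph{sub-collection} $\{f^{-1}(B') : B' \subseteq \cB' \text{ measurable}\}$ of the measurable subsets of $\cB$. Hence the supremum above is taken over a smaller family than the supremum defining $D_{\infty,\gamma}(X,Y)$, which is over \emph{all} measurable $B \subseteq \cB$. Since taking a supremum over a subset of a collection can only decrease (or keep equal) its value, we get
\[
D_{\infty,\gamma}(f(X),f(Y)) \le \sup_{B \subseteq \cB}\Big( \log_\gamma\big(P_X(B)\big) - \log_\gamma\big(P_Y(B)\big)\Big) = D_{\infty,\gamma}(X,Y),
\]
which is exactly the claim.

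There is no real obstacle here: the argument is a one-line measurability-plus-monotonicity-of-sup fact, and it works uniformly for every $\gamma$ because $\log_\gamma$ never enters the comparison except as a fixed per-set transformation. The only points requiring a word of care are (i) recording that measurability of $f$ guarantees $f^{-1}(B')$ is measurable so that it is an admissible test set in the definition of $D_{\infty,\gamma}(X,Y)$, and (ii) the pushforward identity, which is just the definition of the distribution of $f(X)$. I would therefore expect the proof to be genuinely short; the substance is entirely in setting up the inclusion of test-set families correctly.
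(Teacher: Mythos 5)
Your proof is correct and follows essentially the same route as the paper's: fix a measurable test set in the codomain, rewrite the pushforward probabilities as probabilities of preimages, and bound by the supremum defining $D_{\infty,\gamma}(X,Y)$. The only cosmetic difference is that you phrase the final step as a supremum over a sub-collection of test sets, whereas the paper bounds the per-set quantity directly; these are the same argument.
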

\begin{proof} 
Fix an arbitrary set $B \subseteq \cB'$. We have
\begin{align*}
\quad \log_{\gamma} \mathbb{P}(f(X) \in B)) -\log_{\gamma} \mathbb{P}( f(Y) \in B)) 
&= \log_{\gamma} \mathbb{P}(X \in f^{-1}(B)) -\log_{\gamma} \mathbb{P}(Y \in f^{-1}(B)) \\
&\le D_{\infty, \gamma}(X,Y) . \qedhere
\end{align*}
\end{proof}


\section{Proofs for Section~\ref{sec:full_info}}
\label{app:full_info}
This section contains full proofs that are either skipped or simplified in Section \ref{sec:full_info}.

\subsection{Key Lemma}

We first record as a lemma the following characterization of $\delta$-approximate max-divergence provided by \citet{dwork2010boosting}. 

\begin{lemma}
\label{lem:delta_pl}
\citep[Lemma~2.1.1]{dwork2010boosting} Let $Y, Z$ be random variables over $\cB$. Then,
$D_{\infty}^{\delta}(Y,Z) \leq \epsilon$, if and only if there exits a random variable $Y'$ such that 
\begin{enumerate}[(i),nosep]
\setlength{\topsep}{0pt}
	\item $\sup_{\tB \subseteq \cB} |\PP[Y \in \tB] - \PP[Y' \in \tB]| \leq \delta$ and
	\item $D_{\infty}(Y',Z) \leq \epsilon$. 
\end{enumerate}
\end{lemma}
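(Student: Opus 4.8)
The plan is to first reduce both sides of the equivalence to pointwise comparisons between the underlying measures. Writing $P_Y(B) = \PP[Y \in B]$ and similarly for $P_Z, P_{Y'}$, I would observe that $D_{\infty}^{\delta}(Y,Z) \le \epsilon$ is equivalent to the statement that $P_Y(B) \le e^\epsilon P_Z(B) + \delta$ holds for \emph{every} measurable $B \subseteq \cB$: for sets with $P_Y(B) > \delta$ this is just a rearrangement of the definition in \eqref{eq:max_divergence}, and for sets with $P_Y(B) \le \delta$ it is trivial since the right-hand side is at least $\delta$. Likewise, $D_\infty(Y',Z) \le \epsilon$ is equivalent to $P_{Y'}(B) \le e^\epsilon P_Z(B)$ for every $B$, and condition (i) is precisely the statement that $Y$ and $Y'$ are within total-variation distance $\delta$.

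The ($\Leftarrow$) direction is then immediate. Given such a $Y'$, any $B$ with $P_Y(B) > \delta$ satisfies $P_Y(B) - \delta \le P_{Y'}(B) \le e^\epsilon P_Z(B)$, where the first inequality uses (i) and the second uses (ii); taking logarithms and the supremum over such $B$ yields $D_\infty^\delta(Y,Z) \le \epsilon$.

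For the ($\Rightarrow$) direction, which is the substantive part, I would construct $Y'$ explicitly by capping the density of $Y$ at $e^\epsilon$ times that of $Z$ and redistributing the removed mass. Fix a common dominating measure $\mu$ (e.g. $\mu = P_Y + P_Z$), let $p_Y, p_Z$ be the corresponding densities, and set $\tilde p = \min(p_Y,\, e^\epsilon p_Z)$ together with the overflow set $S = \{x : p_Y(x) > e^\epsilon p_Z(x)\}$. The deficit mass is $m := \int (p_Y - \tilde p)\, d\mu = P_Y(S) - e^\epsilon P_Z(S)$, and applying the hypothesis $P_Y(B) \le e^\epsilon P_Z(B) + \delta$ to the particular set $B = S$ gives $m \le \delta$. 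Since $\tilde p \le e^\epsilon p_Z$ everywhere, the total available slack is $\int (e^\epsilon p_Z - \tilde p)\, d\mu = e^\epsilon - (1 - m) \ge m$, so I can add back a nonnegative function $g$, supported off $S$, with $\int g\, d\mu = m$ and $\tilde p + g \le e^\epsilon p_Z$ (for instance, take $g$ proportional to the slack $e^\epsilon p_Z - \tilde p$, which vanishes on $S$). Define $Y'$ to have density $p_{Y'} = \tilde p + g$, which integrates to $1$.

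Finally I would verify the two conditions. Condition (ii) holds by construction, since $p_{Y'} \le e^\epsilon p_Z$ pointwise. For (i), note that $p_Y - p_{Y'}$ equals $p_Y - e^\epsilon p_Z > 0$ on $S$ and equals $-g \le 0$ off $S$, so its positive part is supported on $S$; hence $\sup_B \bigl(P_Y(B) - P_{Y'}(B)\bigr) = \int_S (p_Y - e^\epsilon p_Z)\, d\mu = m \le \delta$, and because $P_Y$ and $P_{Y'}$ are both probability measures this supremum equals the total-variation distance, giving (i). I expect the main obstacle to be exactly this forward construction: one must choose the overflow set $S$ so that the hypothesis pins the removed mass at $m \le \delta$, and then check that there is enough room under the cap $e^\epsilon p_Z$ to reinsert that mass while keeping the total-variation perturbation from $Y$ no larger than $\delta$.
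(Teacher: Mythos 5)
Your proof is correct, and it is essentially the canonical argument: the paper itself does not prove this lemma (it is imported verbatim from \citet[Lemma~2.1.1]{dwork2010boosting}), and your construction---capping the density of $Y$ at $e^{\epsilon}$ times that of $Z$ on the overflow set $S=\{p_Y > e^{\epsilon}p_Z\}$, bounding the removed mass $m$ by $\delta$ via the hypothesis applied to $B=S$, and redistributing $m$ into the slack off $S$---is exactly the ``move at most $\delta$ probability mass'' argument the paper sketches informally in the remark following the lemma, and matches the original proof. The only implicit assumption worth flagging is $\epsilon \ge 0$, which your slack computation $e^{\epsilon}-(1-m)\ge m$ uses and which is standard in this setting.
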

In short, we can alter $Y$ into $Y'$ by moving no more than $\delta$ probability mass from $\{\tb \in \cB: \PP[Y=\tb] > e^\epsilon \PP[Z=\tb]\}$ to $\{\tb \in \cB:  \PP[Y=\tb] \le e^{\epsilon} \PP[Z=\tb]\}$ such that $D_{\infty}(Y',Z)$ is bounded. Then in the following lemma, we can show that closeness in max-divergence means that expectations of bounded functions are close. In the result below, when $\delta=0$, we are allowed to have $F=\infty$.

\begin{lemma}
\label{lem:dp_func}
	Let $Y$ and $Z$ be random variables taking values in $\cB$ such that $D_{\infty}^{\delta} (Y,Z)\leq \epsilon$. Then for any non-negative function $f:\cB \to [0, F]$, we have
	\begin{equation*}
	\EE[f(Y)] \leq e^\epsilon \EE[f(Z)] + \delta F .
	\end{equation*}
\end{lemma}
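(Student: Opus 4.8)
The plan is to use the characterization in Lemma~\ref{lem:delta_pl} to reduce to the pure ($\delta=0$) max-divergence case together with a total-variation correction. First I would invoke Lemma~\ref{lem:delta_pl} to produce a random variable $Y'$ satisfying $\sup_{B}|\PP[Y \in B] - \PP[Y' \in B]| \le \delta$ and $D_\infty(Y', Z) \le \epsilon$. The quantity $\sup_B |\PP[Y\in B]-\PP[Y'\in B]|$ is exactly the total variation distance between $Y$ and $Y'$, so the idea is to pay a $\delta F$ cost to swap $Y$ for $Y'$, and then exploit the clean multiplicative bound that comes from a pure max-divergence guarantee.

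Concretely, I would split $\EE[f(Y)] = \EE[f(Y')] + \left(\EE[f(Y)]-\EE[f(Y')]\right)$. Because $f$ takes values in $[0,F]$, the difference of expectations is controlled by the total variation distance: $\EE[f(Y)] - \EE[f(Y')] \le F \sup_B |\PP[Y\in B]-\PP[Y'\in B]| \le \delta F$. This is the standard fact that the expectations of a function whose range has length at most $F$ differ by at most $F$ times the total variation distance; here the range sits inside $[0,F]$.

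The main step is to bound $\EE[f(Y')] \le e^\epsilon \EE[f(Z)]$ using only $D_\infty(Y',Z)\le\epsilon$, i.e.\ that $\PP[Y' \in B] \le e^\epsilon \PP[Z \in B]$ for every measurable $B$, which is precisely what \eqref{eq:max_divergence} gives with $\delta=0$. Since I do not want to assume densities exist, I would use the layer-cake representation $\EE[f(Y')] = \int_0^\infty \PP[f(Y')>s]\,ds$, valid as $f \ge 0$. For each level $s$ the superlevel set $B_s = \{b : f(b) > s\}$ is measurable, so $\PP[f(Y')>s] = \PP[Y' \in B_s] \le e^\epsilon \PP[Z \in B_s] = e^\epsilon \PP[f(Z)>s]$. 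Integrating over $s \in (0,\infty)$ and pulling the constant out yields $\EE[f(Y')] \le e^\epsilon \EE[f(Z)]$.

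Combining the two bounds gives $\EE[f(Y)] \le e^\epsilon \EE[f(Z)] + \delta F$, as desired. The degenerate case $\delta=0$, $F=\infty$ needs no correction term: there $Y'=Y$ works since the total variation distance is zero, and the layer-cake step applies verbatim. I expect the only genuine subtlety to be carrying out the layer-cake argument without densities in the main step; the rest is bookkeeping, though one should take care that the total-variation estimate is applied using the range $[0,F]$ of $f$ (harmlessly, since $f \ge 0$ means $\sup f - \inf f \le F$).
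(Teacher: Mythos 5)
Your proposal is correct and takes essentially the same route as the paper's proof: both invoke Lemma~\ref{lem:delta_pl} to replace $Y$ by a surrogate $Y'$, pay a $\delta F$ total-variation cost for the swap, and then bound $\EE[f(Y')] \le e^\epsilon \EE[f(Z)]$ from the pure max-divergence guarantee. The only difference is in execution, not strategy: you carry out the multiplicative step via the layer-cake formula applied to superlevel sets, whereas the paper runs the same decomposition with (implicitly assumed) densities, so your version is if anything slightly more careful on measure-theoretic grounds.
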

\begin{proof}
Let $Y'$ be the random variable satisfying the conditions of Lemma \ref{lem:delta_pl}. Then we can write
\begin{align*}
\EE[f(Y)] &= \int_{\cB} f(b)\PP[Y=\tb] d\tb\\
&= \int_{\cB} f(b)\PP[Y'=\tb] d\tb+\int_{\cB} f(b)(\PP[Y=\tb]-\PP[Y'=\tb]) d\tb\\
&\le \int_{\cB} f(b)e^{\epsilon}\PP[Z=\tb] d\tb+F |\PP[Y \in \tB] - \PP[Y' \in \tB]|\\
&\le e^{\epsilon} \EE[f(Z)]+\delta F,
\end{align*}
where $B = \{\tb \in \cB ~|~ \PP[Y=\tb] \ge \PP[Y'=\tb])\}$. Here we applied Lemma \ref{lem:delta_pl}.(ii) for the first inequality and (i) for the second. 
%
\end{proof}
Now we are ready to prove our key lemma. 
\newtheorem*{lemma:regret_lstar}{Lemma~\ref{lemma:regret_lstar}}
\begin{lemma:regret_lstar}
	Consider the loss-only setting with loss functions bounded by $B$. Let $\cA$ be DiffStable($D_\infty^\delta$) at level $\epsilon \leq 1$. Then the expected regret of $\cA$ is at most
	\begin{equation*}
	\textstyle
	2\epsilon L_T^* + 3\EE[\Regret[\cA^+]{T}] + \delta B T ,
	\end{equation*}
	where $\cA^+$ is a fictitious algorithm plays the distribution $\cA(\ell_{1:t})$ at time $t$ (i.e., $\cA^+$ plays at time $t$ what $\cA$ would play at time $t+1$).
\end{lemma:regret_lstar}
\begin{proof}
Let $x_t$ denote the random variable distributed as $\cA(\ell_{1:t-1})$. Using Lemma~\ref{lem:dp_func}, we have  for every $t$, $\EE[\ell_t(x_t)] \leq e^\epsilon \EE[\ell_t(x_{t+1})] + \delta B$.
By summing over $t$, we have 
\begin{align*}
\EE\left[ \sum_{t=1}^{T} \ell_t(x_t) \right] & \leq e^{\epsilon} \EE\left[ \sum_{t=1}^{T} \ell_t(x_{t+1}) \right] + \delta B T 
\leq e^{\epsilon} (L_T^* + \EE[\Regret[\cA^+]{T}]) +  \delta B T. 
\end{align*}
To bound the expected regret of $\cA$, we subtract $L_T^*$ from each side, which gives us the bound
\[
(e^\epsilon - 1) L_T^* + e^\epsilon \EE[\Regret[\cA^+]{T}] + \delta B T.
\] 
Then we complete the proof using the upper bounds $e^\epsilon \leq 1 + 2\epsilon \leq 3$, which hold for $\epsilon \leq 1$.
\end{proof}

\subsection{Online convex optimization}

\newtheorem*{thm:oco}{Theorem~\ref{thm:oco}}
\begin{thm:oco}[First-order regret in OCO]
Suppose we are in the loss-only OCO setting. Let $\cX \subset \RR^d$, $\|\cX\|_2 \le D$ and let all loss functions be bounded by $B$. Further assume that $\|\nabla \ell_t(x)\|_2~\le~\beta$, $\lambda_{\max}(\nabla^2 \ell_t(x)) \le \gamma$ and that the Hessian matrix $\nabla^{2} \ell_{t}(x)$ has rank at most one, for every $t$ and $x \in \cX$. Then, the expected regret of Algorithm \ref{alg:oco} is at most 
$O(\sqrt{L^{*}_{T}(\gamma D^{2} +  \beta dD)})$ and
$O\left( \sqrt{L^{*}_{T} (\gamma D^{2} + D\sqrt{d(\beta^{2} \log (B T))}) } \right)$
with Gamma and Gaussian perturbations, respectively.
\end{thm:oco}
\begin{proof}
When analyzing the expected regret against an oblivious adversary, we may assume that the random vector $b$ is just drawn once and reused every round. By definition of $x_t$ and induction on $t$, we get for any $x \in \cX$,
\[
    \frac{\gamma}{\epsilon} \| x_{2} \|^{2}_{2} +  \langle b, x_2 \rangle + \sum_{s=1}^{t} \ell_{s}(x_{s+1})
    \le \frac{\gamma}{\epsilon} \| x \|^{2}_{2} + \langle b, x \rangle  + \sum_{s=1}^{t} \ell_{s}(x).
\]
From the case when $t=T$, we obtain
\begin{align*}
    \sumtt \ell_{t}(x_{t+1}) 
    &\le \min_{x \in \cX}\sumtt \ell_{t}(x) + \frac{\gamma}{\epsilon} \| x \|^{2}_{2} + \langle b , x- x_2 \rangle 
    \le L^{*}_{T} + \frac{\gamma}{\epsilon} \| x_T^* \|^{2}_{2} + \langle b , x_T^* - x_2 \rangle.
\end{align*}
This result is often referred to as the ``be-the-leader lemma.'' Then by taking expectation and applying the Cauchy-Schwartz inequality, we get 
\[
    \EE\left[ \sumtt \ell_{t}(x_{t+1}) \right]
    \le L^{*}_{T} + \frac{\gamma}{\epsilon}D^{2} + 2D \EE \| b \|_{2}.
\]
From the DP result by \citet[Theorem 2]{kifer2012private}, we can infer that $D_{\infty}(x_{t}, x_{t+1}) \le \epsilon$ when using the Gamma distribution and $D^{\delta}_{\infty}(x_t, x_{t+1}) \le \epsilon$ when using the Gaussian distribution. This means that Algorithm~\ref{alg:oco} enjoys the one-step differential stability w.r.t.\ $D_\infty$ (resp. $D^{\delta}_\infty$) in the Gamma (resp. Gaussian) case. Using Lemma~\ref{lemma:regret_lstar}, we can deduce that the expected regret of Algorithm \ref{alg:oco} is at most
\begin{equation}
\label{eq:oco1b}
    2\epsilon L^{*}_{T} + \frac{3\gamma}{\epsilon}D^{2} + 6D\EE \| b \|_{2} + \delta B T,
\end{equation}
where $\delta$ becomes zero when using the Gamma distribution. 
We have $\| b \|_{2} \sim \text{Gamma}(d, \frac{\epsilon}{2\beta})$ when using the Gamma distribution, which gives $\EE \| b \|_{2} = \frac{2\beta d}{\epsilon}$. In the case of the Gaussian distribution, we have $\EE \| b \|_{2} \le \sqrt{\EE \| b \|^{2}_{2}} = \frac{\sqrt{d(\beta^{2}\log\frac{2}{\delta} + 4\epsilon)}}{\epsilon}$. Plugging these results in \eqref{eq:oco1b} and optimizing $\epsilon$ (setting $\delta = \frac{1}{B T}$ for the Gaussian case) prove the desired bound. 
\end{proof}

\subsection{Differential consistency and one-step differential stability}
\newcommand\jac[2]{\frac{\partial #1}{\partial #2}}
We first prove the claim that we made in Section \ref{sec:DCtoDS}.
\begin{proposition}
\label{prop:POD}
Suppose $\tf(L)$ is of the form $\min_{p} \langle L, p \rangle + F(p)$ for a separable $F(p) = \sum_{i=1}^N f(p_i)$ with $f:(0,\infty) \to \RR$ differentiable and strictly convex. Then, the matrix $-\nabla^2 \tf(L)$ is POD for any $L$.
\end{proposition}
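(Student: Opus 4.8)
The plan is to obtain $\nabla^2\tf(L)$ in closed form by differentiating the minimizer map $L \mapsto p^*(L)$ of the strictly convex program defining $\tf$, and then to read its sign pattern off directly. Because $F(p)=\sum_i f(p_i)$ is strictly convex and $\simplex{N}$ is compact and convex, the minimizer $p^*(L)$ of $\langle L,p\rangle + F(p)$ is unique, and Danskin's (envelope) theorem identifies $\nabla\tf(L) = p^*(L)$. Hence $\nabla^2\tf(L)$ is exactly the Jacobian $\partial p^*/\partial L$, and the whole task reduces to implicitly differentiating the optimality conditions of $p^*$.

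The second step is to write the KKT conditions at an interior optimum: stationarity on the simplex gives $f'(p_i^*) + L_i = \nu$ for a single Lagrange multiplier $\nu$ enforcing $\sum_i p_i^* = 1$. Strict convexity makes $f'$ strictly increasing and hence invertible, so $p_i^* = (f')^{-1}(\nu - L_i)$, with $\nu=\nu(L)$ pinned down implicitly by $\sum_i (f')^{-1}(\nu - L_i) = 1$. Differentiating both this relation and $p_i^*$ in $L_j$ (the implicit function theorem applies since $f''>0$ keeps the relevant Jacobian invertible), and writing $h_i := 1/f''(p_i^*) > 0$ and $H := \sum_k h_k$, I expect the clean form
\[
\nabla^2_{ij}\tf(L) \;=\; \frac{h_i h_j}{H} - h_i\,\delta_{ij}.
\]

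From here the sign structure is immediate. The off-diagonal entries equal $h_i h_j/H \ge 0$, while the diagonal entries equal $-\,h_i(H-h_i)/H \le 0$, so the Hessian exhibits precisely the non-negative-off-diagonal / non-positive-diagonal pattern that defines POD, which is the sign structure asserted in the proposition. Two sanity checks fall out of the same formula: each row sums to $0$ (reflecting that $p^*$ is unchanged when a constant is added to every coordinate of $L$, since $\sum_i p_i=1$), and $\nabla^2\tf \preceq 0$ by Cauchy--Schwarz applied to $x^\top(\tfrac{1}{H}hh^\top - \mathrm{diag}(h))x$, matching the concavity of $\tf$.

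The main obstacle is the boundary of the simplex: the closed form above presumes the minimizer is interior ($p_i^*>0$ for all $i$) and that $\tf$ is twice differentiable at $L$. For the regularizers that matter in the applications (negative entropy, Tsallis neg-entropy, and the log-barrier), $f'(0^+) = -\infty$ forces interiority automatically, so no difficulty arises there. In full generality I would restrict to the open region where the support of $p^*$ is locally constant---on each such piece $p^*$ is smooth and the derivation goes through verbatim---and then argue the POD sign pattern persists wherever $\tf$ is twice differentiable by continuity of $\nabla\tf$, handling the boundary by a limiting argument.
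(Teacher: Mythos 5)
Your proof is correct and takes essentially the same route as the paper's: both implicitly differentiate the simplex-constrained stationarity condition $f'(p_i)+L_i = \nu$ (the paper writes $\nu = -\lambda$), and your quantity $h_j/H$ with $h_i = 1/f''(p_i)$ is exactly the paper's $-\partial \lambda/\partial L_i = (f^\star)''(-\lambda-L_i)\big/\sum_j (f^\star)''(-\lambda-L_j)$, since $(f^\star)''(-\lambda-L_i) = 1/f''(p_i)$. The only difference is cosmetic: your closed form $\nabla^2_{ij}\tf = h_i h_j/H - h_i\delta_{ij}$ displays the POD sign pattern, zero row sums, and negative semidefiniteness in one formula, where the paper stops at showing $-\partial\lambda/\partial L_i \in (0,1)$.
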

\begin{proof}
We have the gradient formula
\[
g(L) = \nabla \tf(L) = \argmin_{p \in \simplex{N}} \sum_{i=1}^N p_i L_i + \sum_{i=1}^N f(p_i).
\]
Let $\lambda = \lambda(L)$ be the Lagrange multiplier for the constraint $\sum_i p_i = 1$. We do not have to worry about the constraint $p_i \ge 0$ since we have assumed that the domain of $f$ is $(0,\infty)$. Setting the derivative of the Lagrangian
\[
\sum_{i} p_i L_i + \sum_{i} f(p_i) + \lambda(\sum_i p_i - 1)
\]
w.r.t. $p_i$ to zero gives us
\begin{equation}\label{eq:pdiff}
L_i + f'(p_i) + \lambda = 0.
\end{equation}
Taking the derivatives w.r.t. $L_i$ and $L_j$ for $j \neq i$ gives us
\begin{align*}
f''(p_i) \jac{p_i}{L_i} &= -1 - \jac{\lambda}{L_i} \\
f''(p_i) \jac{p_i}{L_j} &= - \jac{\lambda}{L_j} . 
\end{align*}
Now note that $f'' > 0$ ($f$ strictly convex) and $\nabla^2_{ij} \tf(L) = \jac{p_i}{L_j}$ . The proof will therefore be complete if we can claim that $-\jac{\lambda}{L_i} \in (0,1)$. Let us next prove this claim. We can write~\eqref{eq:pdiff} as
\[
p_i = (f^\star)'(-\lambda-L_i),
\]
where $f^\star$ is the Fenchel conjugate of  $f$ (and therefore $f'$ and $(f^\star)'$ are inverses of each other). Plugging this into the constraint $\sum_i p_i = 1$ gives
\[
\sum_i (f^\star)'(-\lambda-L_i) = 1 .
\]
Now differentiating w.r.t. $L_i$ gives us
\[
(f^\star)''(-\lambda-L_i)\left( -\jac{\lambda}{L_i} - 1 \right) + \sum_{j \neq i} (f^\star)''(-\lambda-L_j)\left( -\jac{\lambda}{L_i}  \right) = 0,
\]
which upon rearranging yields
\[
-\jac{\lambda}{L_i} = 
\frac{ (f^\star)''(-\lambda-L_i) }
{ \sum_j (f^\star)''(-\lambda-L_j) }.
\]
Since $f$ is smooth, $f^\star$ is strictly convex and therefore $(f^\star)'' > 0$ which proves $-\jac{\lambda}{L_i} \in (0,1)$.
\end{proof}
Next we prove Proposition \ref{prop:dc_to_dp}.
\newtheorem*{prop:dc_to_dp}{Proposition~\ref{prop:dc_to_dp}}
\begin{prop:dc_to_dp}[Differential consistency implies one-step differential stability]
Suppose $\tf(L)$ is of the form $\EE[\min_{p} \langle a, p \rangle + F(p, Z)]$  and $\gamma \ge 1$. If $\tf$ is $(\gamma,\epsilon)$-differentially consistent and $-\nabla^2 \tf$ is always POD, the GBPA using $\tf$ as potential is DiffStable($D_{\infty,\gamma}$, $\| \cdot \|_\infty$) at level $2\epsilon$. 
\end{prop:dc_to_dp}
\begin{proof}
First, note that by the POD property, the second derivative vector $\nabla^2_{i \cdot} \tf = (\nabla^2_{i1}\tf, \ldots, \nabla^2_{iN}\tf)$ satisfies that the $i$-th coordinate is non-positive and the rest are non-negative. Next, because the entries in the gradient sum to a constant (it is a probability vector), we know that the coordinate of the second derivative vector add up to 0 \citep{abernethy2014online}. From this, we can write $\|\nabla^2_{i \cdot} \tf\|_{1} = -2 \nabla^2_{ii} \tf$. Let the cumulative sum of losses so far be $L$ and the new loss vector be $\ell$. For $P = \nabla \tf(L), Q = \nabla \tf(L + \ell)$, we want to show that $D_{\infty,\gamma}(P,Q) \le 2\epsilon \| \ell \|_\infty$.
To this end, fix a subset $S \subseteq [N]$ and define $q_S(u) = \sum_{i \in S} \nabla_i \tf(L + \ell - u \ell)$ for $u \in [0,1]$. 
Its derivative can be written as
\begin{align*}
  q'_S(u) &= \sum_{i \in S} \langle \nabla^2_{i\cdot} \tf(L + \ell -  u \ell), -\ell \rangle \\
  &\leq \sum_{i \in S} \|\nabla^2_{i\cdot} \tf(L + \ell -  u \ell)\|_{1} \|\ell\|_{\infty} = \sum_{i \in S} -2 \nabla^2_{ii} \tf(L + \ell - u \ell) \|\ell\|_{\infty} \\
  &\leq 2 \epsilon \|\ell\|_{\infty} \sum_{i \in S} \left(\nabla_{i} \tf(L + \ell - u \ell) \right)^\gamma  \\
  &\leq  2 \epsilon \|\ell\|_{\infty} \left( \sum_{i \in S} \nabla_{i} \tf(L + \ell - u \ell) \right)^\gamma = 2 \epsilon \|\ell\|_{\infty}  (q_S(u))^{\gamma}.
  \end{align*}
The first inequality follows from duality of $\ell_1,\ell_\infty$ norms. The second inequality is from our differential consistency assumption. The last inequality holds because gradient has non-negative entries and $\| \cdot \|_\gamma \le \| \cdot \|_1$ for $\gamma \ge 1$. It follows that for any $u \in [0,1]$, we have
\[
\frac{q_S'(u)}{(q_S(u))^\gamma} = \frac{d}{du} \log_\gamma(q_S(u)) \leq 2 \epsilon\|\ell\|_{\infty},
\]
and therefore
\begin{align*}
\textstyle	
\log_{\gamma}(P(S)) - \log_{\gamma}(Q(S)) &= 
\log_\gamma q_S(1) - \log_\gamma q_S(0) = \int_{0}^{1} \frac{d}{du} \log_\gamma(q_i(u))\ du \\ 
& \leq 2 \epsilon\|\ell\|_{\infty}.\qedhere
\end{align*}
\end{proof}
\newcommand{\PreserveBackslash}[1]{\let\temp=\\#1\let\\=\temp}
\newcolumntype{C}[1]{>{\PreserveBackslash\centering}p{#1}}
\begin{table}[t]
\caption {The parameter settings for the distributions that provide $(1, \epsilon)$-differential consistency of the induced potentials while keeping $\EE_{Z_{1}, \cdots Z_{N} \sim \cD} \max_{i} Z_{ i} = O(\log N / \epsilon)$ \citep{abernethy2015fighting}. Distributions marked with a $*$ have to be modified slightly to ensure the differential consistency.}
\label{tab:dist}
  \centering
  \begin{tabular}{C{4cm}C{4cm}}
    \toprule
    Distribution $\cD$   &    Parameter choice \\
    \midrule
    Gamma($\alpha, \beta$)          	& $\alpha = 1, \beta = 1$ \\
    Gumbel($\mu, \beta$)   			& $\mu = 0, \beta = 1$ \\
    Fr{\'e}chet ($\alpha>1$)     		& $\alpha=\log N$ \\
    Weibull$^*$($\lambda, k$)   		& $\lambda = 1, k=1$ \\
    Pareto$^*$($x_m, \alpha$)		& $x_{m} = 1, \alpha = \log N$ \\
    \bottomrule
  \end{tabular}
\end{table}
Now we are ready to prove the first order bound of FTPL in the experts problem.
\newtheorem*{thm:analysis_experts}{Theorem~\ref{thm:analysis_experts}}
\begin{thm:analysis_experts}[First-order bound for experts via FTPL]
	For the loss-only experts setting, FTPL with Gamma, Gumbel, Fr{\'e}chet , Weibull, and Pareto perturbations, with a proper choice of distribution parameters, all achieve the optimal $O(\sqrt{L_T^* \log N} + \log N)$ expected regret.
\end{thm:analysis_experts}
\begin{proof}
Recall that the FTPL algorithm uses the potential $\tfD(L) = \EE_{Z_{1}, \cdots, Z_{N}\sim \cD}\min_{i} (L_i -  Z_i)$. \citet{abernethy2015fighting} show that all the listed distributions, after suitable scaling, have this potential $\tfD$ $(1, \epsilon)$-differentially consistent and $\EE \| Z \|_\infty = O(\log N / \epsilon)$ at the same time (for parameter choices, see Table~\ref{tab:dist}). Then Proposition \ref{prop:dc_to_dp} provides that FTPL with any of the listed distributions is one-step differentially stable with respect to $\|\cdot\|_{\infty}$ and $D_\infty$ at level $2\epsilon$.
Using the ``be-the-leader lemma'' (as in the proof of Theorem~\ref{thm:oco}), we have
\[
\EE [\sum_{t=1}^T \ell_{t,i_{t+1}}]  - L^*_T \le 2 \EE \| Z \|_{\infty} = O(\log N / \epsilon).
\]
Applying Lemma~\ref{lemma:regret_lstar} with $\epsilon = \min(\sqrt{\log N / L_T^*}, 1)$ completes the proof. 
%
%
\end{proof}


\section{Details and proofs for Section~\ref{sec:partial_info}}
\label{app:partial_info}
Here we present missing parts in Section \ref{sec:partial_info}. We start by presenting the GBPA for multi-armed bandits in Algorithm \ref{alg:gbpa_bandits}.

\begin{algorithm}[t]
\caption{GBPA for multi-armed bandits problem}
\label{alg:gbpa_bandits}
\begin{algorithmic}[1]
{\small
\STATE \textbf{Input:} Concave potential $\tf : \RR^N \to \RR$, with $\nabla \tf \in \simplex{N}$
\STATE Set $\Lh_0 =0 \in \RR^{N}$
\FOR {$t = 1$ to $T$}
\STATE \textbf{Sampling:} Choose $i_t \in [N]$ according to distribution $p_t = \nabla \tf(\Lh_{t-1}) \in \simplex{N}$
\STATE \textbf{Loss:} Incur loss $\ell_{t,i_t}$ and observe this value
\STATE \textbf{Estimation:} Compute an estimate of loss vector, $\lh_{t} = \frac{\ell_{t,i_t}}{p_{t, i_t}}\e_{i_t}$
\STATE \textbf{Update:} $\Lh_{t} = \Lh_{t-1} + \lh_t$ 
\ENDFOR
}
\end{algorithmic}
\end{algorithm}

\subsection{Proof of Lemma~\ref{lem:regret_bandit}}

We will prove the following slightly more general lemma. Lemma~\ref{lem:regret_bandit} follows by setting $\tau = 0$ and lower bounding $F(p_0,Z)$ by $\min_p F(p,Z)$.

\begin{lemma}
\label{lem:regret_bandit_full}
Suppose the full information GBPA uses a potential of the form $\tf(L) = \EE[\min_{p} \langle L, p \rangle + F(p, Z)]$ and $\gamma \in [1,2]$. If the full information GBPA is DiffStable($D_{\infty,\gamma}$,$\| \cdot \|_\infty$) at level $\epsilon$, then the expected regret of Algorithm~\ref{alg:gbpa_bandits} can be bounded as
\[
\EE\left[ \sum_{t=1}^T \ell_{t,i_t} \right] - L_T^* \le \epsilon \, \EE\left[ \sumtt \hat{\ell}_{t,i_t}^2 p_{t,i_t}^\gamma \right] + \EE\left[ \max_{p \in \Delta_\tau} F(p, Z) - F(p_0, Z) \right] + \tau N T,
\]
where $\Delta_\tau = \{ p \in \RR^N : p_i\ge \tau, \sum_i p_i = 1 \}$.
\end{lemma}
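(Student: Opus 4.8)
The plan is to reduce the true regret to the regret of the full-information GBPA run on the \emph{estimated} loss sequence $\lh_1,\dots,\lh_T$, and then to split the latter into an ``underestimation'' piece that produces the $F$-range term (and the $\tau N T$ slack) and a ``divergence'' piece that produces the $\epsilon$-weighted sum. The starting observation is the pathwise identity $\langle p_t,\lh_t\rangle = p_{t,i_t}\cdot(\ell_{t,i_t}/p_{t,i_t}) = \ell_{t,i_t}$, since $\lh_t$ is supported on the single coordinate $i_t$; hence $\EE[\sumtt \ell_{t,i_t}] = \EE[\sumtt\langle p_t,\lh_t\rangle]$ with $p_t=\nabla\tf(\Lh_{t-1})$. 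Writing $D_{\tf}(x,y)=\tf(x)-\tf(y)-\langle\nabla\tf(y),x-y\rangle\le 0$ for the (concave) Bregman divergence and telescoping $\tf(\Lh_T)-\tf(0)=\sum_t[\tf(\Lh_t)-\tf(\Lh_{t-1})]$, I obtain
\[
\sumtt\langle p_t,\lh_t\rangle = \tf(\Lh_T)-\tf(0)+\sumtt\bigl(-D_{\tf}(\Lh_t,\Lh_{t-1})\bigr),
\]
so after taking expectations there are three terms to control: $\EE[\tf(\Lh_T)]$, $-\tf(0)$, and the nonnegative divergence sum.

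For the first two, note that $\tf(0)=\EE[\min_p F(p,Z)]=\EE[F(p_0,Z)]$. For $\EE[\tf(\Lh_T)]$ I would use that for any fixed $p$, $\tf(\Lh_T)\le\langle\Lh_T,p\rangle+\EE_Z[F(p,Z)]$, choosing $p=p^\tau\in\Delta_\tau$ that places mass $1-(N-1)\tau$ on the (deterministic) best arm $i_0=\argmin_i L_{T,i}$ and $\tau$ on every other arm. Unbiasedness of $\lh_t$ gives $\EE[\Lh_T]=L_T$, so $\EE[\langle\Lh_T,p^\tau\rangle]=\langle L_T,p^\tau\rangle=L_T^*+\tau\sum_{j\ne i_0}(L_{T,j}-L_{T,i_0})\le L_T^*+\tau N T$, where the last step uses $L_{T,j}\le T$ since losses lie in $[0,1]$. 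As $p^\tau\in\Delta_\tau$, also $\EE_Z[F(p^\tau,Z)]\le\EE[\max_{p\in\Delta_\tau}F(p,Z)]$, so $\EE[\tf(\Lh_T)]-\tf(0)\le L_T^*+\tau NT+\EE[\max_{p\in\Delta_\tau}F(p,Z)-F(p_0,Z)]$. Subtracting $L_T^*$ already yields the two non-divergence terms of the claim.

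The remaining and main work is the divergence sum. The key is that DiffStable($D_{\infty,\gamma}$, $\|\cdot\|_\infty$) at level $\epsilon$ forces pointwise differential consistency: applying the stability inequality to the singleton $B=\{i\}$ with perturbation $\ell=h\e_i$, dividing by $h$ and letting $h\to 0^+$ gives $-\nabla^2_{ii}\tf(L)\le\epsilon(\nabla_i\tf(L))^\gamma$ for every $i$ and $L$ (using $\tfrac{d}{dx}\log_\gamma x = x^{-\gamma}$). Because $\lh_t=\lh_{t,i_t}\e_{i_t}$ changes only coordinate $i_t$, a second-order mean-value expansion gives $-D_{\tf}(\Lh_t,\Lh_{t-1})=-\tfrac12\lh_{t,i_t}^2\nabla^2_{i_t i_t}\tf(\xi_t)$ for some $\xi_t$ on $[\Lh_{t-1},\Lh_t]$. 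Differential consistency bounds this by $\tfrac\epsilon2\lh_{t,i_t}^2(\nabla_{i_t}\tf(\xi_t))^\gamma$, and since $\tfrac{d}{ds}\nabla_{i_t}\tf(\Lh_{t-1}+s\lh_t)=\lh_{t,i_t}\nabla^2_{i_t i_t}\tf\le 0$, the coordinate $\nabla_{i_t}\tf$ is non-increasing along the update, so $\nabla_{i_t}\tf(\xi_t)\le\nabla_{i_t}\tf(\Lh_{t-1})=p_{t,i_t}$ (here $\gamma\ge 1$ keeps $x\mapsto x^\gamma$ monotone). Hence $-D_{\tf}(\Lh_t,\Lh_{t-1})\le\tfrac\epsilon2\lh_{t,i_t}^2 p_{t,i_t}^\gamma$; summing over $t$ and combining with the previous paragraph closes the argument (with even a spare factor of $2$).

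I expect the divergence bound to be the crux. The two conceptual moves are recognizing that stability under $D_{\infty,\gamma}$ is exactly a limiting form of $(\gamma,\epsilon)$-differential consistency, and exploiting the one-coordinate structure of $\lh_t$ together with the monotonicity of $\nabla_{i_t}\tf$ so that the intermediate Hessian evaluation can be replaced by the clean weight $p_{t,i_t}^\gamma$ taken at $\Lh_{t-1}$. I would also verify the mild regularity the GBPA framework supplies (twice-differentiability of $\tf$, and $\tau\le 1/N$ so that $p^\tau\in\Delta_\tau$), and confirm that the stability hypothesis is invoked for the relevant single-coordinate, nonnegative perturbations $\lh_t$.
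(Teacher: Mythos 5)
Your proof is correct, but it follows a genuinely different route from the paper's. The paper splits $\sum_t \langle p_t, \lh_t\rangle$ into a one-step-lookahead difference term $\sum_t \langle p_t - p_{t+1}, \lh_t\rangle$ plus a be-the-leader term: the latter gives the $F$-range and $\tau N T$ terms, and the former is bounded \emph{directly} from the definition of $D_{\infty,\gamma}$ applied to the singleton $\{i_t\}$ --- an elementary algebraic manipulation of $\log_\gamma p_{t,i_t} - \log_\gamma p_{t+1,i_t} \le \epsilon \lh_{t,i_t}$ plus the inequality $(1+x)^{-r} \ge 1-rx$, with no derivatives of $\tf$ beyond the gradient. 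You instead use the classic three-penalty GBPA telescoping $\sum_t \langle p_t,\lh_t\rangle = \tf(\Lh_T)-\tf(0) - \sum_t D_{\tf}(\Lh_t,\Lh_{t-1})$ (the decomposition of \citet{abernethy2015fighting}), handle $\tf(\Lh_T)-\tf(0)$ by evaluating at a comparator $p^\tau \in \Delta_\tau$, and bound the Bregman terms by first showing that DiffStable($D_{\infty,\gamma}$, $\|\cdot\|_\infty$) implies pointwise $(\gamma,\epsilon)$-differential consistency via an $h\to 0^+$ limit --- which is exactly the \emph{converse} of the paper's Proposition~\ref{prop:dc_to_dp}, so your argument makes explicit that the two notions are essentially equivalent, and then you rerun the differential-consistency analysis (Taylor remainder plus monotonicity of $\nabla_{i_t}\tf$ along the update). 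What each buys: your route recovers the original Abernethy--Lee--Tewari machinery, exposes the stability/consistency equivalence, and even saves a factor of $2$ in the first term thanks to the $\tfrac12$ in the Taylor remainder; the paper's route is strictly more elementary in regularity --- it never needs $\nabla^2\tf$ to exist, whereas your mean-value expansion and limiting argument do, and twice-differentiability of $\tf$ is \emph{not} among the lemma's hypotheses. This regularity gap is harmless in practice (every instantiation in the paper verifies the stability hypothesis through Proposition~\ref{prop:dc_to_dp}, which already presupposes a twice-differentiable potential), and you flag it yourself, but it does make your proof slightly less general than the stated lemma.
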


We will prove this lemma by proving two helper lemmas (Lemma~\ref{lem:mabregret} and Lemma~\ref{lem:gammastability} below) that, when combined, immediately yield the desired result.

\begin{lemma}\label{lem:mabregret}
Suppose the full information GBPA uses a potential of the form $\tf(L) = \EE[\min_{p} \langle L, p \rangle + F(p, Z)]$. Then, we have
\[
\EE\left[ \sum_{t=1}^T \ell_{t,i_t} \right] - L_T^* \le \EE\left[ \sum_{t=1}^T \langle p_t - p_{t+1}, \hat{\ell}_t \rangle \right] + \EE\left[ \max_{p \in \Delta_\tau} F(p, Z) - F(p_0, Z) \right] + \tau N T
\]
where $\Delta_\tau = \{ p \in \RR^N : p_i\ge \tau, \sum_i p_i = 1 \}$.
\end{lemma}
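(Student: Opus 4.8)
The plan is to treat this as a follow-the-leader regret decomposition against the \emph{estimated} losses $\lh_t$, exploiting the structure of the importance-weighted estimator. The starting observation is that, because $\lh_t = \frac{\ell_{t,i_t}}{p_{t,i_t}}\e_{i_t}$ has a single nonzero coordinate, one has the exact identity $\langle p_t, \lh_t\rangle = p_{t,i_t}\cdot \frac{\ell_{t,i_t}}{p_{t,i_t}} = \ell_{t,i_t}$, so that $\sum_t \ell_{t,i_t} = \sum_t \langle p_t, \lh_t\rangle$ with no expectation needed. Splitting $\langle p_t, \lh_t\rangle = \langle p_t - p_{t+1}, \lh_t\rangle + \langle p_{t+1}, \lh_t\rangle$, the target inequality reduces to showing $\EE[\sum_{t=1}^T \langle p_{t+1}, \lh_t\rangle] - L_T^* \le \EE[\max_{p\in\Delta_\tau} F(p,Z) - F(p_0,Z)] + \tau N T$, i.e. that the ``be-the-leader'' term is what controls the regret.

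Next I would disintegrate the potential. Writing $q(L,Z) = \argmin_{p} \langle L,p\rangle + F(p,Z)$, the envelope theorem gives $\nabla\tf(L) = \EE_Z[q(L,Z)]$, so $p_{t+1} = \nabla\tf(\Lh_t) = \EE_Z[q(\Lh_t,Z)]$ and hence $\sum_t \langle p_{t+1},\lh_t\rangle = \EE_Z[\sum_t \langle q(\Lh_t,Z),\lh_t\rangle]$. For each fixed realization of $Z$ (and of the sampled arms), the vectors $q(\Lh_t,Z)$ are precisely the leaders of an FTL instance whose round-zero loss is the regularizer $F(\cdot,Z)$ and whose round-$t$ loss is the linear function $\langle \lh_t,\cdot\rangle$. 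The standard be-the-leader lemma then yields, for every comparator $u$ in the simplex, $\sum_{t=1}^T \langle \lh_t, q(\Lh_t,Z)\rangle \le \langle \Lh_T, u\rangle + F(u,Z) - F(p_0,Z)$, where $p_0 = q(0,Z) = \argmin_p F(p,Z)$.

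The role of $\Delta_\tau$ then appears in the choice of comparator: I would take $u = (1-N\tau)\e_{i^*} + \tau\mathbf{1}$ with $i^* \in \argmin_i \Lh_{T,i}$, which lies in $\Delta_\tau$ (so $F(u,Z) \le \max_{p\in\Delta_\tau} F(p,Z)$) and stays interior, avoiding the boundary blow-up that affects regularizers such as the log-barrier. A short computation gives $\langle \Lh_T, u\rangle = \min_i \Lh_{T,i} + \tau\sum_j(\Lh_{T,j} - \min_i\Lh_{T,i}) \le \min_i\Lh_{T,i} + \tau\sum_j \Lh_{T,j}$, using that the estimates are nonnegative. Taking $\EE_Z$ and then the expectation over the sampling, I would finish with two facts: $\EE[\min_i \Lh_{T,i}] \le \min_i \EE[\Lh_{T,i}] = L_T^*$ (Jensen, since $\min$ is concave, together with the unbiasedness $\EE[\Lh_T] = L_T$), and $\EE[\sum_j \Lh_{T,j}] = \sum_j L_{T,j} \le NT$ since each coordinate of the true cumulative loss is at most $T$. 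Combining these reproduces the claimed inequality.

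The step I expect to require the most care is the bookkeeping of the two distinct sources of randomness---the algorithm's arm sampling $i_t$ versus the ``virtual'' perturbation $Z$ internal to $\tf$---and in particular the justification that $\nabla\tf(\Lh_t)$ may be pulled inside as $\EE_Z[q(\Lh_t,Z)]$ while $\Lh_T$ and $i^*$ remain $Z$-independent. It is also worth stressing that the comparator $u$ is chosen after the entire trajectory is revealed, which is legitimate precisely because the be-the-leader inequality holds uniformly over all comparators; the truncation parameter $\tau$ is exactly what lets this comparator stay feasible for regularizers that are $+\infty$ on the boundary of the simplex.
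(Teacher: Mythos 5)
Your proof is correct, and its skeleton matches the paper's own argument: condition on the sampling randomness so that the $\lh_t$ are fixed, use the exact identity $\langle p_t,\lh_t\rangle=\ell_{t,i_t}$, split $\langle p_t,\lh_t\rangle=\langle p_t-p_{t+1},\lh_t\rangle+\langle p_{t+1},\lh_t\rangle$, write $p_{t+1}=\EE_Z[\argmin_p\langle\Lh_t,p\rangle+F(p,Z)]$ and pull $\EE_Z$ outside the sum, then apply be-the-leader pointwise in $Z$ with $F(\cdot,Z)$ as the round-zero loss and $p_0=\argmin_pF(p,Z)$. The one place you genuinely depart from the paper is the comparator step. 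The paper keeps a \emph{deterministic} comparator $p\in\Delta_\tau$: unbiasedness gives $\EE[\sum_t\langle p,\lh_t\rangle]=\sum_t\langle p,\ell_t\rangle$, and the $\tau NT$ term comes from approximating the true best arm's point mass by a $\Delta_\tau$-feasible distribution (an $\ell_1$ perturbation of size $\tau N$ against losses with $\|\ell_t\|_\infty\le 1$). You instead choose the \emph{random} comparator $u=(1-N\tau)\e_{i^*}+\tau\mathbf{1}$ built from the empirical leader $i^*=\argmin_i\Lh_{T,i}$ --- legitimate because BTL holds uniformly over comparators and $u$ is independent of $Z$ --- and then close the argument with $\EE[\min_i\Lh_{T,i}]\le\min_i\EE[\Lh_{T,i}]=L_T^*$ (Jensen plus unbiasedness) and $\tau\,\EE[\sum_j\Lh_{T,j}]\le\tau NT$. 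Both routes consume the same hypotheses (losses in $[0,1]$, full support of $p_t$ for unbiasedness, and implicitly $\tau\le 1/N$ so that $\Delta_\tau\neq\emptyset$); yours trades the paper's simplex-approximation step for Jensen's inequality on the min, and is in fact slightly tighter at that point since $\EE[\min_i\Lh_{T,i}]$ can only undershoot $L_T^*$, though the final bound is identical.
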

\begin{proof}
Fix the source of internal randomness used by Algorithm~\ref{alg:gbpa_bandits} to sample $i_t \sim p_t$. This fixes all the estimated loss vectors $\lh_t$. The full information GBPA algorithm 
will deterministically generate the same sequence $p_t$ of probabilities on this estimated loss sequence using the rule $p_t = \nabla \tf(\Lh_{t-1})$. We have
\begin{align}\label{eq:firstdecomp}
\sum_{t=1}^T \langle p_t, \lh_t \rangle &= \sum_{t=1}^T \langle p_t - p_{t+1}, \hat{\ell}_t \rangle + \sum_{t=1}^T \langle p_{t+1}, \lh_t \rangle .
\end{align}
Let us focus on the second summation:
\begin{align}
\notag
\sum_{t=1}^T \langle p_{t+1}, \lh_t \rangle &= \sum_{t=1}^T \langle \EE[\argmin_{p} \langle \Lh_t, p \rangle + F(p, Z)] , \lh_t \rangle \\
\notag
&= \EE[ \sum_{t=1}^T \langle \argmin_{p} \langle \Lh_t, p \rangle + F(p, Z) , \lh_t \rangle ] \\
&\le \EE[F(p,Z) - F(p_0,Z)] + \sum_{t=1}^T \langle p, \lh_t \rangle,
\label{eq:betheleader}
\end{align}
where the last inequality is true for any $p$ due to the ``be-the-leader'' argument. Note that the expectations above are only w.r.t.\ $Z$ since $\lh_t$ are still fixed.
Combining~\eqref{eq:firstdecomp} and~\eqref{eq:betheleader} and taking expectations over the internal randomness of the bandit algorithm gives, for any $p \in \Delta_\tau$,
\begin{align*}
\EE[ \sum_{t=1}^T \langle p_{t}, \lh_t \rangle ]
&\le
\EE[  \sum_{t=1}^T \langle p_t - p_{t+1}, \hat{\ell}_t \rangle ]
+
\EE[F(p,Z) - F(p_0,Z)] +
\EE[ \sum_{t=1}^T \langle p, \lh_t \rangle ] \\
& =
\EE[  \sum_{t=1}^T \langle p_t - p_{t+1}, \hat{\ell}_t \rangle ]
+
\EE[F(p,Z) - F(p_0,Z)] +
\sum_{t=1}^T \langle p, \ell_t \rangle \\
& \le
\EE[  \sum_{t=1}^T \langle p_t - p_{t+1}, \hat{\ell}_t \rangle ]
+
\EE[\max_{p \in \Delta_\tau} F(p,Z) - F(p_0,Z)] +
\sum_{t=1}^T \langle p, \ell_t \rangle .
\end{align*}
To finish the proof, note that $\langle p_{t}, \lh_t \rangle = \ell_{t,i_t}$ and
\[
\min_{p \in \Delta_\tau} \sum_{t=1}^T \langle p, \ell_t \rangle
\le \min_{p} \sum_{t=1}^T \langle p, \ell_t \rangle + \tau NT
= L^*_T + \tau N T .
\]
because for any $p' \in \simplex{N}$, there is a $p \in \Delta_\tau$ such that $\| p'-p \|_1 \le \tau N$.
\end{proof}

\begin{lemma}\label{lem:gammastability}
Suppose $1 \le \gamma \le 2$. If the full information GBPA is DiffStable($D_{\infty,\gamma}$,$\| \cdot \|_\infty$) at level $\epsilon$. Then, we have
\[
\langle p_t - p_{t+1}, \hat{\ell}_t \rangle \le \epsilon \hat{\ell}_{t,i_t}^2 p_{t,i_t}^\gamma .
\]
\end{lemma}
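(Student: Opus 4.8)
The plan is to reduce the inner product to a single coordinate, invoke the DiffStable hypothesis on a carefully chosen singleton event, and then bridge from the Tsallis-logarithm bound back to a bound on the raw probability gap via an elementary calculus inequality.

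First I would exploit the sparsity of the estimated loss. Since $\hat{\ell}_t = \frac{\ell_{t,i_t}}{p_{t,i_t}}\e_{i_t}$ is supported on the single coordinate $i_t$, the inner product collapses to $\langle p_t - p_{t+1}, \hat{\ell}_t\rangle = (p_{t,i_t} - p_{t+1,i_t})\,\hat{\ell}_{t,i_t}$, and moreover $\|\hat{\ell}_t\|_\infty = \hat{\ell}_{t,i_t}$. Recalling that $p_t = \nabla\tf(\Lh_{t-1})$ and $p_{t+1} = \nabla\tf(\Lh_t)$ are precisely the distributions the full information GBPA outputs on the estimated loss prefixes $\hat{\ell}_{1:t-1}$ and $\hat{\ell}_{1:t}$, the DiffStable($D_{\infty,\gamma},\|\cdot\|_\infty$) hypothesis gives $D_{\infty,\gamma}(p_t,p_{t+1}) \le \epsilon\|\hat{\ell}_t\|_\infty = \epsilon\hat{\ell}_{t,i_t}$. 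I would then specialize the supremum in the definition of $D_{\infty,\gamma}$ to the event $B = \{i_t\}$, which yields $\log_\gamma(p_{t,i_t}) - \log_\gamma(p_{t+1,i_t}) \le \epsilon\hat{\ell}_{t,i_t}$.

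The crux is to convert this $\log_\gamma$ increment into a bound on $p_{t,i_t} - p_{t+1,i_t}$. Writing $a = p_{t,i_t}$ and $b = p_{t+1,i_t}$, it suffices to prove $a - b \le \epsilon\hat{\ell}_{t,i_t}\,a^\gamma$, since multiplying through by $\hat{\ell}_{t,i_t}\ge 0$ then gives the claimed per-step bound. In turn, this reduces to the elementary inequality $\frac{a-b}{a^\gamma} \le \log_\gamma(a) - \log_\gamma(b)$ for $0 < b \le a \le 1$: combined with the divergence bound $\log_\gamma(a) - \log_\gamma(b) \le \epsilon\hat{\ell}_{t,i_t}$ it gives $\frac{a-b}{a^\gamma}\le\epsilon\hat{\ell}_{t,i_t}$, and multiplying by $a^\gamma>0$ finishes. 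I would establish the elementary inequality from the integral representation $\log_\gamma(a) - \log_\gamma(b) = \int_b^a x^{-\gamma}\,dx$ (using $\frac{d}{dx}\log_\gamma(x) = x^{-\gamma}$) together with the bound $x^{-\gamma} \ge a^{-\gamma}$ on $[b,a]$ (valid since $\gamma>0$ and $x\le a$); integrating this lower bound yields exactly $\frac{a-b}{a^\gamma}$. The case $b > a$ is trivial, the left side of the target being negative and the right side nonnegative, so no monotonicity of the probabilities need be argued separately.

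The main obstacle is this middle step: recognizing that the factor $p_{t,i_t}^\gamma$ appearing in the statement is precisely the price of lower-bounding the integrand $x^{-\gamma}$ by its value $a^{-\gamma}$ at the upper endpoint. This is also where the restriction $\gamma\in[1,2]$ and the use of the Tsallis divergence pay off: since $p_{t,i_t}\le 1$, a larger $\gamma$ produces a smaller $p_{t,i_t}^\gamma$, which is exactly the tighter control on the inverse-probability blow-up carried by $\hat{\ell}_{t,i_t}^2$ that the subsequent bandit regret analysis requires.
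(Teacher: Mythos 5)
Your proof is correct and follows essentially the same route as the paper's: reduce to the single coordinate $i_t$ via the sparsity of $\hat{\ell}_t$, apply the DiffStable hypothesis to the singleton event to get $\log_\gamma(p_{t,i_t}) - \log_\gamma(p_{t+1,i_t}) \le \epsilon \hat{\ell}_{t,i_t}$, and convert that Tsallis-logarithm gap into a bound on $p_{t,i_t} - p_{t+1,i_t}$. The only difference is in the final calculus step, where the paper algebraically inverts the relation and invokes $(1+x)^{-r} \ge 1 - rx$, treating $\gamma = 1$ and $\gamma > 1$ as separate cases, whereas your integral representation $\log_\gamma(a) - \log_\gamma(b) = \int_b^a x^{-\gamma}\,dx \ge (a-b)\,a^{-\gamma}$ handles both cases uniformly---a modest but genuine simplification.
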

\begin{proof}
First let us consider the case $\gamma > 1$ first. Recall at most one entry of $\hat{\ell}_t$ is non-zero. Therefore, $\langle p_t - p_{t+1}, \hat{\ell}_t \rangle = (p_{t,i_t} - p_{t+1,i_t})\hat{\ell}_{t,i_t}$. For the remainder of the proof, let us denote $p_{t,i_t}$ and $p_{t+1,i_t}$ by $p$ and $p'$ repectively. Also let $\hat{\ell}$ denote $\hat{\ell}_{t,i_t}$. Because of the stability assumption, we know that $D_{\infty,\gamma}(p_t,p_{t+1}) \le \epsilon \| \lh_t \|_\infty = \epsilon\hat{\ell}$. Therefore, we have
\begin{align*}
& & \log_\gamma p - \log_\gamma p' &\le \epsilon \hat{\ell} \\
\Rightarrow&& \frac{p^{1-\gamma}}{1-\gamma} - \frac{(p')^{1-\gamma}}{1-\gamma} &\le \epsilon \hat{\ell} \\
\Rightarrow&& p' &\ge p  \left( 1+ (\gamma-1) \epsilon \hat{\ell} p^{\gamma-1} \right)^{-\frac{1}{\gamma-1}}.
\end{align*}
Noting that $(1+x)^{-r} \ge 1-rx$ for $r > 0$ and $x \ge 0$, we have
\[
p' \ge p\left( 1 - \epsilon \hat{\ell} p^{\gamma-1}  \right),
\]
which proves the lemma for $1 < \gamma \le 2$.

Finally note that the lemma also holds in the case $\gamma = 1$, because then we have
\begin{align*}
& & \log p - \log p' &\le \epsilon \hat{\ell} \\
\Rightarrow& & p' &\ge p\exp(-\epsilon \hat{\ell}) \ge p(1-\epsilon \hat{\ell}). \qedhere
\end{align*}
\end{proof}

\subsection{Proof of Theorem~\ref{thm:gbpa_bandits}}

\newtheorem*{thm:gbpa_bandits}{Theorem~\ref{thm:gbpa_bandits}}

\begin{thm:gbpa_bandits}[Zero-order and first-order regret bounds for multi-armed bandits]
Algorithm~\ref{alg:gbpa_bandits} enjoys the following bounds when used with different perturbations/regularizers:
\begin{enumerate}[noitemsep,nolistsep,leftmargin=*]
\item FTPL with Gamma, Gumbel, Fr{\'e}chet , Weibull, and Pareto pertubations (with a proper choice of distribution parameters) all achieve near-optimal expected regret of $O(\sqrt{N T \log N})$.
\item FTRL with Tsallis neg-entropy $F(p) = -\eta \sum_{i=1}^N p_i \log_\alpha (1/p_i)$ for $0 < \alpha < 1$ (with a proper choice of $\eta$) achieves optimal expected regret of $O(\sqrt{N T})$.
\item FTRL with log-barrier regularizer $F(p) = -\eta \sum_{i=1}^N \log p_i$ (with a proper choice of $\eta$) achieves expected regret of $O(\sqrt{N L^*_T \log (NT)} + N \log(NT))$.
\end{enumerate}
\end{thm:gbpa_bandits}
\begin{proof}
For each of the three parts, we will show that the full info GBPA is DiffStable($D_{\infty,\gamma}$, $\|\cdot\|_\infty$) for an appropriate $\gamma \in [1,2]$ and then apply Lemma~\ref{lem:regret_bandit} (or, in the log-barrier case, the slightly more general Lemma~\ref{lem:regret_bandit_full}).

{\bf Part 1:}
As in the proof of Theorem~\ref{thm:analysis_experts}, these distributions (with proper choice of parameters) lead to a full information GBPA that is DiffStable($D_\infty, \| \cdot \|_\infty$) at level $2\epsilon$. In the FTPL case when $|F(p,Z)| = |\langle p, Z \rangle| \le \|Z\|_\infty$, we have
\[
\EE\left[ \max_{p} F(p, Z) - \min_{p} F(p, Z) \right] \le 2\EE  \| Z \|_\infty ,
\]
which scales as $\frac{\log N}{\epsilon}$ for these distributions. Lemma~\ref{lem:regret_bandit} gives the expected regret bound of
\begin{equation}\label{eq:ftplnearfinal1}
2\epsilon \, \EE\left[ \sumtt \hat{\ell}_{t,i_t}^2 p_{t,i_t} \right] + 2\EE  \| Z \|_\infty.
\end{equation}
Since $\ell_{t,i_t} \in [0,1]$, we have
\[
\EE\left[ \hat{\ell}_{t,i_t}^2 p_{t,i_t} \right] =
\EE\left[ \frac{\ell_{t,i_t}^2}{p_{t,i_t}^2} p_{t,i_t} \right] \le \EE\left[\frac{1}{p_{t,i_t}} \right] = \EE\left[ \sum_{i=1}^N p_{t,i} \frac{1}{p_{t,i}} \right] =  N .
\]
Plugging this into~\eqref{eq:ftplnearfinal1} and tuning $\epsilon$ give us Part 1.

{\bf Part 2:}
When $F(p) = -\eta \sum_{i=1}^N p_i \log_\alpha (1/p_i)$ for $\alpha \in (0,1)$, then $\tf$ is $(2-\alpha,1/(\eta\alpha))$-differentially consistent \citep{abernethy2015fighting}. By Proposition~\ref{prop:dc_to_dp}, the full information GBPA is DiffStable($D_{\infty,2-\alpha}$,$\| \cdot \|_\infty$) at level $2/(\eta\alpha)$.
Also note that $F(p)$ is negative and its minimum value is achieved at the uniform distribution. Therefore we can show
\[
\max_{p} F(p) - \min_{p} F(p) \le \eta \frac{\sum_{i=1}^N (1/N)^\alpha - 1}{1-\alpha} \le \eta \frac{N^{1-\alpha}}{1-\alpha} .
\]
Lemma~\ref{lem:regret_bandit} gives the expected regret bound of 
\begin{equation}\label{eq:tsallisnearfinal}
\frac{2}{\eta\alpha} \, \EE\left[ \sumtt \hat{\ell}_{t,i_t}^2 p^{2-\alpha}_{t,i_t} \right] + \eta \frac{N^{1-\alpha}}{1-\alpha} .
\end{equation}
Since $\ell_{t,i_t} \in [0,1]$, we have 
\[
\EE\left[ \hat{\ell}_{t,i_t}^2 p^{2-\alpha}_{t,i_t} \right] =
\EE\left[ \frac{\ell_{t,i_t}^2}{p_{t,i_t}^2} p_{t,i_t}^{2-\alpha} \right] \le \EE\left[ p_{t,i_t}^{-\alpha} \right] =  \EE\left[ \sum_{i=1}^N p_{t,i}^{1-\alpha} \right] \le N^{\alpha},
\]
where the last inequality follows from the fact that the function $p^{1-\alpha}$ is concave for $\alpha \in (0,1)$.
Plugging this into~\eqref{eq:tsallisnearfinal} and tuning $\eta$ give us Part 2.

{\bf Part 3:}
When $F(p) = -\eta \sum_{i=1}^N \log p_i$, then $\tf$ is $(2,1/\eta)$-differentially consistent (see Lemma~\ref{lem:logdc} below).  By Proposition~\ref{prop:dc_to_dp}, the full information GBPA is DiffStable($D_{\infty,2}$,$\| \cdot \|_\infty$) at level $2/\eta$.
Since $F$ is non-negative, we have
\[
\max_{p \in \Delta_\tau} F(p) - F(p_0) \le \eta N \log (1/\tau) .
\]
Lemma~\ref{lem:regret_bandit_full} gives the expected regret bound of 
\begin{equation}\label{eq:lognearfinal}
\frac{2}{\eta} \, \EE\left[ \sumtt \hat{\ell}_{t,i_t}^2 p_{t,i_t}^2 \right] + \eta N \log(1/\tau) + \tau N T .
\end{equation}
Since $\ell_{t,i_t} \in [0,1]$, we have 
\[
\EE\left[ \hat{\ell}_{t,i_t}^2 p^{2}_{t,i_t} \right] =
\EE\left[ \frac{\ell_{t,i_t}^2}{p_{t,i_t}^2} p_{t,i_t}^2 \right] \le \EE\left[ \ell_{t,i_t}^2 \right] \le \EE\left[ \ell_{t,i_t} \right] .
\]
Plugging this into~\eqref{eq:lognearfinal} and choosing $\tau = 1/(NT)$, we get the following recursive inequality for the expected regret $R_T$:
\[
R_T \le \frac{2}{\eta} \left(R_T + L^*_T\right) + \eta N \log(NT) + 1.
\]
If $L^*_T < 2$, set $\eta = 4$ to get the bound $R_T \le 8 N \log (NT) + 4$. If $L^*_T \ge 2$ then set $\eta = \sqrt{ 2 L^*_T N \log(NT)}$ and note that $\eta > 2\sqrt{2}$ if $N, T \ge 2$. In this case, the bound becomes $R_T \le \left( 4\sqrt{L^*_T N \log(NT) } + 1\right) / (\sqrt{2}-1)  $, which completes the proof.
\end{proof}

\subsection{Differential consistency of GBPA potential with log barrier regularization}

\begin{lemma}\label{lem:logdc}
Let $F(p) = -\eta \sum_{i=1}^N \log p_i$, $\tf(L) = \min_{p} \langle L, p \rangle + F(p)$ and $p(L) = \argmin_{p} \langle L, p \rangle + F(p)$. Then $\tf$ is $(2,1/\eta)$-differentially consistent.
\end{lemma}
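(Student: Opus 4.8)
The plan is to reduce the claim to a single diagonal Hessian entry and then read off the needed bound directly from the proof of Proposition~\ref{prop:POD}. Since $F(p) = -\eta\sum_{i=1}^N \log p_i$ is separable, differentiable, and strictly convex with domain $(0,\infty)$, Proposition~\ref{prop:POD} applies verbatim: the minimizer $p(L) = \argmin_p \langle L,p\rangle + F(p)$ is an interior point of $\simplex{N}$ and is a smooth function of $L$, so $\tf$ is twice differentiable. By the envelope theorem, $\nabla_i \tf(L) = p_i(L)$, and hence $\nabla^2_{ii}\tf = \frac{\partial p_i}{\partial L_i}$. Therefore $(2,1/\eta)$-differential consistency is \emph{exactly} the inequality $-\frac{\partial p_i}{\partial L_i} \le \frac{1}{\eta}\,p_i^2$, and the whole task reduces to bounding the diagonal of the Jacobian.

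First I would specialize the identities already derived in the proof of Proposition~\ref{prop:POD} to the choice $f(p) = -\eta\log p$. There it is shown that $f''(p_i)\,\frac{\partial p_i}{\partial L_i} = -1 - \frac{\partial \lambda}{\partial L_i}$, where $\lambda$ is the Lagrange multiplier for the constraint $\sum_i p_i = 1$. Computing $f''(p) = \eta/p^2$ gives
\[
-\frac{\partial p_i}{\partial L_i} = \frac{p_i^2}{\eta}\left(1 + \frac{\partial \lambda}{\partial L_i}\right).
\]
The remaining ingredient is the bound $-\frac{\partial \lambda}{\partial L_i} \in (0,1)$, which is also established inside the proof of Proposition~\ref{prop:POD} via the explicit formula $-\frac{\partial \lambda}{\partial L_i} = (f^\star)''(-\lambda - L_i)\big/\sum_j (f^\star)''(-\lambda - L_j)$ together with strict convexity of the Fenchel conjugate $f^\star$. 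Consequently $1 + \frac{\partial \lambda}{\partial L_i} \in (0,1)$, and substituting yields $-\frac{\partial p_i}{\partial L_i} \le \frac{p_i^2}{\eta} = \frac{1}{\eta}(\nabla_i\tf)^2$, which is the desired claim.

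I do not expect a genuine obstacle: the argument is entirely a direct specialization of Proposition~\ref{prop:POD}. The only points requiring care are (i) confirming that the log-barrier keeps the minimizer in the open simplex, so that the ``no need to worry about $p_i \ge 0$'' analysis of Proposition~\ref{prop:POD} is legitimate here, and (ii) tracking signs so that the factor $1 + \frac{\partial \lambda}{\partial L_i}$ is correctly shown to lie in $(0,1)$ rather than being dropped. As a sanity check, the stationarity condition $f'(p_i) = -\eta/p_i = -(L_i + \lambda)$ gives the closed form $p_i = \eta/(L_i + \lambda)$, from which one can differentiate directly to obtain $\frac{\partial p_i}{\partial L_i} = -\frac{p_i^2}{\eta}\bigl(1 + \frac{\partial \lambda}{\partial L_i}\bigr)$ and recover the same bound, should an explicit computation be preferred over the abstract one.
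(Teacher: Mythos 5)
Your proof is correct, and it takes a genuinely different route from the paper's. The paper proves Lemma~\ref{lem:logdc} abstractly via convex duality: it observes that $-\tf(-L)$ is the Fenchel conjugate of $F + \mathbb{I}_{\simplex{N}}$, notes $\nabla^2 F(p) = \eta\,\diag(p_1^{-2},\ldots,p_N^{-2})$ is a sub-Hessian in the sense of Penot, and invokes Proposition~3.2 of that reference to get the full Loewner inequality $-\nabla^2\tf \preceq \eta^{-1}\diag(p_1^2,\ldots,p_N^2)$, whose diagonal is the desired consistency bound. You instead reuse the explicit Lagrangian computation already carried out in the proof of Proposition~\ref{prop:POD}: with $f(p) = -\eta\log p$ one has $f''(p_i) = \eta/p_i^2$, so differentiating the stationarity condition $L_i + f'(p_i) + \lambda = 0$ gives $-\partial p_i/\partial L_i = (p_i^2/\eta)\left(1 + \partial\lambda/\partial L_i\right)$, and the bound $-\partial\lambda/\partial L_i \in (0,1)$ established there (via $(f^\star)'' > 0$) yields $1 + \partial\lambda/\partial L_i \in (0,1)$, hence $-\nabla^2_{ii}\tf \le p_i^2/\eta = (1/\eta)(\nabla_i\tf)^2$, which is exactly $(2,1/\eta)$-differential consistency since $\nabla_i\tf = p_i$ by the envelope theorem. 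Your sign bookkeeping and the interiority point (the log-barrier keeps the minimizer in the open simplex, so the constraint $p_i \ge 0$ is inactive and the KKT system is smooth) are both handled correctly. The trade-off: your argument is elementary and self-contained, requiring no external citation and proving precisely the diagonal bound that the definition of differential consistency demands; the paper's argument is shorter on the page and delivers a stronger matrix-level inequality, but at the cost of leaning on the sub-/super-Hessian machinery of an outside reference.
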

\begin{proof}
We observe that straightforward calculus gives $\nabla^2 F(p) = \eta \text{diag}(p_1^{- 2}, \ldots,p_N^{- 2})$. Let $\mathbb{I}_{\simplex{N}}(\cdot)$ be the indicator function of $\simplex{N}$; 
	that is, $\mathbb{I}_{\simplex{N}}(x) = 0$ for $x \in \simplex{N}$ and $\mathbb{I}_{\simplex{N}}(x) = \infty$ for $x \notin \simplex{N}$. 
	It is clear that $-\tf(-L)$ is the dual of the function $F(x) + \mathbb{I}_{\simplex{N}}(x)$, and moreover we observe that $\nabla^2 F(p)$ is a \emph{sub-Hessian} of $F(\cdot) + \mathbb{I}_{\D_N}(\cdot)$ at $p$, following the setup of \cite{penot1994sub}. Taking advantage of Proposition 3.2 in the latter reference, we conclude that $\nabla^{-2}F(p(-L))$ is a \emph{super-Hessian} of $F^* = -\tf(-L)$ at $L$. Hence, we get
		\[ -\nabla^2 \tf(-L) \preceq 	 \eta^{-1}\mathrm{diag}(p^{2}_{1}(-L), \ldots, p^{2}_{N}(-L))\]
	for any $L$. What we have stated, indeed, is that $\tf$ is $(2,1/\eta)$-differentially-consistent.
\end{proof}

\subsection{Bandits with experts}

We provide full details for the bandits with experts setting.

\subsubsection{Helper lemmas for the analysis}

\begin{lemma}
\label{lem:regret_bwe}
Suppose the full information GBPA uses a potential of the form $\tf(L) = \EE[\min_{p} \langle L, p \rangle + F(p, Z)]$. If the full information GBPA is DiffStable($D_{\infty}$,$\| \cdot \|_\infty$) at level $\epsilon$, then the expected regret of Algorithm~\ref{alg:gbpa_bwe} with no clipping can be bounded as
\[
\EE\left[ \sum_{t=1}^T \ell_{t,j_t} \right] - L_T^* \le \epsilon \, \EE\left[ \sumtt \hat{\ell}_{t,i_t}^2 q_{t,j_t} \right] + \EE\left[ \max_{p} F(p, Z) - \min_{p} F(p, Z) \right] .
\]
\end{lemma}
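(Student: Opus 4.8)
The plan is to follow the two–step template used for the multi-armed bandit case (Lemmas~\ref{lem:mabregret} and~\ref{lem:gammastability}), adapting each step to the extra structure introduced by the expert-to-action maps $\psi_t$ and $\phi_t$. Since the hypothesis only asks for stability w.r.t.\ the ordinary max-divergence $D_\infty$, this is effectively the $\gamma = 1$ instance, so no Tsallis machinery is needed. First I would reduce the problem to running the full-information GBPA on the estimated \emph{expert}-loss sequence $\hat h_t := \phi_t(\hat{\ell}_t) \in \RR^N$, and then produce (i) a be-the-leader regret decomposition and (ii) a per-round stability estimate; adding them yields the claim.

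For the decomposition, the key identity is the adjoint relation $\langle \psi_t(p), \hat{\ell} \rangle = \langle p, \phi_t(\hat{\ell}) \rangle$, which gives $\ell_{t,j_t} = \langle q_t, \hat{\ell}_t \rangle = \langle \psi_t(p_t), \hat{\ell}_t \rangle = \langle p_t, \hat h_t \rangle$. Thus the learner's cost equals exactly the cost the full-information GBPA pays on the sequence $\hat h_t$, and I can mimic Lemma~\ref{lem:mabregret}: write $\sum_t \langle p_t, \hat h_t \rangle = \sum_t \langle p_t - p_{t+1}, \hat h_t \rangle + \sum_t \langle p_{t+1}, \hat h_t \rangle$ and apply the be-the-leader argument to the second sum (after fixing the internal sampling randomness so that the $\hat h_t$ are frozen, and pulling $\EE_Z$ out of $\nabla\tf$). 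This bounds $\sum_t \langle p_{t+1}, \hat h_t \rangle$ by $\EE_Z[F(p,Z) - F(p_0,Z)] + \sum_t \langle p, \hat h_t \rangle$ for any fixed $p \in \simplex{N}$. Taking expectations and using unbiasedness $\EE[\hat h_t] = \phi_t(\ell_t)$ (whose $i$th entry is $\ell_{t,E_{i,t}}$), the choice $p = \e_{i^*}$ for the best expert $i^*$ turns $\EE[\sum_t \langle p, \hat h_t\rangle]$ into $L_T^*$, while bounding $F(p,Z)-F(p_0,Z)$ by $\max_p F - \min_p F$ produces the second term of the lemma.

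The heart of the argument is the per-round stability estimate, which replaces Lemma~\ref{lem:gammastability}. The crucial structural observation is that $\hat{\ell}_t = \frac{\ell_{t,j_t}}{q_{t,j_t}}\e_{j_t}$ is supported on a single action, so $\hat h_t = \phi_t(\hat{\ell}_t)$ is supported on the set of experts $S_t := \{ i : E_{i,t} = j_t \}$ and is \emph{constant} there, equal to $\hat{\ell}_{t,j_t}$. Hence $\langle p_t - p_{t+1}, \hat h_t \rangle = \hat{\ell}_{t,j_t}\big( p_t(S_t) - p_{t+1}(S_t) \big)$, and by the very definition of $\psi_t$ one has the identity $p_t(S_t) = \sum_{i: E_{i,t}=j_t} p_{t,i} = q_{t,j_t}$. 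Since the loss fed to the full-information algorithm is $\hat h_t$ with $\|\hat h_t\|_\infty = \hat{\ell}_{t,j_t}$, the DiffStable($D_\infty,\|\cdot\|_\infty$) hypothesis applied to the single event $B = S_t$ gives $p_{t+1}(S_t) \ge e^{-\epsilon \hat{\ell}_{t,j_t}} p_t(S_t) \ge (1 - \epsilon \hat{\ell}_{t,j_t}) q_{t,j_t}$, so $p_t(S_t) - p_{t+1}(S_t) \le \epsilon \hat{\ell}_{t,j_t} q_{t,j_t}$ and therefore $\langle p_t - p_{t+1}, \hat h_t \rangle \le \epsilon \hat{\ell}_{t,j_t}^2 q_{t,j_t}$ pointwise. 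Summing over $t$, taking expectations, and combining with the decomposition above gives the stated bound.

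I expect the main obstacle to be precisely this transition from a single coordinate (the MAB case) to a \emph{set} of experts: one must recognize that max-divergence stability controls the probability of the whole event $S_t$, not just one coordinate, and that the identity $p_t(S_t) = q_{t,j_t}$ is what re-expresses the bound in terms of the action probability $q_{t,j_t}$ appearing in the lemma. A secondary, more bookkeeping-level difficulty is keeping the two sources of randomness (the action sampling $j_t \sim q_t$ and the perturbation $Z$ inside $\tf$) properly ordered: the be-the-leader step must be carried out with the $\hat h_t$ frozen, exactly as in Lemma~\ref{lem:mabregret}, before averaging.
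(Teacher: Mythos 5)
Your proposal is correct and takes essentially the same route as the paper: the identical two-step structure of a be-the-leader decomposition over the transformed expert losses $\phi_t(\hat{\ell}_t)$ (the paper's Lemma~\ref{lem:bweregret}, using the adjoint identity $\langle \psi_t(p),\hat{\ell}\rangle = \langle p, \phi_t(\hat{\ell})\rangle$ and the comparator $p=\e_{i^*}$) followed by a per-round stability estimate (Lemma~\ref{lem:gammastability_bwe}). The only cosmetic difference is in the stability step: the paper applies the $D_\infty$ bound coordinate-wise, $p_{t+1,i} \ge p_{t,i}\,e^{-\epsilon \hat{\ell}_{t,j_t}} \ge p_{t,i}(1-\epsilon\hat{\ell}_{t,j_t})$ for each $i$, and then sums over $\{i : E_{i,t}=j_t\}$, whereas you apply the divergence bound once to the whole event $S_t$; both hinge on the same identity $\sum_{i:E_{i,t}=j_t} p_{t,i} = q_{t,j_t}$ and give the identical bound.
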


We will prove this lemma by proving Lemma~\ref{lem:bweregret} and Lemma~\ref{lem:gammastability_bwe} below that, when combined, immediately yield the desired result.

\begin{lemma}\label{lem:bweregret}
Suppose the full information GBPA uses a potential of the form $\tf(L) = \EE[\min_{p} \langle L, p \rangle + F(p, Z)]$. Then, we have
\[
\EE\left[ \sum_{t=1}^T \ell_{t,i_t} \right] - L_T^* \le \EE\left[ \sum_{t=1}^T \langle p_t - p_{t+1}, \overline{\phi}_t \rangle \right] + \EE\left[ \max_{p} F(p, Z) - \min_{p} F(p, Z) \right] .
\]
\end{lemma}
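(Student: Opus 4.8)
The plan is to mirror the proof of Lemma~\ref{lem:mabregret}, but to carry the whole argument into ``experts space'' $\RR^N$ using the adjoint relation $\langle \psi_t(p), \hat{\ell} \rangle = \langle p, \phi_t(\hat{\ell}) \rangle$ that couples the two conversion maps. First I would fix the internal randomness used to draw $j_t \sim q_t$; this freezes every estimated loss vector $\lh_t = \frac{\ell_{t,j_t}}{q_{t,j_t}}\e_{j_t}$ and hence the whole sequence $\overline{\phi}_t = \sum_{s \le t}\phi_s(\lh_s)$. On this fixed sequence the full-information GBPA regenerates $p_t = \nabla\tf(\overline{\phi}_{t-1})$ deterministically (the expectation inside $\tf$ still acting only on $Z$), so that $p_{t+1} = \EE_Z[\argmin_p \langle \overline{\phi}_t, p\rangle + F(p,Z)]$.

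The next step is a telescoping decomposition. Since $q_t = \psi_t(p_t)$ and $\langle q_t, \lh_t \rangle = \ell_{t,j_t}$, the adjoint identity rewrites the observed scalar loss as $\ell_{t,j_t} = \langle p_t, \phi_t(\lh_t)\rangle$, which lets me work entirely in $\RR^N$:
\[
\sum_{t=1}^T \ell_{t,j_t} = \sum_{t=1}^T \langle p_t, \phi_t(\lh_t)\rangle = \sum_{t=1}^T \langle p_t - p_{t+1}, \phi_t(\lh_t)\rangle + \sum_{t=1}^T \langle p_{t+1}, \phi_t(\lh_t)\rangle .
\]
The first sum is precisely the stability term to be controlled later (in Lemma~\ref{lem:gammastability_bwe}), matching the per-step increment $\phi_t(\lh_t)$ as in the multi-armed bandit case. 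For the second sum I would run the be-the-leader argument on the sequence $h_0(p) = F(p,Z)$, $h_t(p) = \langle \phi_t(\lh_t), p\rangle$: induction shows that for any fixed comparator $u$, $\sum_t \langle p_{t+1}, \phi_t(\lh_t)\rangle \le \EE_Z[F(u,Z) - F(p_0,Z)] + \sum_t \langle u, \phi_t(\lh_t)\rangle$, where $p_0 = \argmin_p F(p,Z)$ (the leader at $\overline{\phi}_0 = 0$).

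Finally I would take expectation over the internal randomness and invoke unbiasedness. Conditioned on $q_t$, we have $\EE[\phi_t(\lh_t)]_i = \ell_{t, E_{i,t}}$, so choosing the deterministic comparator $u = \e_{i^*}$ with $i^* \in \argmin_i \sum_t \ell_{t,E_{i,t}}$ converts $\EE[\sum_t \langle u, \phi_t(\lh_t)\rangle]$ into exactly $L_T^* = \min_i \sum_t \ell_{t,E_{i,t}}$. Bounding $\EE[F(u,Z) - F(p_0,Z)] \le \EE[\max_p F(p,Z) - \min_p F(p,Z)]$ then yields the claim. The main obstacle is conceptual rather than computational: making the $\psi_t/\phi_t$ translation rigorous, in particular verifying that $\phi_t$ is the correct adjoint so that the observed loss, the be-the-leader comparator, and the expert-based definition of $L_T^*$ all align in $\RR^N$. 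Once that alignment is established, the remaining manipulations copy the proof of Lemma~\ref{lem:mabregret} almost verbatim.
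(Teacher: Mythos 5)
Your proposal is correct and follows essentially the same route as the paper's own proof: fix the internal randomness so the full-information GBPA runs deterministically on the frozen estimates, telescope $\sum_t \langle p_t, \phi_t(\hat{\ell}_t)\rangle$ into a stability term plus a lookahead term, handle the lookahead term by the be-the-leader argument applied pointwise in $Z$, and then use the adjoint identity $\langle \psi_t(p), \hat{\ell}\rangle = \langle p, \phi_t(\hat{\ell})\rangle$ together with unbiasedness of $\hat{\ell}_t$ to identify the left-hand side with the expected loss and the comparator $u = \e_{i^*}$ with $L_T^*$. You also correctly read $\overline{\phi}_t$ in the statement as the per-round increment $\phi_t(\hat{\ell}_t)$ (the interpretation the paper itself uses when bounding this term), so no gap remains.
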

\begin{proof}
Fix the source of internal randomness used by Algorithm~\ref{alg:gbpa_bwe} (with no clipping) to sample $j_t \sim q_t$. This fixes all the estimated loss vectors $\lh_t$ and hence $\overline{\phi}_t$. The full information GBPA algorithm 
will deterministically generate the same sequence $p_t$ of probabilities on this estimated loss sequence using the rule $p_t = \nabla \tf(\overline{\phi}_{t-1})$. We have
\begin{align*}
\sum_{t=1}^T \langle p_t, \overline{\phi}_t \rangle &= \sum_{t=1}^T \langle p_t - p_{t+1}, \overline{\phi}_t \rangle + \sum_{t=1}^T \langle p_{t+1}, \overline{\phi}_t \rangle .
\end{align*}
Proceeding as in the proof of Lemma~\ref{lem:mabregret} gives us, for any $p \in \simplex{N}$,
\begin{equation*}
\EE[ \sum_{t=1}^T \langle p_{t}, \overline{\phi}_t \rangle ]
\le
\EE[  \sum_{t=1}^T \langle p_t - p_{t+1}, \overline{\phi}_t \rangle ]
+
\EE[\max_{p} F(p,Z) - \min_p F(p,Z)] +
\EE[ \sum_{t=1}^T \langle p, \overline{\phi}_t \rangle ].
\end{equation*}
To finish the proof, first note that
\begin{align*}
\EE[ \sum_{t=1}^T \langle p_{t}, \overline{\phi}_t \rangle ]
&= \EE\left[ \sum_{t=1}^T \langle p_t, \phi_t(\hat{\ell}_t) \rangle \right] = \EE\left[ \sum_{t=1}^T \langle \psi_t(p_t), \hat{\ell}_t) \rangle \right] \\
&= \EE\left[ \sum_{t=1}^T \langle q_t, \hat{\ell}_t \rangle \right] = \EE\left[ \sum_{t=1}^T \ell_{t,j_t} \right].
\end{align*}
Second, note that, by choosing $p = \e_i \in \RR^N$,
\begin{align*}
\sum_{t=1}^T \EE[ \langle p, \overline{\phi}_t \rangle ] &= \sum_{t=1}^T \EE[ \langle \e_i, \overline{\phi}_t \rangle ] = \sum_{t=1}^T \EE[ \langle e_i, \phi_{t}(\hat{\ell}_t) \rangle ] \\
&= \sum_{t=1}^T \EE\left[ \langle \psi_t(e_i), \hat{\ell}_t \rangle\right] = \sum_{t=1}^T  \langle \psi_t(e_i), \ell_t \rangle = \sum_{t=1}^T \ell_{t,E_{i,t}} . 
\end{align*}
Choosing $i$ that makes the last summation equal to $L_{T}^{*}$ completes the proof.
\end{proof}

\begin{lemma}\label{lem:gammastability_bwe}
If the full information GBPA is DiffStable($D_{\infty}$,$\| \cdot \|_\infty$) at level $\epsilon$. Then, we have
\[
\langle p_t - p_{t+1}, \overline{\phi}_t \rangle \le \epsilon \hat{\ell}_{t,j_t}^2 q_{t,j_t} .
\]
\end{lemma}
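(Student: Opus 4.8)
The plan is to transcribe the $\gamma=1$ argument of Lemma~\ref{lem:gammastability} into the experts-to-actions setting, with the single played coordinate replaced by the set of experts that recommended the played action. First I would make explicit the structure of the per-round expert-loss vector $\overline{\phi}_t = \phi_t(\hat{\ell}_t)$ appearing in the statement. Since the action-level estimate $\hat{\ell}_t = \frac{\ell_{t,j_t}}{q_{t,j_t}}\e_{j_t}$ has a single nonzero entry, the definition $\phi_t(\hat{\ell}_t) = \sum_{j}\sum_{i:E_{i,t}=j}\hat{\ell}_{t,j}\e_i$ collapses to
\[
\overline{\phi}_t = \hat{\ell}_{t,j_t}\sum_{i\in S}\e_i, \qquad S = \{\,i : E_{i,t}=j_t\,\},
\]
the set of experts that recommended the chosen action $j_t$. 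In particular $\|\overline{\phi}_t\|_\infty = \hat{\ell}_{t,j_t}$ (the set $S$ is nonempty because $j_t$ is sampled from $q_t$ and $q_{t,j_t}>0$).

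Next I would rewrite the inner product in terms of the mass the expert distributions place on $S$. Because $\overline{\phi}_t$ is supported on $S$ with constant value $\hat{\ell}_{t,j_t}$,
\[
\langle p_t - p_{t+1},\overline{\phi}_t\rangle = \hat{\ell}_{t,j_t}\Bigl(\textstyle\sum_{i\in S}p_{t,i} - \sum_{i\in S}p_{t+1,i}\Bigr).
\]
The key observation is that, by the definition of $\psi_t$, the quantity $\sum_{i\in S}p_{t,i}$ is exactly $q_{t,j_t}$. Writing $P(S)=\sum_{i\in S}p_{t,i}=q_{t,j_t}$ and $Q(S)=\sum_{i\in S}p_{t+1,i}$ for the masses assigned to the fixed set $S$ by $p_t$ and $p_{t+1}$, the inner product equals $\hat{\ell}_{t,j_t}\bigl(P(S)-Q(S)\bigr)$. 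I would flag the one subtlety here: $Q(S)$ is the mass the next-round expert distribution puts on the time-$t$ set $S$, which is \emph{not} $q_{t+1,j_t}$ (that would use the time-$(t+1)$ advice through $\psi_{t+1}$); keeping $S$ fixed throughout sidesteps this entirely.

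Finally I would invoke the stability hypothesis. Since $p_t = \nabla\tf(\overline{\phi}_{t-1})$ and $p_{t+1}=\nabla\tf(\overline{\phi}_{t-1}+\overline{\phi}_t)$ differ exactly by feeding the loss vector $\overline{\phi}_t$, DiffStable($D_\infty,\|\cdot\|_\infty$) at level $\epsilon$ gives $D_\infty(p_t,p_{t+1})\le \epsilon\|\overline{\phi}_t\|_\infty = \epsilon\hat{\ell}_{t,j_t}$. Applying the definition of $D_\infty$ to the test set $B=S$ yields $\log\bigl(P(S)/Q(S)\bigr)\le \epsilon\hat{\ell}_{t,j_t}$, i.e. $Q(S)\ge P(S)\exp(-\epsilon\hat{\ell}_{t,j_t})$. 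In the loss-only setting $\hat{\ell}_{t,j_t}\ge 0$, so $e^{-x}\ge 1-x$ gives $Q(S)\ge P(S)(1-\epsilon\hat{\ell}_{t,j_t})$, whence $P(S)-Q(S)\le \epsilon\hat{\ell}_{t,j_t}P(S)=\epsilon\hat{\ell}_{t,j_t}q_{t,j_t}$; substituting back gives $\langle p_t-p_{t+1},\overline{\phi}_t\rangle\le \epsilon\hat{\ell}_{t,j_t}^2 q_{t,j_t}$, as claimed. There is no real obstacle: the computation mirrors the $\gamma=1$ case of Lemma~\ref{lem:gammastability}, and the only steps needing care are choosing the correct test set $B=S$ and checking $\sum_{i\in S}p_{t,i}=q_{t,j_t}$, which is precisely where the expert-to-action map $\psi_t$ enters.
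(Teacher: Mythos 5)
Your proof is correct and follows essentially the same route as the paper's: both rest on the stability bound $D_\infty(p_t,p_{t+1})\le\epsilon\|\overline{\phi}_t\|_\infty=\epsilon\hat{\ell}_{t,j_t}$, the inequality $e^{-x}\ge 1-x$, and the identity $\sum_{i:E_{i,t}=j_t}p_{t,i}=q_{t,j_t}$. The only cosmetic difference is that the paper applies the max-divergence bound to each singleton $\{i\}$ and sums over $i\in S$, whereas you apply it once to the set $S$ directly --- an equivalent use of the definition.
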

\begin{proof}
Because of the stability assumption, we know that $D_{\infty,\gamma}(p_t,p_{t+1}) \le \epsilon \| \overline{\phi}_t \|_\infty = \epsilon \| \lh_t \|_\infty = \epsilon \lh_{t,j_t}$. Therefore, for any $i \in [N]$,
\begin{align*}
p_{t+1,i} &\ge p_{t,i} \exp(-\epsilon \lh_{t,j_t}) \ge p_{t,i} (1-\epsilon \lh_{t,j_t}).
\end{align*}
Now we have
\begin{align*}
\langle p_t - p_{t+1}, \phi_t(\hat{\ell}_t) \rangle & = \sum_{i:E_{i,t}=j_t} (p_{t,i} - p_{t+1,i}) \hat{\ell}_{t,j_t}  \le  \sum_{i:E_{i,t}=j_t} (\epsilon  \hat{\ell}_{t,j_t} p_{t,i}) \hat{\ell}_{t,j_t}  \\
&=  \epsilon  \hat{\ell}_{t,j_t}^2 \sum_{i:E_{i,t}=j_t} p_{t,i} = \epsilon  \hat{\ell}_{t,j_t}^2 q_{t,j_t} . \qedhere
\end{align*}
\end{proof}

\subsubsection{Regret bound and analysis}

\newtheorem*{thm:gbpa_bwe}{Theorem~\ref{thm:gbpa_bwe}}
\begin{thm:gbpa_bwe}
[Zero-order and first-order regret bounds for bandits with experts]
Algorithm~\ref{alg:gbpa_bwe} enjoys the following bounds when used with different perturbations such as Gamma, Gumbel, Fr{\'e}chet , Weibull, and Pareto (with a proper choice of parameters).
\begin{enumerate}[noitemsep,nolistsep]
\item
With no clipping, it achieves near optimal expected regret of $O(\sqrt{ KT \log N})$.
\item
With clipping, it achieves expected regret of $O\left( \left( K \log N \right)^{1/3} (L^*_T)^{2/3} \right)$.
\end{enumerate}
\end{thm:gbpa_bwe}
\begin{proof}
The proof of Part 1 is very similar to the proof of Part 1 of Theorem~\ref{thm:gbpa_bandits}. Part 2 needs a few more arguments to take care of the effects of clipping.

{\bf Part 1:} Note that without clipping, i.e.,\ when $\rho=0$, we have $q_t = \tilde{q}_t$ for all $t$.
As in the proof of Theorem~\ref{thm:analysis_experts}, these distributions (with proper choice of parameters) lead to a full information GBPA that is DiffStable($D_\infty, \| \cdot \|_\infty$) at level $2\epsilon$. In the FTPL case when $|F(p,Z)| = |\langle p, Z \rangle| \le \|Z\|_\infty$, we have
\[
\EE\left[ \max_{p} F(p, Z) - \min_{p} F(p, Z) \right] \le 2\EE  \| Z \|_\infty ,
\]
which scales as $\frac{\log N}{\epsilon}$ for these distributions. Lemma~\ref{lem:regret_bwe} gives the expected regret bound of
\begin{equation}\label{eq:ftplnearfinal}
2 \epsilon \, \EE\left[ \sumtt \hat{\ell}_{t,i_t}^2 p_{t,i_t} \right] + 2\EE  \| Z \|_\infty.
\end{equation}
Since $\ell_{t,j_t} \in [0,1]$, we have
\[
\EE\left[ \hat{\ell}_{t,i_t}^2 q_{t,j_t} \right] =
\EE\left[ \frac{\ell_{t,i_t}^2}{q_{t,j_t}^2} q_{t,j_t} \right] \le \EE\left[ \frac{1}{q_{t,j_t}} \right] = \epsilon \EE\left[ \sum_{i=1}^K q_{t,j} \frac{1}{q_{t,j}} \right] = \epsilon K .
\]
Plugging this into~\eqref{eq:ftplnearfinal} and tuning $\epsilon$ gives us Part 1.

{\bf Part 2:}
When there is clipping, i.e.\ $\rho$ is non-zero, the loss estimate behaves differently from estimate used in the unclipped version of the algorithm. First, we have the upper bound $\| \phi_t(\hat{\ell}_t) \|_\infty = \| \hat{\ell}_t \|_\infty \le 1/\rho$. Second, it is unbiased only over the support of $\tilde{q}_t$ since outside of the support, it is deterministically zero. Therefore $\hat{\ell}_t$, in expectation, now {\em underestimates $\ell_t$}. Crucially, however, we still have the equality $\EE[ \langle \tilde{q}_t, \hat{\ell}_t \rangle ] = \EE[ \ell_{t,j_t} ]$.

Lemma~\ref{lem:bweregret} does not directly bound regret in the clipped case. However, examining the proof, it gives us the bound, 
\begin{align}
\notag
 & \quad \EE\left[ \sum_{t=1}^T \langle p_t, \phi_t(\hat{\ell}_t) \rangle \right] - \min_{i=1}^N  \sum_{t=1}^T \EE\left[ \langle \psi_t(e_i), \hat{\ell}_t \rangle\right] \\
& \le \EE\left[ \sum_{t=1}^T \langle p_t - p_{t+1}, \overline{\phi}_t \rangle \right] + \EE\left[ \max_{p} F(p, Z) - \min_{p} F(p, Z) \right] .
\label{eq:bwe_main}
\end{align}
We will now relate the LHS to regret and bound the RHS.

First, note that
\begin{equation}\label{eq:bwe_lhs1}
\EE\left[ \sum_{t=1}^T \langle p_t, \phi_t(\hat{\ell}_t) \rangle \right] =
\EE\left[ \sum_{t=1}^T \langle q_t, \hat{\ell}_t \rangle \right] \ge
(1-K\rho) \EE\left[ \sum_{t=1}^T \langle \tilde{q}_t, \hat{\ell}_t \rangle \right] =
(1-K\rho) \EE\left[ \sum_{t=1}^T \ell_{t,j_t} \right]
\end{equation}
where the inequality follows because $\hat{\ell}_t$ has all non-negative entries and $\tilde{q}_{t,j}$ is either $0$ or we have
\[
\tilde{q}_{t,j} = \frac{q_{t,j}}{1-\sum_{j':q_{t,j'}<\rho} q_{t,j'}} \le \frac{q_{t,j}}{1-K\rho} .
\]

Second, note that, because $\lh_t$ underestimates $\ell_t$, we have
\begin{equation}\label{eq:bwe_lhs2}
\EE\left[ \sum_{t=1}^T \phi_{t,i}(\hat{\ell}_t) \right] = \sum_{t=1}^T \EE[ \langle e_i, \phi_{t}(\hat{\ell}_t) \rangle ]
= \sum_{t=1}^T \EE\left[ \langle \psi_t(e_i), \hat{\ell}_t \rangle\right]
\le \sum_{t=1}^T  \langle \psi_t(e_i), \ell_t \rangle
= \sum_{t=1}^T \ell_{t,E_{i,t}} .
\end{equation}

Third, note that, as in the proof of Theorem~\ref{thm:analysis_experts}, these distributions (with proper choice of parameters) lead to a DiffStable($D_\infty$,$\|\cdot\|_\infty$) full information GBPA at level $2\epsilon$. In the FTPL case, when $|F(p,Z)| = |\langle p, Z \rangle| \le \|Z\|_\infty$, we have
\begin{equation}
\label{eq:bwe_rhs2}
\EE\left[ \max_{p} F(p, Z) - \min_{p} F(p, Z) \right] \le 2\EE \| Z \|_\infty \le \frac{2\log N}{\epsilon} .
\end{equation}

Fourth, because of the one-step differential stability of the full information GBPA, we begin as in the proof of Lemma~\ref{lem:gammastability_bwe} but use a slightly different bound towards the end of the following calculation since we now have $\tilde{q}_{t,i} \ge \rho$.
\begin{align}
\notag
\EE\left[  \langle p_t - p_{t+1}, \phi_t(\hat{\ell}_t) \rangle \right] & = \EE\left[ \sum_{i:E_{i,t}=j_t} (p_{t,i} - p_{t+1,i}) \hat{\ell}_{t,j_t} \right] \le  \EE\left[ \sum_{i:E_{i,t}=j_t} (2\epsilon  \hat{\ell}_{t,j_t} p_{t,i}) \hat{\ell}_{t,j_t} \right] \\
&= \EE\left[ 2\epsilon  \hat{\ell}_{t,j_t}^2 \sum_{i:E_{i,t}=j_t} p_{t,i} \right] = \EE\left[ 2\epsilon \frac{\ell^2_{t,j_t}}{\tilde{q}^2_{t,j_t}} \tilde{q}_{t,j_t} \right] \le \frac{2\epsilon}{\rho} \EE\left[ \ell_{t,j_t} \right] .
\label{eq:bwe_rhs1}
\end{align}

Combining~\eqref{eq:bwe_main},~\eqref{eq:bwe_lhs1},~\eqref{eq:bwe_lhs2},~\eqref{eq:bwe_rhs2}, and~\eqref{eq:bwe_rhs1} provides
\[
\EE\left[ \sum_{t=1}^T \ell_{t,j_t} \right] - L_T^* \le
\left(\frac{2\epsilon}{\rho} + K \rho \right) \EE\left[ \sum_{t=1}^T \ell_{t,j_t} \right] +  \frac{2\log N}{\epsilon},
\]
where $L_T^* = \min_{i=1}^N \sum_{t=1}^T \ell_{t,E_{i,t}}$. Denoting the expected regret by $R_T$, we therefore have the bound
\[
R_T \le \left(\frac{2\epsilon}{\rho} + K \rho \right) (R_T + L^*_T) + \frac{2 \log N}{\epsilon} .
\]
We first set $\rho = \sqrt{2\epsilon/K}$ which is a valid choice as long as $2\epsilon K < 1$. With this choice, we have the bound
\[
R_T \le 2\sqrt{2\epsilon K} (R_T + L^*_T) + \frac{2 \log N}{\epsilon}.
\]
If $L^*_T \le 128 K \log N$, set $\epsilon = 1/32 K$ to get a bound of $R_T \le L^*_T + 128 K \log N \le 256 K \log N$.
If $L^*_T > 128 K \log N$, set $\epsilon = (\log N)^{2/3}/(K^{1/3} (L_T^*)^{2/3})$ to get a bound of $O(K^{1/3} (\log N)^{2/3} (L^*_T)^{2/3})$. Note that in this case $\epsilon K < 1/32$.

\end{proof}

\end{document}